\documentclass{article}
\usepackage{manuscript,times}
\usepackage[T1]{fontenc}
\usepackage[utf8]{inputenc}
\usepackage{amsmath,amssymb,amsthm,mathtools,bm}
\usepackage{mathabx}
\usepackage{pifont}
\newcommand{\defeq}{\vcentcolon=}
\usepackage{booktabs,tabularx,multirow,graphicx}
\usepackage{float}
\usepackage{enumitem}
\floatstyle{ruled}
\newfloat{algorithm}{htbp}{loa}
\floatname{algorithm}{Algorithm}
\usepackage{tikz,pgfplots}
\pgfplotsset{compat=1.18}
\usepgfplotslibrary{groupplots}
\usepackage{hyperref,url,etoolbox}
\usepackage{refcount}
\usepackage{cleveref}
\hypersetup{pdftitle={Retimed Bellman Flows: Escaping the Impossible Triangle of Velocity Bootstrapping},pdfauthor={Boyang Xu, Shengzhe Chen, Hao Yan},colorlinks=true,linkcolor=blue!45!black,citecolor=blue!45!black,urlcolor=blue!45!black}
\newcommand{\E}{\mathbb E}
\newcommand{\R}{\mathbb R}

\newif\ifshownotes
\definecolor{notecolor}{RGB}{31,92,171}
\definecolor{todocolor}{RGB}{200,30,30}

\theoremstyle{plain}
\newtheorem{theorem}{Theorem}[section]
\newtheorem{proposition}[theorem]{Proposition}
\newtheorem{lemma}[theorem]{Lemma}
\newtheorem{corollary}[theorem]{Corollary}
\theoremstyle{definition}
\newtheorem{definition}[theorem]{Definition}

\theoremstyle{remark}

\crefname{assumption}{Assumption}{Assumptions}

\newcommand{\cmark}{\ding{51}}
\newcommand{\xmark}{\ding{55}}
\newcommand{\gt}{\tilde{\gamma}}                 
\newcommand{\vtg}{\bar{v}}                         
\newcommand{\vth}{v_{\theta}}                     
\newcommand{\psib}{\bar{\psi}}                    
\newcommand{\cN}{\mathcal{N}}

\newcommand{\Tpi}{\mathcal{T}^{\pi}}
\newcommand{\Law}{\operatorname{Law}}
\newcommand{\sg}{\operatorname{sg}}

\newcommand{\Xnoise}{\tilde{X}_0}                 

\newcommand{\indep}{\perp\!\!\!\perp}

\newcommand{\eqd}{\stackrel{\mathrm{D}}{=}}

\newcommand{\rebf}{ReBF}

\newcommand{\valueflows}{Value Flows}

\newcommand{\dS}{\textbf{(S)}}
\newcommand{\dA}{\textbf{(A)}}
\newcommand{\dD}{\textbf{(D)}}

\newcommand{\Var}{\operatorname{Var}}
\newcommand{\Cov}{\operatorname{Cov}}
\newcommand{\Id}{I}
\newcommand{\norm}[1]{\left\lVert #1\right\rVert}
\newcommand{\inner}[2]{\left\langle #1,#2\right\rangle}

\newcommand{\T}{\mathcal T}
\newcommand{\B}{\mathfrak B}
\newcommand{\F}{\mathcal F}
\newcommand{\HH}{\mathcal H}

\newcounter{fullstatement}
\newcommand{\fullstatementref}{}

\newtheorem{fullthm}[fullstatement]{Theorem}
\newtheorem{fullprop}[fullstatement]{Proposition}
\newenvironment{theorem*}[2][]{\renewcommand{\fullstatementref}{#2}\begin{fullthm}[#1]}{\end{fullthm}}
\newenvironment{proposition*}[2][]{\renewcommand{\fullstatementref}{#2}\begin{fullprop}[#1]}{\end{fullprop}}
\theoremstyle{definition}

\definecolor{teal}{HTML}{287C8E}
\definecolor{rust}{HTML}{BE593A}
\definecolor{purple}{HTML}{5A57A4}
\title{Retimed Bellman Flows: Escaping the Impossible Triangle of Velocity Bootstrapping}
\iclrfinalcopy
\author{Boyang Xu \\
Arizona State University \\
\And
Shengzhe Chen \\
Arizona State University \\
\And
Hao Yan \\
Arizona State University}
\begin{document}
\maketitle

\begin{abstract}
Flow critics learn return distributions by transporting Gaussian noise to Bellman endpoints via continuous velocity fields. While velocity bootstrapping stabilizes training by querying a successor teacher, existing methods face a structural dilemma: on straight paths, no residual-free same-time affine mapping can preserve Gaussian initial noise while maintaining an unbiased target. To overcome this limitation, we introduce Retimed Bellman Flows (ReBF). ReBF queries the teacher critic at a dynamically shifted earlier flow time, aligning intermediate student and teacher trajectories. By combining this retimed clock with fresh, decoupled noise generation, ReBF constructs a provably conditionally unbiased velocity target that preserves the Bellman fixed point and contracts under Wasserstein distances. Empirically, ReBF reduces $W_1$ distance to ground-truth return distributions by up to $7.7\times$ on synthetic MRPs and outperforms existing flow critics across 38 challenging OGBench and D4RL offline reinforcement learning tasks.
\end{abstract}

\section{Introduction}
Flow critics represent the law of a return, not only its mean, by a velocity field that transports Gaussian noise to returns, trained by flow matching~\citep{lipman2023,dong2026,xu2026}. The return can be a scalar, as in distributional reinforcement learning~\citep{bellemare2017}, or vector-valued, such as successor features (Appendix~\ref{app:env-fourrooms}). Unlike categorical critics such as C51, which fix a support~\citep{bellemare2017}, and quantile critics such as IQN, whose quantile function has no canonical multivariate analogue~\citep{dabney2018iqn}, a flow critic transports noise to returns of any dimension, and flow critics have shown strong performance against both on offline reinforcement learning benchmarks~\citep{dong2026,xu2026}.

Training a flow-based critic via temporal-difference (TD) learning, however, introduces a fundamental mismatch between flow generation and Bellman updates. While flow matching requires learning a continuous trajectory over intermediate flow times $t \in [0,1]$, the distributional Bellman equation strictly constrains only the marginal distribution at the endpoint $t=1$, leaving intermediate flow dynamics unconstrained, inducing severe source inconsistency and path ambiguity across flow steps. Furthermore, directly regressing velocity vectors onto stochastically sampled successor returns suffers from high sample variance and unstable convergence. To address these issues, motivated by bootstrapping in standard value learning \citep{sutton1988td}, recent works \citep{dong2026,xu2026} employ \emph{velocity bootstrapping}: regressing the student's velocity field onto a target constructed from the teacher's velocity at the successor state--action pair. Under this framework, the \emph{student} learns a generative path transporting Gaussian noise to the Bellman target return, while the \emph{teacher} defines a path transporting noise to the next-state return $X'$. Consequently, the bootstrapping target crucially depends on how points along the student's intermediate trajectory are aligned with those of the teacher.

\begin{table}[t]
\centering
\caption{\textbf{Comparison of flow-critic target designs} across three design properties: source consistency (\textbf{S}), affinity (\textbf{A}), and decoupled generation (\textbf{D}). On the straight path, no residual-free same-time affine lift with a time-only scale holds for every endpoint pair (Corollary~\ref{cor:impossible}); non-affine, state-dependent, and other-path queries lie outside that statement. \rebf{} retimes the query and generates the successor with fresh independence. With a coupling-matched teacher and exact population regression, the target is unbiased on canonical fields.}
\label{tab:desiderata}\label{tab:paths}\label{tab:triangle}
\vspace{2pt}
\footnotesize
\setlength{\tabcolsep}{4pt}
\begin{tabular}{@{}lccc@{}}
\toprule
\textbf{Method} & \textbf{Source consistency (S)} & \textbf{Affinity (A)} & \textbf{Decoupled generation (D)} \\
\midrule
Value Flows \citep{dong2026} & \xmark{} $R{+}\gamma\tilde X_0{\neq}X_0$ & \cmark{} $Z_t{=}R{+}\gamma X'_t$ & \xmark{} \\
PCBF \citep{xu2026} & \cmark{} at $X_0$ & \xmark{} contains $X_0$ & \xmark{} \\
\rebf{} (ours) & \cmark{} at $X_0$ & \cmark{} warped affine & \cmark{} \\
\bottomrule
\end{tabular}
\end{table}

Formulating this alignment thus calls for a precise matching mechanism between the intermediate flow steps. That match should respect the distributional Bellman equation throughout the flow, which asks for three design properties (Section~\ref{sec:desiderata}). (i) \textbf{Source consistency (S):} the path starts from standard Gaussian noise, the same source used at generation. (ii) \textbf{Affinity (A):} the current point is an affine image of the teacher query, so the two carry the same information given the transition. (iii) \textbf{Decoupled generation (D):} the successor return is generated with fresh noise, $\rho_1=0$. Affinity and decoupled generation are the road to unbiasedness: when the teacher is the exact velocity for that pairing, the target is conditionally unbiased. A learned teacher can still leave approximation error. As Table~\ref{tab:desiderata} records, neither prior method has both \dA{} and \dD{}. The obstruction is narrower than a three-way impossibility: on the straight path, source consistency and a same-time query cannot be combined in a residual-free affine lift whose scale depends only on time and discount (Corollary~\ref{cor:impossible}). This leads to the following question:
\begin{center}
\textbf{Can flow-based velocity bootstrapping be source-consistent, affine, and decoupled at the same time, and therefore unbiased?}
\end{center}

\emph{Retimed Bellman Flows} (\rebf{}) queries the teacher at an earlier flow time, chosen so that the contributions of source noise and successor return align after accounting for the reward and discount. This preserves the student's Gaussian starting point and Bellman endpoint while giving each intermediate student point a direct affine correspondence with a teacher query. Geometric alignment alone does not ensure an unbiased velocity target. \rebf{} also generates the successor return with noise independent of the student's source noise, matching the pairing under which the reused critic learns its conditional velocity. When that teacher is the coupling-matched velocity and the population regression is exact, the bootstrapped target has the correct conditional mean at each student point. On canonical fields, under fresh independence, it preserves the Bellman fixed point. A learned teacher can still leave approximation error. The time shift is determined by the discount and flow time, requiring no additional tuning parameter or teacher evaluation. We formalize these claims in Sections~\ref{sec:method} and~\ref{sec:theory}.

Our main contribution is Retimed Bellman Flows (\rebf{}), a framework for flow-based distributional value estimation: \textbf{(i)}~On the straight source path, we prove that no residual-free same-time affine lift with a time-only scale holds for every endpoint pair; non-affine, state-dependent, and other-path methods lie outside that statement; \textbf{(ii)}~\rebf{} queries the teacher at a dynamically shifted earlier flow time $\tau(t)$ and generates the successor under fresh independence ($\rho_1=0$ together with (A6)), a closed-form target with no added hyperparameter; \textbf{(iii)}~we prove that this retimed shift is the unique such affine lift conjugate to the Bellman map, and that a coupling-matched teacher, exact population regression, and those independence assumptions make the target unbiased at every bootstrap weight on canonical fields, with a defect off those fields; and \textbf{(iv)}~across toy MRPs, OGBench, and D4RL, \rebf{} improves $W_1$ by up to $7.7\times$ over PCBF.

\begin{figure}[t]
\centering
\includegraphics[width=0.9\linewidth]{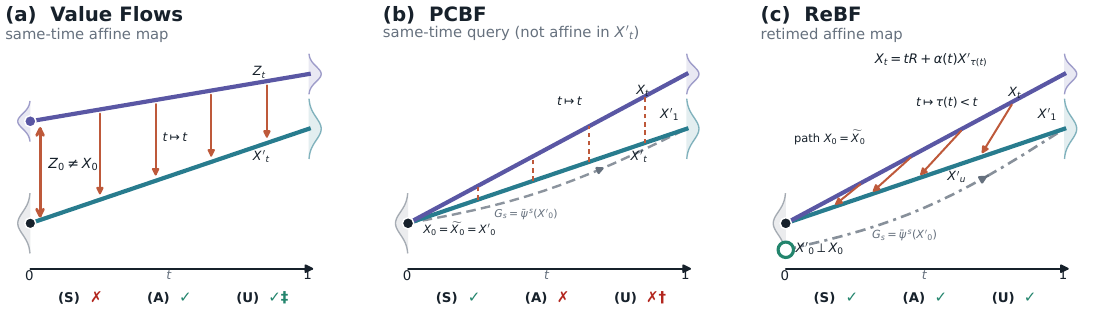}
\caption{\textbf{Three ways to bootstrap a velocity.} (a) Value Flows: same-time affine, but $Z_0\neq X_0$. (b) PCBF: source-consistent and same-time, with a residual in $X_0$. (c) \rebf{}: retimed query, independent noise.}
\label{fig:framework}
\end{figure}

\section{Related Work}

Existing flow-based critics construct distributional value targets through three primary mechanisms. First, endpoint-based and sampled methods avoid pathwise velocity bootstrapping altogether: FLOQ \citep{agrawalla2026} regresses to point targets without modeling return distributions, Bellman Diffusion and distributional flow critics \citep{chen2025,li2024} back up endpoint laws without pathwise alignment, and Temporal Difference Flows \citep{farebrother2025} recovers values only in expectation. Second, path-coupling frameworks leverage intermediate flow fields by incorporating successor velocities \citep{dong2026,xu2026,espinosa2025}; however, they inevitably trade off path consistency against target bias. For instance, Value Flows maintains same-time affinity but violates source consistency, whereas PCBF preserves Gaussian initial noise but suffers from target bias under same-time, shared-noise queries with a self-teacher critic (Theorem~\ref{thm:centering}), as straight-path couplings do not naturally match the teacher's conditional velocity \citep{lipman2023,liu2023,tong2024,lipman2024}. Third, alternative couplings like FlowIQN \citep{groom2026} match quantiles within a single return rather than noise across Bellman steps, preserving Wasserstein contraction only when the update exactly realizes the underlying Bellman operator (Section~\ref{sec:centering-sub}).


\section{Research Gap and Motivation}
\label{sec:gap}
In this section, we describe the limitations of existing flow critics, state the three design properties their targets should satisfy, and show why, on the straight path, no residual-free same-time affine lift with a time-only scale holds for every endpoint pair. On canonical fields, unbiasedness is what affinity and fresh independence produce when the teacher is coupling-matched and the population regression is exact.

\subsection{Preliminaries}
\label{sec:prelim}
A flow critic models the return law $\eta^\pi(s,a)$ of a fixed policy by a velocity $\vth(t,x\mid s,a)$, $t\in[0,1]$. Transition tuple is $H=(R,S',A',\gt)$, with $S'\sim P(\cdot\mid s,a)$, $A'\sim\pi(\cdot\mid S')$, and $0\le\gt\le\gamma$ ($\gt=0$ at termination), and the return satisfies
\begin{equation}
X_1\eqd R+\gt X_1',\qquad X_1'\mid H\sim\eta_{S',A'},
\label{eq:bellman}
\end{equation}
The right-hand side is the Bellman operator $\Tpi$, with unique fixed point $\eta^\pi$~\citep{bellemare2017}. Generation integrates $\vth$ from $X_0\sim\cN(0,I)$; the endpoint law is the critic's model of $\eta^\pi(s,a)$. Flow matching~\citep{lipman2023,liu2023} trains $\vth$ on the straight path
\begin{equation}
X_t=(1-t)X_0+tX_1,\qquad X_0\indep X_1\sim\eta(s,a),\quad t\sim\mathcal U[0,1],
\label{eq:straight_path}
\end{equation}
by regressing $\vth(t,X_t\mid s,a)$ onto $X_1-X_0$. The minimiser is the marginal velocity $v^\star(t,x)=\E[X_1-X_0\mid X_t=x]$, whose flow from $\cN(0,I)$ reproduces $\eta(s,a)$. At $t=0$, $X_t=X_0$ is independent of $X_1$, so $v^\star(0,x)=\E[X_1]-x$: the mean return is one evaluation of the field at $t=0$. Substituting the Bellman return into that path gives the student trajectory and its velocity,
\begin{equation}
X_t = (1-t)X_0 + t\,(R + \gt X_1'), \qquad
Y \defeq \dot X_t = R - (1-\gt)X_0 + \gt\,(X_1'-X_0).
\label{eq:current}
\end{equation}
A backup carries three noises: the student source $X_0\sim\cN(0,I)$, the successor-path start $\Xnoise$, and the generation noise $X_0'$ with $X_1'=\psib^{\,1}(X_0'\mid S',A')$. Following \citet{xu2026} we set $\Xnoise=X_0$ and $X_0'=\rho_1 X_0+\sqrt{1-\rho_1^2}E$ for independent $E\sim\cN(0,I)$ and $\rho_1\in[0,1]$. Velocity bootstrapping replaces the sampled chord $X_1'-X_0$ by a frozen successor critic $\vtg$, queried at time $u(t)$ on $X'_u=(1-u)X_0+uX_1'$. The resulting target is $u_t$.

\subsection{Three properties for velocity bootstrapping}
\label{sec:desiderata}
Source consistency is the requirement on the path. Affinity and decoupled generation are the two construction choices. On canonical fields, conditional unbiasedness is what those two produce when the teacher is coupling-matched and the population regression is exact.

\textbf{\dS{} Source consistency.} The source path and the successor path are the straight interpolants of flow matching, Eq.~\eqref{eq:straight_path}. Flow matching requires those paths to originate from a standard Gaussian at $t=0$, whereas the Bellman operator constrains the return distribution at $t=1$. Absorbing immediate rewards into the initial noise forces the path to start from a shifted distribution $R+\gt\Xnoise$ rather than $X_0\sim\cN(0,I)$, which causes boundary mismatches to propagate along the generated trajectory \citep{xu2026}.

\textbf{\dA{} Affinity.} Every student point $X_t$ is an affine image of a successor point $X'_{u(t)}$, given by $X_t=a(t)R+\varphi(t)X'_{u(t)}$, which reduces to the Bellman map $x\mapsto R+\gt x$ at $t=1$. The teacher query is then available in closed form and visible for every noise coupling. Visibility aligns the student's input with the teacher's query. It does not match the teacher to the coupling, so affinity alone does not make the target unbiased.

\textbf{\dD{} Decoupled generation.} The successor return is generated with fresh noise, $\rho_1=0$, the noise--return pairing of independent flow matching. Only the generation noise $X_0'$ is independent of $X_0$; the student and successor interpolants may still share $X_0$.

\paragraph{When the noise coupling brings no additional bias.}
\label{sec:template}
Replacing the sampled chord by a teacher always leaves an error: the teacher is not the chord. The design question is whether the noise coupling adds a further bias, and whether that addition can be kept at zero. The exact condition is that the relevant conditional residual have zero expectation. Under an exact teacher, two structural conditions are sufficient for that. They are not necessary: cancellation, restricted support, or $\kappa=0$ can make the residual expectation vanish without both.
\begin{enumerate}
    \item \textbf{Query visibility,} from affinity: $X'_{u(t)}$ and $X_t$ determine each other given $H$. If $X_t$ leaves the query uncertain, the student can average over mismatched teacher evaluations.
    \item \textbf{Decoupled generation:} independence holds only at $\rho_1=0$; a value $0<\rho_1<1$ is partial correlation. The self-teacher claim uses fresh independence in the sense of (A5) and (A6), which the label $\rho_1=0$ alone does not give, together with a teacher that is the exact conditional velocity for that pairing. On canonical fields the critic is then its own matched teacher (Lemma~\ref{lem:selfconsistency}).
\end{enumerate}
Retiming $u(t)\to\tau(t)$ supplies the first condition for every coupling. It does not match the teacher; that is decoupled generation. Affinity and decoupled generation together add no coupling bias when the teacher is exact (Theorem~\ref{thm:centering}). A learned teacher still leaves approximation error, which is separate from the bias the coupling itself adds.

\subsection{Two prior paths, and the property each gives up}
\label{sec:existing}\label{sec:vf}\label{sec:pcbf}
\textbf{Value Flows} \citep{dong2026} satisfies \dA{} and gives up \dS{} and \dD{} on its DCFM path. The published loss is a one-to-one mixture of BCFM and DCFM. BCFM matches a path that starts at raw $\cN(0,I)$ noise, so that term is source-consistent on its own. DCFM applies the affine map $Z_t=R+\gt X'_t$ at the same time, with velocity $\dot Z_t=\gt(X_1'-\Xnoise)$ and corrected target $u^{\mathrm{VF}}_t=\gt\,\vtg\big(t,(Z_t-R)/\gt\mid S',A'\big)$. The map is invertible, so the query is visible, and with a matched teacher the corrected target is unbiased. The published DCFM loss omits $\gt$ and regresses on $\vtg$ itself \citep[Eq.~5]{dong2026}; with a perfect teacher that target has conditional mean $\dot Z_t/\gt$. The DCFM path ends at the Bellman target but starts at $Z_0=R+\gt\Xnoise$. Generation always starts from a fresh $\cN(0,I)$ draw, so the DCFM term near $t=0$ is queried on inputs that term rarely saw. The field does see Gaussian inputs near $t=0$, through BCFM; what the mixture fits is two paths with different boundary conditions at $t=0$. Anchoring and confidence weights are added to compensate \citep{dong2026}.

\textbf{PCBF} \citep{xu2026} keeps the source-consistent path of Eq.~\eqref{eq:current} and queries the teacher at the same time $t$, at $X'_t$, with target $R-(1-\gt)X_0+\gt\,\vtg(t,X'_t)$ (full correction, $\kappa=1$ in our parametrisation). The student point and the same-time chord are not affinely related:
\begin{equation}
X_t=tR+\gt X'_t+(1-t)(1-\gt)X_0,\qquad
X'_t=\frac{X_t-tR}{\gt}-\frac{(1-t)(1-\gt)}{\gt}\,X_0 .
\label{eq:pcbf_residual}
\end{equation}
The query is visible only if $X_0$ is. If the generation noise is not fully shared ($\rho_1<1$), $X_0$ is hidden and even an exact teacher leaves a conditioning residual (Eq.~\eqref{eq:sametime-bias}, correlated-Gaussian model). Decoupled generation is the endpoint $\rho_1=0$ of that range, not every $\rho_1<1$. PCBF shares the noise ($\rho_1=1$): for scalar returns $X_0$ is recoverable from $(X_t,H)$, the query is visible, and a teacher matched to that coupling is exact (Proposition~\ref{prop:leak}(i)); the reused critic is generally not that teacher (Proposition~\ref{prop:leak}(ii)), so the target stays biased. PCBF controls the resulting bias and variance with a gain $\lambda$ tuned per domain \citep{xu2026}. So PCBF keeps \dS{} and gives up \dA{} and \dD{}. In this construction the reused critic is not the coupling-matched teacher, so the target stays biased even before further approximation error. Missing either structural condition does not by itself force that bias.

\subsection{The gap: no source-consistent same-time affine link}
\label{sec:triangle}
Figure~\ref{fig:framework} and Table~\ref{tab:triangle} summarise the three methods: \valueflows{} gives up \dS{} and \dD{} on its DCFM path, and PCBF gives up \dA{} and \dD{}. The impossibility itself does not involve the coupling. After removing $tR$, the student point weights $(X_0,X_1')$ as $(1-t,\ \gt t)$, while the same-time chord point weights them as $(1-t,\ t)$, so, on that straight path, no residual-free same-time affine lift with a time-only scale holds for every endpoint pair when $0<\gt<1$ (Corollary~\ref{cor:impossible}). Affinity is recovered by reading the teacher earlier. Together with fresh independence (A6), a coupling-matched teacher, and exact population regression, that alignment is sufficient for unbiasedness on canonical fields.

\begin{corollary}[Impossible Triangle: No same-time affine design under straight paths flow]
\label{cor:impossible}
Write $X'_t=(1-t)X_0+tX_1'$ for the straight successor path read at the same time $t$. The first candidate is the same-time affine transformation $X_t=tR+\alpha(t)X'_t$. For $0<\tilde\gamma<1$ and $0<t<1$, no scale $\alpha(t)$ depending only on $(t,\tilde\gamma)$ makes this identity hold for every endpoint pair $(X_0,X_1')$, while $X_t$ stays the straight source path of Eq.~\eqref{eq:current}.
\end{corollary}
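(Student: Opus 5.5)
The plan is to argue by coefficient matching. The student point of Eq.~\eqref{eq:current} and the candidate lift are both linear in the endpoint pair $(X_0,X_1')$ once $R$ and $t$ are fixed, so the identity can hold for every pair only if the coefficients agree. First I expand both sides. The straight source path gives
\[
X_t=(1-t)X_0+tR+t\gt X_1',
\]
and the candidate lift gives
\[
tR+\alpha(t)X'_t=tR+\alpha(t)(1-t)X_0+\alpha(t)\,t\,X_1'.
\]
The $tR$ terms cancel exactly. So the identity holding for every $(X_0,X_1')$ is equivalent to
\[
(1-t)X_0+t\gt X_1'=\alpha(t)(1-t)X_0+\alpha(t)tX_1'\qquad\text{for all }(X_0,X_1').
\]

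Next I read off the coefficients. Taking $X_1'=0$ with $X_0\neq0$, which is possible because the identity is required for every endpoint pair, forces $(1-t)=\alpha(t)(1-t)$. Since $0<t<1$, this gives $\alpha(t)=1$. Taking $X_0=0$ with $X_1'\neq0$ forces $t\gt=\alpha(t)t$, so $\alpha(t)=\gt$. Together these require $\gt=1$, which contradicts $0<\gt<1$. Hence no scale $\alpha(t)$ works. The argument never uses that $\alpha$ depends only on $(t,\gt)$ beyond the fact that it is a scalar not depending on the endpoint pair, so the statement follows a fortiori. In the vector case I take $X_0$ and $X_1'$ along a common nonzero direction, so scalar coefficients still separate.

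The only delicate step is making precise what ``for every endpoint pair'' means. If $\alpha$ were allowed to depend on $(X_0,X_1')$ or on $R$, a pointwise scale could sometimes match, for example when $X_0$ and $X_1'$ are collinear in a special ratio. I would therefore stress that $\alpha$ is fixed before the pair is chosen, which is exactly the hypothesis. I would also record the residual $(1-t)(1-\gt)X_0$ obtained by setting $\alpha=\gt$, which reproduces Eq.~\eqref{eq:pcbf_residual}: the best same-time choice always leaves a nonzero $X_0$ term when $0<t<1$ and $\gt<1$. This also motivates the retimed fix. Querying at a different time $u$ with $\varphi(t)(1-u)=1-t$ and $\varphi(t)u=t\gt$ is solvable, giving $u=t\gt/(1-t+t\gt)$, so the obstruction is specific to the same-time query.
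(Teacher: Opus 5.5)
Your proposal is correct and follows the paper's proof. Both expand the two sides, match the coefficients of $X_0$ and $X_1'$ separately (legitimate because the identity must hold for every endpoint pair), and obtain the incompatible requirements $\alpha(t)=1$ and $\alpha(t)=\tilde\gamma$ for $0<t<1$; your side remarks on the residual $(1-t)(1-\tilde\gamma)X_0$ and the retimed solution $u=\tilde\gamma t/(1-t+\tilde\gamma t)$ also agree with Eq.~\eqref{eq:pcbf_residual} and Eq.~\eqref{eq:clock}.
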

\noindent Its proof is in Appendix~\ref{app:proofs}.

\paragraph{How we close the gap.}
Reading the chord at the earlier time $\tau(t)<t$ whose weights are proportional to $(1-t,\ \gt t)$ removes the residual, so the query is visible under every coupling; fresh independence in the sense of (A5) and (A6), which the label $\rho_1=0$ alone does not give, then makes the critic, on canonical fields, its own matched teacher (Theorem~\ref{thm:centering}(ii)). This is \rebf{}, developed next.

\section{Retimed Bellman Flows}
\label{sec:method}

ReBF keeps the source-consistent path of Eq. (3), so it satisfies (S) by construction, and removes
the same-time residual by reading the successor chord at a different flow time. Section 4.1 obtains
affinity by retiming the teacher query, Section 4.2 writes the resulting target, and Section 5.3 adds
decoupled generation and shows that the two choices together make the target unbiased.

\noindent That is what forces a warped clock, which the next paragraphs derive. The qualifier ``for every endpoint pair'' matters. Under shared noise ($\rho_1=1$) the successor endpoint is a function of $X_0$, and for scalar returns with a generator that is the time-one map of a unique one-dimensional flow, both $X_t$ and $X'_t$ are strictly increasing in $X_0$ given $H$ for $t<1$, so $X'_t$ is a function of $(X_t,H)$: precisely the corner where the two clocks' oracle targets coincide, and the corner PCBF occupies (Corollary~\ref{cor:fixedpoint}).

\textbf{Affine warped transformation.}
The same source path admits the warped-clock rewrite
\begin{equation}
X_t=tR+\alpha(t)X'_{\tau(t)},\qquad X'_{\tau(t)}=(1-\tau(t))X_0+\tau(t)X_1',
\label{eq:affine}
\end{equation}
where the quantity being read is a straight path in the successor endpoint.
The scale $\alpha(t)$ and the successor clock $\tau(t)$ are fixed by matching Eq.~\eqref{eq:affine} to the expansion of Eq.~\eqref{eq:current}.

\subsection{Why the successor clock is slower}
Matching $X_0$ and $X_1'$ coefficients after substituting Eq.~\eqref{eq:affine} into Eq.~\eqref{eq:affine} and expanding Eq.~\eqref{eq:current} with $X_1=R+\tilde\gamma X_1'$ yields $\alpha(t)(1-\tau(t))=1-t$ and $\alpha(t)\tau(t)=\tilde\gamma t$. Adding and dividing gives the clock, and the same relation then fixes the successor query from the current point and the transition:
\begin{equation}
\alpha(t)=1-(1-\tilde\gamma)t,\qquad \tau(t)=\tfrac{\tilde\gamma t}{\alpha(t)},\qquad \xi_t=X'_{\tau(t)}=\tfrac{X_t-tR}{\alpha(t)}.\label{eq:clock}
\end{equation}
For $0<\tilde\gamma<1$ and $0<t<1$, $\tau(t)<t$ because the source path weights $X_1'$ by $\tilde\gamma t$ while a same-time reading weights it by $t$. Once the transition is known, $\xi_t$ is determined by the current regression point.

Theorem~\ref{thm:conjugacy} strengthens Corollary~\ref{cor:impossible} from an impossibility into a characterisation: among residual-free affine lifts with a time-only scale, $(\alpha,\tau)$ is the \emph{only} one under which the velocity update is a Bellman map. Non-affine, state-dependent, and other-path clocks lie outside that statement.

\subsection{Unbiased control variate and time-varying gain}
\textbf{Velocity target.}
Differentiating the affine relation Eq.~\eqref{eq:affine} by the chain rule with Eq.~\eqref{eq:clock} gives the sample velocity
\begin{equation}
Y=\dot X_t=R-(1-\tilde\gamma)\xi_t+g(t)(X_1'-X_0),
\label{eq:identity}
\end{equation}
with gain $g(t)=\alpha(t)\tau'(t)=\tilde\gamma/\alpha(t)$. Replacing the sampled chord $X_1'-X_0$ by a frozen successor teacher $\bar v$, with $a_t=\kappa g(t)$, defines the control variate and the general target
\begin{align}
C_{\tau(t)}&:=\bar v(\tau(t),X'_{\tau(t)}\mid H)-(X_1'-X_0),\label{eq:cv}\\
u_a&=Y+a_t\,C_{\tau(t)}.\label{eq:general-target}
\end{align}
At weight $\kappa$ the regression target is
\begin{equation}
u_\kappa=Y+\kappa g(t)\,C_{\tau(t)}=R-(1-\tilde\gamma)\xi_t+g(t)\{(1-\kappa)(X_1'-X_0)+\kappa\bar v(\tau(t),\xi_t\mid H)\}.
\label{eq:target}
\end{equation}

With a matched teacher, $\E[C_{\tau(t)}\mid X_t,H]=0$, so every $\kappa$ keeps the same conditional mean (Theorem~\ref{thm:centering}). The weight $\kappa\in[0,1]$ trades residual bridge noise against teacher error; $\kappa=1$ is the partial Rao--Blackwellization of Proposition~\ref{prop:risk}. Replacing a sampled quantity by its conditional expectation is Rao--Blackwellization and the gain is a control-variate weight; neither is new, and we do not claim either as a contribution. The gain $g(t)=\tilde\gamma/\alpha(t)$ rises from $\tilde\gamma$ at $t=0$ to $1$ at $t=1$, so the weight $\kappa g(t)$ varies with $t$, while PCBF uses a fixed gain $\lambda=\tilde\gamma$. For terminal backups we use $Y=R-X_0$ and set the correction to zero, avoiding the singular $\tilde\gamma=0,t=1$ inverse.

\subsection{Decoupled Generation}
\label{sec:coupling}\label{sec:pcbf-relation}
With $\tilde X_0=X_0$ already fixed, the generation noise is
\begin{equation}
X_0' = \rho_1 X_0 + \sqrt{1-\rho_1^2}\,E, \label{eq:coupling}
\end{equation}
where $E\sim\mathcal N(0,\Id_d)$ is independent of $(X_0,H,t)$, $0\le\rho_1\le 1$, and $X_1'=\psib^{\,1}(X_0'\mid S',A')$. By default ReBF sets $\rho_1=0$: the successor is drawn independently, and the backup is still the full correction. PCBF sets $\rho_1=1$, tying $X_0'$ to the path noises. That tie can cancel a same-time bias, but it is stronger than an unbiased bootstrap needs. Section~\ref{sec:toys} records its cost. 
The matched teacher on the retimed path is
\begin{equation}v_{\rho_1}^\star(\tau(t),b\mid H)=\E[X_1'-X_0\mid X'_{\tau(t)}=b,H].\label{eq:oracle}\end{equation}
At $\rho_1=1$, when the chord is recoverable from the successor query, the two clocks share this oracle. ReBF is the retimed clock with $\rho_1=0$ and $\kappa=1$ (Algorithm~\ref{alg:sw}); PCBF is the same-time clock with $\rho_1=1$ and $\kappa=1$. The uncorrected critic is $\kappa=0$, whose clock is inert.

\section{Theory}
\label{sec:theory}\label{sec:operator}
\begin{definition}[Bellman velocity operator]
\label{def:operator}
Fix $\pi$ and the source $X_0\sim\mathcal N(0,\Id_d)$. For a velocity field $v$, let $\mathcal E(v)_c$ be the law at $t=1$ of the flow $\dot Z_t=v_c(Z_t,t)$, $Z_0=X_0$: the returns the critic \emph{generates}. For a return-law family $\eta$, let $(\T^\pi\eta)_c=\Law(R+\tilde\gamma X_1'\mid c)$ with $X_1'\sim\eta_{S',A'}$, as in Eq.~\eqref{eq:bellman}, and let $(\mathcal L\eta)_c(z,t)=\E[X_1-X_0\mid(1-t)X_0+tX_1=z]$ with $X_1\sim\eta_c$ independent of $X_0$: the straight flow-matching field of $\eta_c$. The \emph{Bellman velocity operator} is
\begin{equation}
\mathcal C^\pi=\mathcal L\,\T^\pi\,\mathcal E:\qquad
v\ \xrightarrow{\ \mathcal E\ }\ \text{returns}\ \xrightarrow{\ \T^\pi\ }\ \text{backed-up returns}\ \xrightarrow{\ \mathcal L\ }\ \text{new field}.
\label{eq:pop-operator}
\end{equation}
A field is \emph{canonical} if it equals $\mathcal L\eta$ for some $\eta$. Its \emph{readout} $q_v(z,t)=z+(1-t)v(z,t)$ is the endpoint a particle at $z$ reaches if it keeps its velocity; for a canonical field and $t<1$ this is $\E[X_1\mid X_t=z]$.
\end{definition}
\noindent Generate returns with the critic, apply the Bellman backup, and fit the straight flow of the result. Assumptions (A1)--(A9) and the full statements are in Appendix~\ref{app:full}.

\paragraph{On one backup, retiming is the Bellman map.}
\label{sec:conjugacy}
Freeze one draw $h=(r,s',a',\tilde\gamma)$ of the backup, a \emph{branch}; on it the Bellman equation is $X_1=r+\tilde\gamma X_1'$.
\begin{theorem}[Bellman conjugacy characterises the retiming]
\label{thm:conjugacy}
Fix a nonterminal branch with $0<\tilde\gamma<1$, and let $v'$ be the successor's flow-matching field under the branch's coupling. Among affine lifts $X_t=tr+\varphi(t)X'_{\vartheta(t)}$ with $\varphi(0)=1$ and $\vartheta(0)=0$, whose field we write $\B^{\varphi,\vartheta}v$, the readout identity
\begin{equation}
q_{\B^{\varphi,\vartheta}v}(z,t)=r+\tilde\gamma\,q_v\big(\xi,\vartheta(t)\big),\qquad \xi=\frac{z-tr}{\varphi(t)},
\label{eq:conj-short}
\end{equation}
holds for every field $v$ if and only if $(\varphi,\vartheta)=(\alpha,\tau)$ of Eq.~\eqref{eq:clock}; a same-time query forces $\tilde\gamma=1$. The backed-up field is then
\begin{equation}
(\B_{r,\tilde\gamma}v')(z,t)=r-(1-\tilde\gamma)\,\xi+g(t)\,v'(\xi,\tau),\qquad g=\tilde\gamma/\alpha,
\label{eq:operator}
\end{equation}
which for canonical $v'$ is $\mathcal C^\pi$ on that branch, and it shrinks readout differences by exactly $\tilde\gamma$.
\end{theorem}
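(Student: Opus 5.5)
We first pin down what $\B^{\varphi,\vartheta}v$ means, then read the readout identity off the lift. Given the lift $X_t=tr+\varphi(t)X'_{\vartheta(t)}$, differentiate in $t$:
\[
\dot X_t=r+\varphi'(t)X'_{\vartheta}+\varphi(t)\vartheta'(t)\,(X_1'-X_0).
\]
Replacing the chord by the successor field gives the backed-up field
\[
(\B^{\varphi,\vartheta}v)(z,t)=r+\varphi'(t)\,\xi+\varphi(t)\vartheta'(t)\,v(\xi,\vartheta),\qquad \xi=(z-tr)/\varphi(t).
\]
Its readout is $q=z+(1-t)\B v$. Substitute $z=tr+\varphi\xi$ and collect terms:
\[
q_{\B v}(z,t)=r+\big[\varphi+(1-t)\varphi'\big]\xi+(1-t)\varphi\vartheta'\,v(\xi,\vartheta).
\]
The right-hand side of Eq.~\eqref{eq:conj-short} equals $r+\tilde\gamma\xi+\tilde\gamma(1-\vartheta)v(\xi,\vartheta)$. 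Because the identity must hold for every field $v$, and $v(\xi,\vartheta)$ can be varied independently of $\xi$, the coefficients must match separately. This gives two ODEs:
\[
\varphi+(1-t)\varphi'=\tilde\gamma,\qquad (1-t)\varphi\vartheta'=\tilde\gamma(1-\vartheta).
\]

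Next we solve these ODEs under the stated initial conditions. The first is linear: $\frac{d}{dt}\big[\varphi/(1-t)\big]=\tilde\gamma/(1-t)^2$. With $\varphi(0)=1$ its unique solution is $\varphi=\tilde\gamma+(1-\tilde\gamma)(1-t)=1-(1-\tilde\gamma)t=\alpha$. For the second, set $w=1-\vartheta$. Then $w'/w=-\tilde\gamma/((1-t)\alpha)$. Partial fractions give $\tilde\gamma/((1-t)\alpha)=1/(1-t)-(1-\tilde\gamma)/\alpha$, so $w=C(1-t)/\alpha$. The condition $\vartheta(0)=0$ forces $C=1$, hence $\vartheta=1-(1-t)/\alpha=\tilde\gamma t/\alpha=\tau$. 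Uniqueness follows from standard ODE uniqueness, since the coefficients are smooth on $[0,1)$ and $\alpha>0$.

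Sufficiency is immediate by substitution. With $\varphi=\alpha$ and $\vartheta=\tau$ we get $\varphi'=-(1-\tilde\gamma)$ and $\varphi\vartheta'=\alpha\tau'=g=\tilde\gamma/\alpha$, which reproduces Eq.~\eqref{eq:operator}. For the same-time case, put $\vartheta=t$. The second ODE becomes $(1-t)\varphi=\tilde\gamma(1-t)$, so $\varphi\equiv\tilde\gamma$, and then $\varphi(0)=1$ forces $\tilde\gamma=1$. (The first ODE is also consistent with $\varphi\equiv\tilde\gamma$.)

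The remaining claims follow from the readout identity.

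\textbf{Contraction.} For two fields $v_1,v_2$, the readout identity gives
\[
q_{\B v_1}(z,t)-q_{\B v_2}(z,t)=\tilde\gamma\big[q_{v_1}(\xi,\tau)-q_{v_2}(\xi,\tau)\big],
\]
so readout differences shrink exactly by $\tilde\gamma$.

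\textbf{Identification with $\mathcal C^\pi$.} The main obstacle is showing that, for canonical $v'=\mathcal L\eta'$, the field $\B v'$ equals $\mathcal L$ of the branch law $\Law(r+\tilde\gamma X_1')$; the algebra gives the formula but not this identification. The plan is to use that, for $t<1$, a canonical field is determined by its readout, $v=(q-z)/(1-t)$, and $q_{\mathcal L\eta}(z,t)=\E[X_1\mid X_t=z]$. By Eq.~\eqref{eq:affine}, $X_t=tr+\alpha X'_\tau$ with $X_1=r+\tilde\gamma X_1'$. The map $b\mapsto tr+\alpha b$ is invertible, so conditioning on $X_t=z$ is the same as conditioning on $X'_\tau=\xi$. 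Hence
\[
\E[X_1\mid X_t=z]=r+\tilde\gamma\,\E[X_1'\mid X'_\tau=\xi]=r+\tilde\gamma\,q_{v'}(\xi,\tau).
\]
This matches the readout of $\B v'$, so the two fields agree. One care point: the successor path must share the source $X_0$ independent of $X_1'$, which is the branch's coupling under fresh independence, so that $\mathcal L\eta'$ is the right conditional expectation. The endpoint $t=1$ can be handled by continuity.
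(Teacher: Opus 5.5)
Your proposal is correct and follows the paper's proof essentially step by step. It uses the same lifted field and readout, the same two coefficient-matching equations $\varphi+(1-t)\varphi'=\tilde\gamma$ and $(1-t)\varphi\vartheta'=\tilde\gamma(1-\vartheta)$ solved under the normalisations, and the same identification with $\mathcal C^\pi$ through $X_t=tr+\alpha X'_\tau$ and injectivity of $v\mapsto q_v$ for $t<1$. The paper adds only two small points: that $\varphi$ cannot vanish (on the maximal interval where $\varphi\neq0$ it equals $\alpha\ge\tilde\gamma>0$, so $\xi$ is always defined), and that $\Phi_{r,\tilde\gamma}(z,t)=(\xi,\tau)$ is a bijection of $\R^d\times[0,1)$, which is what upgrades your pointwise difference identity to the exact equality $d_q(\B v,\B w)=\tilde\gamma\,d_q(v,w)$ of suprema.
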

\noindent In readouts the backup is $x\mapsto r+\tilde\gamma x$. The same-time query is not among these lifts unless $\tilde\gamma=1$.

\paragraph{Retiming makes the sampled target unbiased for the operator.}
\label{sec:centering-sub}\label{sec:fixedpoint}
The algorithm regresses on the sampled target $u_\kappa$ of Eq.~\eqref{eq:target}, which replaces a fraction $\kappa$ of the chord $X_1'-X_0$ by the teacher at the retimed point.
\begin{theorem}[Centering and realisation]
\label{thm:centering}
Assume (A2)--(A4): a coupling-matched teacher, a gain measurable given $(X_t,H)$, and a square-integrable target.
\emph{(i)} $\E[C_{\tau(t)}\mid X_t,H]=0$, so $\E[u_\kappa\mid X_t]=v_t^\star(X_t\mid c)$ for every gain.
\emph{(ii)} With fresh independence (A6) --- the label $\rho_1=0$ alone does not give it --- and the critic's own field as teacher, the regression target equals $\mathcal C^\pi v$ on canonical fields. Off those fields a defect remains; its formula is Theorem~\ref{thm:centering-full}(ii).
\emph{(iii)} Training then iterates $\mathcal C^\pi$.
\end{theorem}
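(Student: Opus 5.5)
Part (i) comes first, by the tower property. On a nonterminal branch, Eq.~\eqref{eq:clock} gives $X'_{\tau(t)}=\xi_t=(X_t-tR)/\alpha(t)$. So $\xi_t$ is a measurable function of $(X_t,H)$, whatever the noise coupling; this is the query visibility supplied by affinity. Assumption (A2) says $\bar v(\tau,b\mid H)=\E[X_1'-X_0\mid X'_{\tau}=b,H]$, which is Eq.~\eqref{eq:oracle}. Conditioning on $(X_t,H)$ is the same as conditioning on $(\xi_t,H)$, because the map $X_t\mapsto\xi_t$ is invertible given $H$ ($\alpha(t)>0$). Hence $\E[X_1'-X_0\mid X_t,H]=\bar v(\tau(t),\xi_t\mid H)$, and so $\E[C_{\tau(t)}\mid X_t,H]=0$. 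The gain $a_t=\kappa g(t)$ is $(X_t,H)$-measurable by (A3), so $\E[a_tC_{\tau(t)}\mid X_t,H]=0$; (A4) guarantees that the conditional expectations exist. Conditioning further on $X_t$ alone gives $\E[u_\kappa\mid X_t]=\E[Y\mid X_t]=\E[X_1-X_0\mid X_t]$, which is the marginal straight flow-matching velocity $v^\star_t$ of the backed-up law at $c$. For terminal branches the correction is zero and $Y=R-X_0$, so nothing needs checking there.

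Part (ii) identifies the population regression target with $\mathcal C^\pi v$. Suppose the teacher is the critic's own field $v$ and $v$ is canonical, $v=\mathcal L\eta$. Under fresh independence (A6), $X_1'$ is generated from noise independent of $X_0$, so $X_1'\sim\mathcal E(v)_{S',A'}=\eta_{S',A'}$ (the flow of $\mathcal L\eta$ reproduces $\eta$) and $X_1'\indep X_0$ given $H$. The successor pairing is then exactly the independent straight-path pairing that defines $\mathcal L\eta_{S',A'}$. This makes $v$ its own coupling-matched teacher on the retimed path, by Lemma~\ref{lem:selfconsistency}; here (A5) and (A6), not the label $\rho_1=0$ alone, ensure that $(X_0,X_1')$ is independent of $(t,H)$ in the required way. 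So (A2) holds and part (i) applies. The regression target $\E[u_\kappa\mid X_t]$ is therefore the straight flow-matching field of $\Law(R+\gt X_1'\mid c)$ with $X_1'\sim\mathcal E(v)$, that is, $\mathcal L\T^\pi\mathcal E(v)=\mathcal C^\pi v$. The same conclusion follows branch by branch from Theorem~\ref{thm:conjugacy}, whose operator Eq.~\eqref{eq:operator} coincides with $\E[u_\kappa\mid X_t,H]$ once $v'=v$ is canonical.

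Off canonical fields, $v$ differs from the matched field $\mathcal L\mathcal E(v)$ of its own generated law. The defect is then $g(t)\kappa\,\E[(v-\mathcal L\mathcal E(v))(\tau,\xi_t)\mid X_t]$, and I would defer that formula to Theorem~\ref{thm:centering-full}(ii).

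Part (iii) follows because exact population regression returns the conditional mean. Each training round therefore maps $v\mapsto\mathcal C^\pi v$, and the output is canonical, being $\mathcal L$ of something. Every iterate after the first is thus canonical, and part (ii) applies at every step, so training iterates $\mathcal C^\pi$.

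I expect the main obstacle to be the self-teacher step in (ii): showing that under (A6) the reused critic, queried at $(\tau(t),\xi_t)$ on a path that shares $X_0$ with the student, really equals $\E[X_1'-X_0\mid X'_{\tau}=\xi_t,H]$. This needs the successor pair $(X_0,X_1')$ to have exactly the product law $\cN(0,I)\otimes\eta_{S',A'}$ conditionally on $H$. I would rely on Lemma~\ref{lem:selfconsistency} for this and check the measurability and independence conditions carefully.
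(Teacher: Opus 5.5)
Your proposal is correct and follows the paper's proof. Part (i) uses $\sigma(X_t,H)=\sigma(\xi_t,H)$, the tower property, and pulling out the $\HH$-measurable gain. Part (ii) uses Lemma~\ref{lem:selfconsistency} and the independence of $X_0$ from $(H,X_1')$ under (A5)--(A6); the only cosmetic difference is that the paper computes $\E[u_\kappa\mid X_t,t,c,H]$ for a general self-teacher and reads off the defect, whereas you specialise to canonical $v$ first and reduce to (i). Part (iii) uses that $\mathcal C^\pi$ always outputs canonical fields.

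Two points of precision are worth fixing. First, the induction in (iii) must start from a canonical field: for $\kappa>0$ the update of a non-canonical field carries the defect and need not be canonical, so ``every iterate after the first'' is not automatic. Second, in the defect formula $\kappa g$ depends on $H$ through $\tilde\gamma$ and the $\HH$-measurable gain, so it belongs inside the conditional expectation, $\E[\kappa g\,(v-\mathcal L\mathcal E v)_{S',A'}(\xi_t,\tau)\mid X_t,t,c]$.
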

\noindent Part (i) is the tower property: the retimed query is a function of $(X_t,H)$, so replacing the chord by its conditional mean leaves the average unchanged. The same-time query is not, and its residual, Eq.~\eqref{eq:sametime-bias}, is scaled by $\kappa$.

\begin{corollary}[Fixed point and contraction]
\label{prop:operator}
Under (A1), $\mathcal E\,\mathcal C^\pi=\T^\pi\mathcal E$, so $\mathcal C^\pi$ has the unique fixed point $v^\pi=\mathcal L\eta^\pi$, and the returns two critics generate contract at the Bellman rate:
\begin{equation}
\delta_p(\mathcal C^\pi v,\mathcal C^\pi w)\le\beta_p\,\delta_p(v,w),\qquad
\delta_p\big((\mathcal C^\pi)^nv,v^\pi\big)\le\beta_p^{\,n}\,\delta_p(v,v^\pi),
\label{eq:contract}
\end{equation}
with $\beta_p=\max_c\E[\tilde\gamma^p\mid c]^{1/p}\le\gamma$ and $\delta_p$ the maximum $W_p$ distance between generated return laws. A method inherits this only when its target estimates $\mathcal C^\pi v$, which Theorems~\ref{thm:conjugacy} and~\ref{thm:centering} give for the retimed clock. The full statement is Proposition~\ref{prop:operator-full}.
\end{corollary}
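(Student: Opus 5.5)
The plan is to first prove the intertwining identity $\mathcal E\,\mathcal C^\pi=\T^\pi\mathcal E$, and then read off the fixed point and the contraction from the classical contraction of $\T^\pi$ in the maximal $W_p$ metric.

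For the identity, I would use the defining property of the straight flow-matching field. For any return-law family $\eta$ satisfying (A1), the flow of $\mathcal L\eta$ started from $\cN(0,I)$ reproduces $\eta$. This is the marginal-preservation statement of flow matching recalled in Section~\ref{sec:prelim}, and (A1) supplies the regularity (well-posed ODE, integrability) needed to invoke it. In operator form this says $\mathcal E\mathcal L=\mathrm{id}$ on admissible laws. Hence
\[
\mathcal E\,\mathcal C^\pi v=\mathcal E\mathcal L\,\T^\pi\mathcal E v=\T^\pi\mathcal E v .
\]
One check is needed here: $\T^\pi\mathcal E v$ must again lie in the class covered by (A1), for instance having finite $p$-th moments. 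This holds because $R$ is integrable and $\tilde\gamma\le\gamma$.

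For the fixed point, I would split into existence and uniqueness.
\begin{itemize}
\item \emph{Existence.} The family $\eta^\pi$ is the unique fixed point of $\T^\pi$ \citep{bellemare2017}. Set $v^\pi=\mathcal L\eta^\pi$. Then $\mathcal C^\pi v^\pi=\mathcal L\T^\pi\mathcal E\mathcal L\eta^\pi=\mathcal L\T^\pi\eta^\pi=\mathcal L\eta^\pi=v^\pi$.
\item \emph{Uniqueness.} Suppose $\mathcal C^\pi v=v$. Applying $\mathcal E$ gives $\mathcal E v=\T^\pi\mathcal E v$, so $\mathcal E v=\eta^\pi$. Then $v=\mathcal C^\pi v=\mathcal L\T^\pi\eta^\pi=\mathcal L\eta^\pi$.
\end{itemize}
Uniqueness is therefore within the range of $\mathcal C^\pi$, which consists of canonical fields. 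A non-canonical $v$ cannot be fixed, since $\mathcal C^\pi v$ is always canonical.

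For the contraction, $\delta_p(v,w)=\sup_c W_p(\mathcal E(v)_c,\mathcal E(w)_c)$. The intertwining identity gives $\delta_p(\mathcal C^\pi v,\mathcal C^\pi w)=\bar W_p(\T^\pi\mathcal E v,\T^\pi\mathcal E w)$.

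I would bound this directly. Fix $c$ and draw one transition $H$. Given $(S',A')$, couple $X_1'\sim\mathcal E(v)_{S',A'}$ and $Y_1'\sim\mathcal E(w)_{S',A'}$ optimally in $W_p$. Use the same $R$ and $\tilde\gamma$ on both sides. Then
\[
\E\bigl|R+\tilde\gamma X_1'-R-\tilde\gamma Y_1'\bigr|^p=\E\bigl[\tilde\gamma^p\,W_p^p(\cdot,\cdot)\bigr]\le \E[\tilde\gamma^p\mid c]\,\delta_p(v,w)^p .
\]
This step needs $\tilde\gamma$ to be independent of $(S',A')$, or else needs the supremum over $(S',A')$ taken inside first, as done here. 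It also needs a measurable selection of optimal couplings, which I expect to be the only delicate point. If a measurable selection is unavailable, I would use $\varepsilon$-optimal measurable couplings and let $\varepsilon\to0$.

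Taking $p$-th roots and the maximum over $c$ gives the rate $\beta_p$. Iterating, with $w=v^\pi$ and $\mathcal C^\pi v^\pi=v^\pi$, gives the $n$-step bound.

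Finally, the remark about inheritance follows from Theorems~\ref{thm:conjugacy} and~\ref{thm:centering}(iii). Those results say the retimed regression realises $\mathcal C^\pi$, so the iterates above are the training iterates.
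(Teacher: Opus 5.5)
Your proposal is correct and follows the paper's proof essentially step for step: $\mathcal E\mathcal L=\mathrm{id}$ from (A1) gives the intertwining and the fixed-point argument, and the contraction comes from coupling the two successor laws optimally given $(S',A')$, independently of $(R,\tilde\gamma)$ under a shared backup draw, and bounding $W_p^p(\eta_{S',A'},\nu_{S',A'})$ by its maximum before averaging $\tilde\gamma^p$. Three small remarks: the measurable-selection concern is moot because (A1) takes the condition set to be finite; the paper obtains uniqueness of $\eta^\pi$ by Banach's theorem from this same contraction rather than by citation; and $\beta_p\le\gamma$ follows immediately from $\tilde\gamma\le\gamma$.
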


The gain trades residual bridge noise $S$ against teacher error $e$. The conditional risk is minimised at $\kappa^\star=S/(S+\norm e^2)$ when the denominator is positive, so full correction is optimal exactly for an exact teacher (Proposition~\ref{prop:risk-full}).\label{prop:risk}

\label{sec:shared}\label{sec:regime}

\begin{table}[h]
\centering
\caption{\textbf{$W_1$, mean, and std.\ against comparators} (lower is better; mean bias near zero). Bold: best in the row; underlined: not different from the best ($p\ge0.05$). std.\ is $100\sum_s|\mathrm{sd}_s-\mathrm{sd}^\star_s|/\sum_s\mathrm{sd}^\star_s$ over states. $^\dagger$over-dispersed; $^\P$per-coordinate floq; $^\ddagger$published Value Flows; $^\S$full maze, $2{,}880$ points.}
\label{tab:compare}
\footnotesize
\setlength{\tabcolsep}{2.7pt}
\begin{tabular}{@{}llcccccc@{}}
\toprule
& & BCFM & PCBF & PCBF, tuned $\kappa$ & floq & Value Flows & \textbf{ReBF (ours)}\\
Env & Metric & {\scriptsize(same-time, 1, 0)} & {\scriptsize(same-time, 1, 1)} & {\scriptsize(same-time, 1, $\kappa^\star$)} & {\scriptsize scalar} & {\scriptsize default} & {\scriptsize(retimed, 0, 1)}\\
\midrule
\textsc{bernoulli} & $W_1$ & \underline{0.0181}\,{\scriptsize$\pm$.0023} & 0.0493\,{\scriptsize$\pm$.0022} & 0.0180\,{\scriptsize$\pm$.0022} & 0.184\,{\scriptsize$\pm$.001} & 0.122\,{\scriptsize$\pm$.004} & \textbf{0.0151}\,{\scriptsize$\pm$.0017}\\
 & mean & \underline{$+0.38$}\,{\scriptsize$\pm$.55} & \underline{$+0.36$}\,{\scriptsize$\pm$.49} & \underline{$+0.35$}\,{\scriptsize$\pm$.55} & {\boldmath$-0.42$}\,{\scriptsize$\pm$.35} & $+12.2$\,{\scriptsize$\pm$.4} & \underline{$-0.17$}\,{\scriptsize$\pm$.44}\\
 & std. & \textbf{1.2}\,{\scriptsize$\pm$.2} & 7.2\,{\scriptsize$\pm$.4} & \underline{1.4}\,{\scriptsize$\pm$.2} & 13.2\,{\scriptsize$\pm$.2} & 3.7$^\dagger$\,{\scriptsize$\pm$.3} & \underline{1.7}\,{\scriptsize$\pm$.3}\\
\addlinespace[2pt]
\textsc{solitaire} & $W_1$ & 0.147\,{\scriptsize$\pm$.006} & 0.677\,{\scriptsize$\pm$.046} & 0.154\,{\scriptsize$\pm$.007} & 1.464\,{\scriptsize$\pm$.010} & 1.334\,{\scriptsize$\pm$.010} & \textbf{0.0878}\,{\scriptsize$\pm$.0096}\\
 & mean & {\boldmath$-0.87$}\,{\scriptsize$\pm$.36} & $-14.0$\,{\scriptsize$\pm$1.2} & \underline{$-0.94$}\,{\scriptsize$\pm$.42} & $+12.6$\,{\scriptsize$\pm$.6} & $-39.0$\,{\scriptsize$\pm$.4} & \underline{$-1.73$}\,{\scriptsize$\pm$.46}\\
 & std. & 4.5\,{\scriptsize$\pm$.4} & 15.8\,{\scriptsize$\pm$1.3} & 4.8\,{\scriptsize$\pm$.5} & 35.3\,{\scriptsize$\pm$.3} & 49.4\,{\scriptsize$\pm$.3} & \textbf{2.3}\,{\scriptsize$\pm$.3}\\
\addlinespace[2pt]
\textsc{discrete-} & $W_1$ & 0.517\,{\scriptsize$\pm$.015} & 0.617\,{\scriptsize$\pm$.016} & 0.493\,{\scriptsize$\pm$.018} & 1.950\,{\scriptsize$\pm$.015} & 6.903\,{\scriptsize$\pm$.013} & \textbf{0.298}\,{\scriptsize$\pm$.008}\\
\textsc{mc} & mean & \underline{$-0.80$}\,{\scriptsize$\pm$.26} & $+0.04$\,{\scriptsize$\pm$.50} & \underline{$+0.37$}\,{\scriptsize$\pm$.35} & {\boldmath$-0.04$}\,{\scriptsize$\pm$.19} & $-53.8$\,{\scriptsize$\pm$.1} & \underline{$-0.04$}\,{\scriptsize$\pm$.24}\\
 & std. & 8.4\,{\scriptsize$\pm$.4} & 11.4\,{\scriptsize$\pm$.4} & 8.7\,{\scriptsize$\pm$.4} & 56.3\,{\scriptsize$\pm$.1} & 62.4\,{\scriptsize$\pm$.1} & \textbf{3.7}\,{\scriptsize$\pm$.2}\\
\addlinespace[2pt]
\shortstack[l]{four-rooms\\{\scriptsize(16-dim SF)}} & $W_1$ & 0.0523\,{\scriptsize$\pm$.0007} & 0.0410\,{\scriptsize$\pm$.0010} & 0.0394\,{\scriptsize$\pm$.0008} & 0.372\,{\scriptsize$\pm$.001}$^\P$ & 0.241\,{\scriptsize$\pm$.002} & \textbf{0.0282}\,{\scriptsize$\pm$.0009}\\
 & mean & \underline{2.9}\,{\scriptsize$\pm$.1} & \underline{3.0}\,{\scriptsize$\pm$.1} & 2.7\,{\scriptsize$\pm$.1} & 9.1\,{\scriptsize$\pm$.2}$^\P$ & 29.4\,{\scriptsize$\pm$.2} & \textbf{2.7}\,{\scriptsize$\pm$.1}\\
 & std. & 10.6\,{\scriptsize$\pm$.2} & 8.1\,{\scriptsize$\pm$.2} & 7.6\,{\scriptsize$\pm$.2} & 89.8\,{\scriptsize$\pm$.1}$^\P$ & 20.6$^\dagger$\,{\scriptsize$\pm$.1} & \textbf{2.8}\,{\scriptsize$\pm$.1}\\
\addlinespace[2pt]
\textsc{ogbench} & $W_1$ & 22.6\,{\scriptsize$\pm$.3} & 19.5\,{\scriptsize$\pm$.5} & \underline{19.9}\,{\scriptsize$\pm$.7} & 28.3\,{\scriptsize$\pm$.4} & 60.4\,{\scriptsize$\pm$.6}$^\ddagger$ & \textbf{14.6}\,{\scriptsize$\pm$1.0}\\
teleport$^\S$ & mean & $+7.7$\,{\scriptsize$\pm$.6} & $+10.4$\,{\scriptsize$\pm$1.2} & $+9.9$\,{\scriptsize$\pm$1.2} & {\boldmath$+0.4$}\,{\scriptsize$\pm$.7} & $+40.4$\,{\scriptsize$\pm$.5}$^\ddagger$ & \underline{$+4.1$}\,{\scriptsize$\pm$1.2}\\
 & std. & 54.2\,{\scriptsize$\pm$.6} & \underline{21.6}\,{\scriptsize$\pm$2.0} & \underline{33.6}\,{\scriptsize$\pm$7.6} & 92.5\,{\scriptsize$\pm$.3} & 92.1\,{\scriptsize$\pm$.4}$^\ddagger$ & \textbf{19.6}\,{\scriptsize$\pm$4.6}\\
\bottomrule
\end{tabular}

\end{table}

\begin{figure}[t]
\centering
\includegraphics[width=\linewidth]{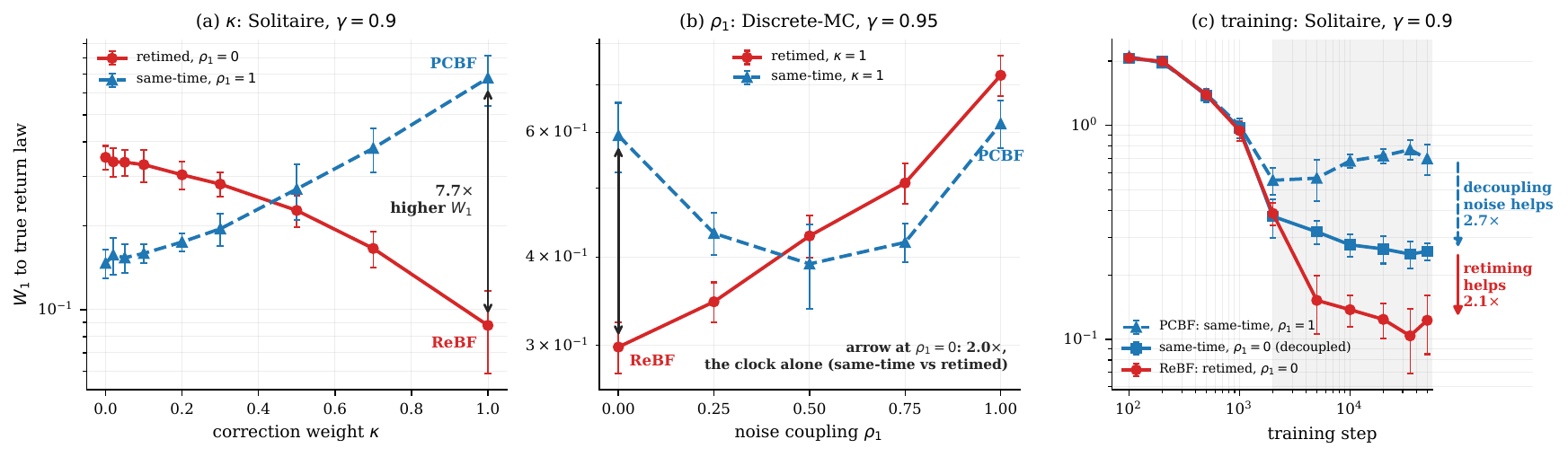}
\caption{\textbf{Why \rebf{} wins.} 
(a) Effect of correction weight $\kappa$ across clock and coupling variants. 
(b) Effect of noise coupling $\rho_1$ at full correction ($\kappa=1$). 
(c) Training dynamics isolating the sequential gains of noise decoupling and clock retiming; Details in Appendix~\ref{app:allenvs}).}
\label{fig:main}
\end{figure}

\section{Experiments}\label{sec:toys}
We structure our empirical evaluation around three core questions: whether \rebf{} yields more accurate return laws, whether query-time retiming and noise decoupling explain this advantage, and whether this superior fidelity translates to downstream offline control. For policy evaluation, we benchmark under fixed architectures and budgets on three analytical MRPs (\textsc{bernoulli}, \textsc{solitaire}, \textsc{discrete-mc}, details in \ref{app:envs}), successor features in \textsc{four-rooms} (\ref{app:env-fourrooms}), and \textsc{pointmaze-teleport}, measuring $W_1$ distance against ground-truth or rollout-based reference distributions. For downstream control, we evaluate \rebf{} across 38 OGBench and D4RL tasks and the results are shown in Table \ref{tab:aggregated}. Overall, \rebf{} achieves \textbf{state-of-the-art performance}—most prominently on long-horizon, multi-stage manipulation tasks where uncancelled target bias severely degrades traditional value estimation.


\paragraph{Does ReBF recover the return law?}
It has the lowest $W_1$ and dispersion error on all benchmarks (Table~\ref{tab:compare}). Most methods other than Value Flows get the mean nearly right, so what separates them is the shape of the law: a correct mean does not certify a correct law (Figure~\ref{fig:flowmaps}). PCBF's best $\kappa$ weakens the correction toward the uncorrected BCFM, whereas ReBF is best at full correction in every environment (Figure~\ref{fig:main}a, Table~\ref{tab:headline}). This is what the theory predicts: ReBF's correction is unbiased for every $\kappa$, so nothing is gained by shrinking it (Theorem~\ref{thm:centering}), whereas PCBF's inherits the error of its mismatched teacher, scaled by $\kappa$ (Proposition~\ref{prop:leak}). 

\label{sec:ogbench}
\begin{table*}[t]
\centering
\caption{\textbf{Offline RL Results.} We report performance on OGBench and D4RL. Results are averaged over 8 random seeds (4 seeds for pixel-based tasks). Bold numbers denote the values within 95\% of the best performing method on each domain. Per-task results are in Table~\ref{tab:fullogbench_comparison}.}
\label{tab:aggregated}
\resizebox{\textwidth}{!}{%
\begin{tabular}{lcccccccc}
\toprule
Domain & IQN & CODAC & FloQ & FQL & IQL & Value Flows & PCBF & \textbf{ReBF (Ours)} \\
\midrule
cube-double-play (5 tasks) & $42 \pm 8$ & $61 \pm 6$ & $47 \pm 14$ & $29 \pm 6$ & $6 \pm 2$ & $69 \pm 4$ & $70 \pm 7$ & $\mathbf{76 \pm 6}$ \\
scene-play (5 tasks) & $40 \pm 1$ & $55 \pm 1$ & $\mathbf{58 \pm 4}$ & $56 \pm 2$ & $28 \pm 3$ & $\mathbf{59 \pm 4}$ & $\mathbf{58 \pm 3}$ & $\mathbf{60 \pm 4}$ \\
puzzle-4x4-play (5 tasks) & $27 \pm 4$ & $20 \pm 18$ & $28 \pm 6$ & $17 \pm 5$ & $7 \pm 2$ & $27 \pm 4$ & $32 \pm 5$ & $\mathbf{36 \pm 8}$ \\
cube-triple-play (5 tasks) & $6 \pm 0$ & $2 \pm 1$ & $8 \pm 3$ & $4 \pm 2$ & $1 \pm 1$ & $\mathbf{14 \pm 3}$ & $6 \pm 3$ & $12 \pm 4$ \\
D4RL adroit (8 tasks) & $66 \pm 5$ & $\mathbf{69 \pm 1}$ & $\mathbf{70 \pm 3}$ & $\mathbf{71 \pm 4}$ & $\mathbf{70}$ & $65 \pm 2$ & $\mathbf{70 \pm 2}$ & $\mathbf{69 \pm 1}$ \\
visual-antmaze-teleport (5 tasks) & $4 \pm 2$ & -- & -- & $5 \pm 2$ & $6 \pm 4$ & $13 \pm 4$ & $\mathbf{14 \pm 4}$ & $13 \pm 3$ \\
visual-cube-double-play (5 tasks) & $1 \pm 0$ & -- & -- & $6 \pm 1$ & $11 \pm 6$ & $\mathbf{13 \pm 2}$ & $3 \pm 1$ & $3 \pm 1$ \\
\bottomrule
\end{tabular}%
}
\end{table*}

\begin{figure}[ht]
\centering
\includegraphics[width=0.8\linewidth]{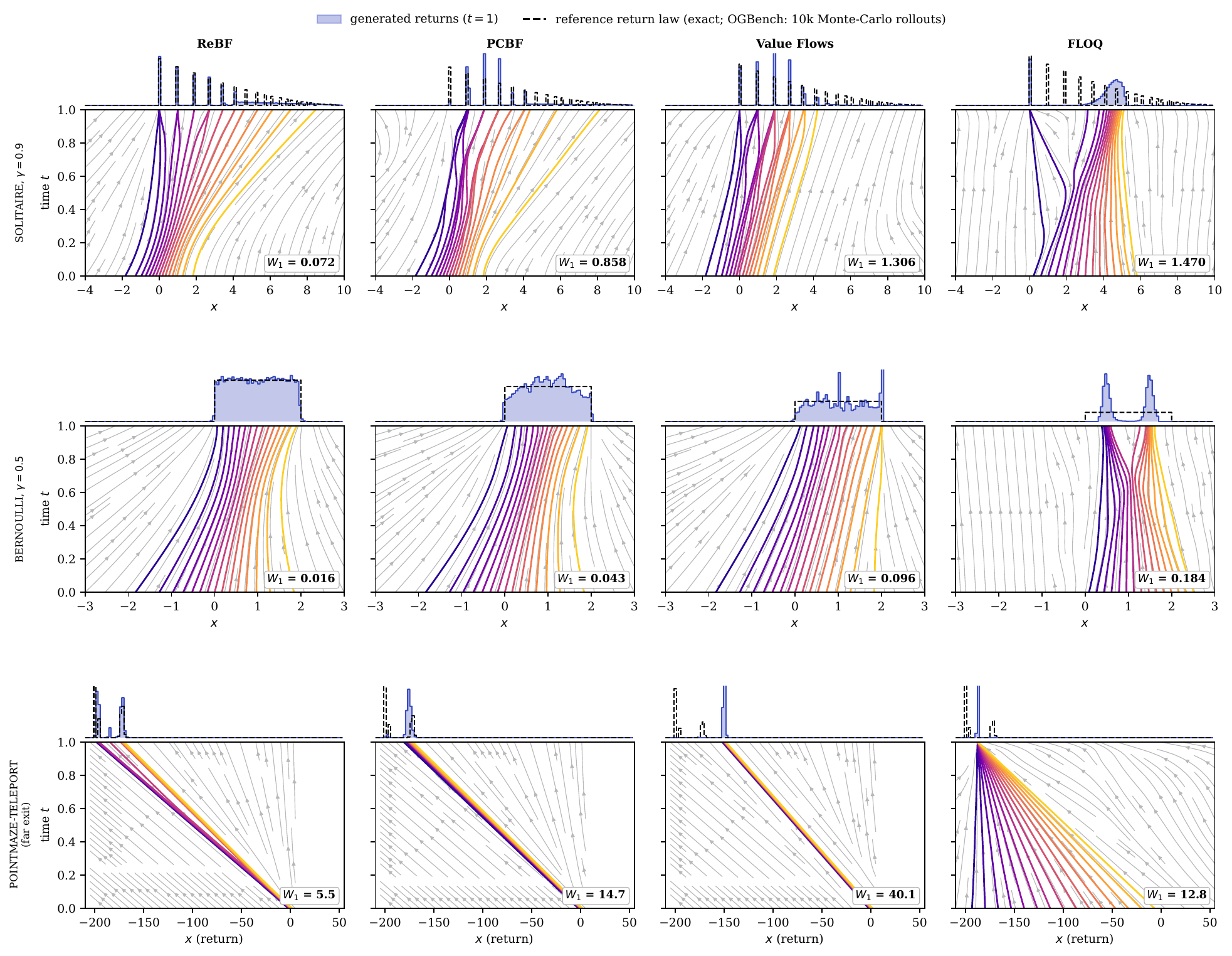}
\caption{\textbf{Comparison of learned flow fields and generated return distributions.} Each plot shows velocity streamlines, sample trajectories from flow time $t=0$ to $t=1$, and the resulting terminal return distributions (purple bars) compared against ground-truth Monte-Carlo rollouts (dashed lines) for ReBF, PCBF, Value Flows, and FLOQ.}
\label{fig:flowmaps}
\end{figure}

\paragraph{Which design makes ReBF the best?}
Neither choice alone: ReBF needs both the retimed clock and independent noise. At full correction, retiming lowers $W_1$ with independent noise ($\rho_1=0$) but not with shared noise ($\rho_1=1$), in all three toys (Figures~\ref{fig:main}b and~\ref{fig:allenvs}b). As Section~\ref{sec:coupling} predicts, for scalar returns shared noise already makes the same-time target exact under a matched teacher, so retiming has nothing to fix there and can help only with independent noise.


\paragraph{Are both retiming and decouple generation necessary?}
Yes: clock retiming ($\tau(t)$) and independent noise coupling ($\rho_1=0$) act strictly synergistically. As shown in Figure~\ref{fig:main}b, retiming significantly lowers $W_1$ error under independent noise, whereas under shared noise ($\rho_1=1$), scalar-return chords are recoverable from either query, causing both clocks to target similar oracle fields. Conversely, maintaining a same-time clock under independent noise remains suboptimal. Isolating their contributions on \textsc{solitaire} demonstrates two additive steps: decoupling noise lowers PCBF's final $W_1$ error by $2.7\times$, and retiming the clock provides a further $2.1\times$ drop (Figure~\ref{fig:main}c).

\paragraph{Why does same-time shared-noise bootstrapping plateau?}
Reusing a critic as its own teacher under shared noise ($\rho_1=1$) introduces an irreducible target bias that corrupts iterative updates. On \textsc{solitaire}, PCBF's error initially decreases but subsequently rises and plateaus at a biased level—a degradation shared by the retimed clock under $\rho_1=1$ (Figure~\ref{fig:main}c). Theoretically, a self-teacher critic under shared noise fails to match the required target field (Proposition~\ref{prop:leak}(ii)), continually feeding conditioning bias back into the Bellman backup. Independent noise ($\rho_1=0$) resolves this mismatch because the required teacher is precisely the marginal field the critic already learns (Lemma~\ref{lem:selfconsistency}).

\paragraph{Conclusion.}
We propose Retimed Bellman Flows (ReBF), a hyperparameter-free framework that queries the teacher critic at a dynamically shifted flow time $\tau(t)$. By restoring exact affine conjugacy with the Bellman operator, ReBF eliminates target bias without extra evaluations. Empirically, ReBF improves ground-truth return fidelity $W_1$ by up to $7.7\times$ on MRPs and achieves state-of-the-art performance across 38 OGBench and D4RL offline benchmarks.

\section{Limitations}
The analysis is for policy evaluation on small Markov reward processes under a fixed policy. Table~\ref{tab:aggregated} reports offline control on OGBench and D4RL; those scores are not a test of the theory. We have not studied how a flow critic should be used to extract a better policy in offline reinforcement learning. Extending the same target to other settings, including offline goal-conditioned reinforcement learning, needs further investigation. Only one of the three environments shows a large shared-noise mean bias, and we have marked which conclusions rest on it. Several results are proved in a correlated-Gaussian model rather than in general: the necessity half of the exactness set, the same-time closed form, and the scale-error identity; the last is also checked on a non-Gaussian law (Table~\ref{tab:regime}). We prove convergence on canonical fields with independent generation (Theorem~\ref{thm:centering}(iii)); we do not establish uniqueness or convergence of the self-consistent training map from non-canonical initial fields when the critic is its own teacher at $\kappa>0$, nor, from any initial field, when generation is coupled ($\rho_1>0$). The mean bias we measure under shared noise has a candidate population mechanism whose size does not match the measurement, and we have not shown that it operates in training.
\label{main:end}

\subsection*{AI use statement}
Large language model assistance was used in preparing this manuscript: for drafting and organizing prose, for suggesting proof strategies, and for implementing the numerical diagnostic scripts. The research questions, original research ideas, the theoretical framework, and the final form of every claim are the authors' own. The authors have verified each theorem statement and proof independently of the assistance received and have confirmed every citation against its primary source. The authors take full responsibility for all content of this paper, including any errors.

\subsection*{Reproducibility statement}
The analytic identity grids and the contraction moduli  run on CPU and require only NumPy, SciPy and SymPy, with Matplotlib for the figures. The Gaussian column of Table~\ref{tab:regime} is Eq.~\eqref{eq:scalefree} in closed form; its Bernoulli and Retimed columns are fixed-seed Monte Carlo estimates ($2\times10^7$ draws, seed $0$), shipped with the supplement. The toy Markov reward processes, arm definitions, training configuration, preregistered analysis protocol and per-seed scores for Section~\ref{sec:toys} are given in Appendix~\ref{app:protocol}; every reported $W_1$ is computed against the analytic return law rather than a sampled reference, and the reference-error floor for each environment is reported alongside it. Appendix~\ref{app:full} states all assumptions and Appendix~\ref{app:proofs} gives complete proofs.

\clearpage
\appendix

Appendix~\ref{app:full} states the assumptions and the full results of Section~\ref{sec:theory}, and Appendix~\ref{app:proofs} proves them. The later appendices give the toy comparisons, environments, protocol, algorithm and limitations. The integration budget and the per-task OGBench table are an online supplement.

\section{Standing Assumptions and Full Statements of Section~\ref{sec:theory}}
\label{app:full}
Section~\ref{sec:theory} states its results briefly. This appendix gives the assumptions they use and their full statements; proofs are in Appendix~\ref{app:proofs}.

\paragraph{Standing assumptions.}
\begin{itemize}
\item[(A1)] \emph{Regularity.} The condition set is finite, $0\le\tilde\gamma\le\bar\gamma<1$ and $1\le p<\infty$; rewards and return laws have uniformly finite $p$th moments; a field is admissible when its flow from $\mathcal N(0,\Id_d)$ is unique on $[0,1)$ with a terminal law of finite $p$th moment; and exact flow matching gives $\mathcal E\mathcal L=\mathrm{id}$.
\item[(A2)] \emph{Teacher.} For centering, the teacher is the coupling-matched oracle of Eq.~\eqref{eq:oracle}; for realisation, it is the critic's own field, with the successor endpoint drawn from $\mathcal E(v)_{S',A'}$ independently of $(X_0,R,\tilde\gamma,t)$ given $(S',A')$.
\item[(A3)] \emph{Gain.} $a_t=\kappa g(t)$ with $\kappa\in[0,1]$ measurable with respect to $\sigma(X_t,H)$; it may not inspect residual chord randomness. A gain that does can destroy centering: if $(X_1'-X_0)-D$ is $+1$ or $-1$ equiprobably given $\HH$ and $\kappa=\mathbf 1\{(X_1'-X_0)-D>0\}$, then $\E[\kappa g\{D-(X_1'-X_0)\}\mid\HH]=-g/2\ne0$.
\item[(A4)] \emph{Integrability.} The regression target is square-integrable.
\item[(A5)] \emph{Sampling and conditioning.} Given the current condition $c$, the sampler draws the backup $H=(R,S',A',\tilde\gamma)$, the training time $T\sim\mathcal U(0,1)$ and the source $X_0\sim\mathcal N(0,\Id_d)$; $X_0$ is independent of $H$, and $T$ is independent of $(X_0,X_1',H)$ given $c$. Conditional expectations are regular versions, jointly measurable in time, query and branch; in particular the oracle of Eq.~\eqref{eq:oracle} is a version $v^\star_{\rho_1}(u,b\mid h)$, jointly measurable in $(u,b,h)$, that is for each $h$ a conditional expectation under the conditional law of $(X_0,X_1')$ given $H=h$. At fixed $(c,t)$ we write $\F=\sigma(X_t)$ and $\HH=\sigma(X_t,H)$, as in Eq.~\eqref{eq:rb}.
\item[(A6)] \emph{Independent fresh generation.} The successor endpoint satisfies $X_1'\sim\mathcal E(v)_{S',A'}$ and $X_1'$ is independent of $(X_0,R,\tilde\gamma,T)$ given $(S',A')$. This holds, for example, when $\rho_1=0$ in Eq.~\eqref{eq:coupling}, $E$ is independent of $(X_0,H,T)$, and $X_1'$ is the time-one map of the flow of $v_{S',A'}$ applied to $E$. The label $\rho_1=0$ alone does not imply (A6).
\item[(A7)] \emph{Bridge flow validity.} Whenever an endpoint law is read off a bridge field, that is, off the conditional-mean velocity of a pair $(X_0,X_1)$ under any coupling, the field is admissible in the sense of (A1) and the continuity equation of the bridge marginals has a unique solution among curves of probability laws with finite $p$th moments. For the independent canonical lift this is the content of $\mathcal E\mathcal L=\mathrm{id}$ in (A1). Differentiation under an expectation is used only for compactly supported smooth test functions or under a displayed integrable dominating function.
\item[(A8)] \emph{Affine clocks.} In Theorem~\ref{thm:conjugacy-full}, $\varphi$ and $\vartheta$ are absolutely continuous on $[0,1)$, $\vartheta$ takes values in $[0,1]$, $\varphi(0)=1$ and $\vartheta(0)=0$; the lifted field is the velocity of the lifted path. That $\varphi>0$ and $\vartheta<1$ on $[0,1)$ is a consequence, proved there, not a hypothesis.
\item[(A9)] \emph{Terminal convention.} If $\tilde\gamma=0$, the target is $Y=R-X_0$ with zero correction, and no inverse involving $\alpha(1)=0$ is evaluated. Clock identities are stated for $0<\tilde\gamma<1$.
\end{itemize}
(A5) and (A9) describe the algorithm's sampler and terminal convention (Section~\ref{sec:method}, Appendix~\ref{app:algo}) and hold throughout; each of (A6)--(A8) is invoked only where a statement cites it.

The statements concern one frozen Bellman regression update: the policy, endpoint generator, teacher and coupling sampler are fixed while taking expectations. Source seeds are independent of $H$, and conditional on $H$ the dependence of $X_1'$ on $X_0$ is exactly the declared coupling. Squared-error statements require finite second moments, and the $W_p$ contraction requires finite $p$th moments. We do not infer these regularity conditions from finite neural-network training.

Assumption (A1), rather than finite moments alone, is what asserts $\mathcal E\mathcal L=\mathrm{id}$, and it includes atomic return laws. If $X_1$ has a finite atomic law and $X_0$ is an independent Gaussian source, then for $t<1$ the interpolant $(1-t)X_0+tX_1$ has a positive smooth density, the weak continuity-equation calculation of Lemma~\ref{lem:flow-matching} applies, and the interpolant converges to $X_1$ in $L^p$ as $t\uparrow1$. Finite atomic toy laws therefore satisfy the endpoint part once the uniqueness clause of (A1) and (A7) is imposed.

\begin{proposition*}[Fixed point and contraction; full statement]{prop:operator}
\label{prop:operator-full}
Assume (A1).
\emph{(i)} $\mathcal E\,\mathcal C^\pi=\T^\pi\mathcal E$, $\mathcal C^\pi\mathcal L=\mathcal L\,\T^\pi$ and, for every integer $n\ge1$, $(\mathcal C^\pi)^n=\mathcal L(\T^\pi)^n\mathcal E$. Every fixed point of $\mathcal C^\pi$ is canonical and its endpoint law is fixed by $\T^\pi$, so $v^\pi=\mathcal L\eta^\pi$ is the unique fixed point.

\emph{(ii)} On canonical fields put $D_p(\mathcal L\eta,\mathcal L\nu)=\max_cW_p(\eta_c,\nu_c)$; on admissible fields put $\delta_p(v,w)=\max_cW_p(\mathcal E(v)_c,\mathcal E(w)_c)$, a pseudometric only, since distinct fields can share an endpoint law. Let $\beta_p=\max_c\E[\tilde\gamma^p\mid c]^{1/p}\le\bar\gamma$. Whenever every multiplier is at most the nominal discount $\gamma$, as for $\tilde\gamma\in\{0,\gamma\}$, also $\beta_p\le\gamma$. For admissible $v,w$ and $n\ge1$,
\[
D_p(\mathcal C^\pi v,\mathcal C^\pi w)\le\beta_p\,\delta_p(v,w),\qquad
D_p\big((\mathcal C^\pi)^nv,v^\pi\big)\le\beta_p^{\,n}\,\delta_p(v,v^\pi),
\]
and on canonical fields $\delta_p=D_p$. Under $\tilde\gamma\le\bar\gamma$ alone the constant $\bar\gamma$ cannot be improved.
\end{proposition*}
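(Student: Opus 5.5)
The proof is purely algebraic at the level of operators: every claim follows from the definition $\mathcal C^\pi=\mathcal L\T^\pi\mathcal E$ together with the identity $\mathcal E\mathcal L=\mathrm{id}$ granted by (A1), plus the standard $W_p$ contraction of the distributional Bellman operator.

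For (i), compose on the left with $\mathcal E$: $\mathcal E\mathcal C^\pi=(\mathcal E\mathcal L)\T^\pi\mathcal E=\T^\pi\mathcal E$. This requires checking that $\T^\pi\mathcal E(v)$ has uniformly finite $p$th moments, so that $\mathcal L$ of it is admissible. That holds because rewards have finite $p$th moments, $\tilde\gamma\le\bar\gamma$, and $\mathcal E(v)$ has finite $p$th moments by admissibility. Similarly, $\mathcal C^\pi\mathcal L=\mathcal L\T^\pi(\mathcal E\mathcal L)=\mathcal L\T^\pi$. The power formula follows by induction:
\[
(\mathcal C^\pi)^{n+1}=\mathcal L\T^\pi\mathcal E\,\mathcal L(\T^\pi)^n\mathcal E=\mathcal L(\T^\pi)^{n+1}\mathcal E .
\]
For the fixed points, if $v=\mathcal C^\pi v$ then $v=\mathcal L(\T^\pi\mathcal E v)$ lies in the image of $\mathcal L$, so $v$ is canonical. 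Applying $\mathcal E$ gives $\mathcal E v=\T^\pi\mathcal E v$, so $\mathcal E v=\eta^\pi$ by uniqueness of the Bellman fixed point among laws with finite $p$th moments. Hence $v=\mathcal L\mathcal E v=\mathcal L\eta^\pi$. Conversely, $\mathcal C^\pi\mathcal L\eta^\pi=\mathcal L\T^\pi\eta^\pi=\mathcal L\eta^\pi$.

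For (ii), first note that $D_p$ is well defined on canonical fields, since $\mathcal E\mathcal L=\mathrm{id}$ makes $\mathcal L$ injective. Then
\[
D_p(\mathcal C^\pi v,\mathcal C^\pi w)=\max_c W_p\big((\T^\pi\mathcal Ev)_c,(\T^\pi\mathcal Ew)_c\big),
\]
and the heart of the argument is the Bellman contraction $\max_cW_p(\T^\pi\eta,\T^\pi\nu)_c\le\beta_p\max_cW_p(\eta_c,\nu_c)$. I will prove this by coupling. For each successor condition $(s',a')$, take an optimal coupling $(U,V)$ of $\eta_{s',a'}$ and $\nu_{s',a'}$, and share $(R,S',A',\tilde\gamma)$ between the two sides. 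This is measurable because the condition set is finite. Under this coupling $(R+\tilde\gamma U)-(R+\tilde\gamma V)=\tilde\gamma(U-V)$, so
\[
W_p^p\le \E\big[\tilde\gamma^p\,W_p^p(\eta_{S',A'},\nu_{S',A'})\mid c\big]\le \E[\tilde\gamma^p\mid c]\max_{c'}W_p^p(\eta_{c'},\nu_{c'}).
\]
Here the joint law of $(\tilde\gamma,S',A')$ is handled by conditioning on $H$ first and using that, given $(S',A')$, the successor return is drawn independently of $\tilde\gamma$, as in Eq.~\eqref{eq:bellman}. Taking $p$th roots and the maximum over $c$ gives $\beta_p$. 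The bound $\beta_p\le\bar\gamma$ is immediate, and $\beta_p\le\gamma$ whenever $\tilde\gamma\le\gamma$.

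The iterate bound follows from the power formula:
\[
D_p\big((\mathcal C^\pi)^n v,v^\pi\big)=\max_cW_p\big((\T^\pi)^n\mathcal Ev,(\T^\pi)^n\eta^\pi\big)\le\beta_p^n\,\delta_p(v,v^\pi).
\]
This uses $\mathcal E v^\pi=\eta^\pi$, and applies the one-step contraction once in $\delta_p$ and then $n-1$ times in $D_p$ on canonical fields. On canonical fields $\delta_p=D_p$, again because $\mathcal E\mathcal L=\mathrm{id}$.

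For sharpness, I would use a single-state example with $\tilde\gamma\equiv\bar\gamma$ deterministic and $R=0$, comparing two Dirac-type laws $\delta_0$ and $\delta_a$ on a self-looping state. Then $\T^\pi$ maps these to $\delta_0$ and $\delta_{\bar\gamma a}$, so the ratio of distances equals $\bar\gamma$ exactly, and no constant smaller than $\bar\gamma$ works under the hypothesis $\tilde\gamma\le\bar\gamma$ alone. Dirac laws are finite atomic, so they fall under the remark following (A1).

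I expect the main obstacle to be the measurable-coupling step inside the Bellman contraction, specifically ensuring that the shared-$H$ coupling is legitimate when $\tilde\gamma$ is random and correlated with $(S',A')$. Conditioning on $H$, and relying on finiteness of the condition set, should dispose of it.
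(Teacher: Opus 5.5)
Your proposal is correct and follows the paper's proof essentially step for step. It derives the identities in (i) from $\mathcal E\mathcal L=\mathrm{id}$ with induction for the powers, shows fixed points are canonical and pins them down by uniqueness of $\eta^\pi$, gets the $\beta_p$ bound from a shared draw of $H$ with conditionally independent optimal $W_p$ couplings of the successor laws, obtains the $n$-step bound from the power formula, and uses the single-condition two-point-mass example for sharpness, while your extra checks (moment closure under $\mathcal T^\pi$, and injectivity of $\mathcal L$ making $D_p$ well defined) are harmless refinements the paper leaves implicit.
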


\begin{theorem*}[Bellman conjugacy characterises the affine retiming; full statement]{thm:conjugacy}
\label{thm:conjugacy-full}
Fix a nonterminal branch with $0<\tilde\gamma<1$ and its coupling, and let $v'$ be the successor bridge's flow-matching field under that same coupling. Among affine lifts $X_t=tr+\varphi(t)X'_{\vartheta(t)}$ normalised by $\varphi(0)=1,\vartheta(0)=0$, with $\varphi,\vartheta$ absolutely continuous on $[0,1)$ and $\vartheta$ valued in $[0,1]$ as in (A8), read through the endpoint readout $q_v$ of Definition~\ref{def:operator}:

\emph{(i)} with $\xi=(z-tr)/\varphi(t)$ and $\B^{\varphi,\vartheta}v$ the field of the lifted bridge, the conjugacy $q_{\B^{\varphi,\vartheta}v}=r+\tilde\gamma\,q_v\circ(\xi,\vartheta)$ holds for every field $v$ \emph{if and only if} $(\varphi,\vartheta)=(\alpha,\tau)$ of Eq.~\eqref{eq:clock}; an interior same-time query forces $\tilde\gamma=1$.

\emph{(ii)} For that unique pair, with $\Phi_{r,\tilde\gamma}(z,t)=\big((z-tr)/\alpha(t),\tau(t)\big)=:(\xi,\tau)$, the field of the backed-up bridge is
\begin{equation}
(\B_{r,\tilde\gamma}v')(z,t)=r-(1-\tilde\gamma)\,\xi+g(t)\,v'(\xi,\tau),
\label{eq:operator-full}
\end{equation}
which for canonical $v'$ is $\mathcal C^\pi$ on that branch.

\emph{(iii)} In readout coordinates the operator is the affine Bellman map, and inherits its contraction: with $d_q(v,w):=\sup_{z,\,t<1}\norm{q_v-q_w}=\sup_{z,t}(1-t)\norm{v-w}$,
\begin{equation}
q_{\B_{r,\tilde\gamma}v}(z,t)=r+\tilde\gamma\,q_v\big(\Phi_{r,\tilde\gamma}(z,t)\big),
\qquad
d_q(\B_{r,\tilde\gamma}v,\B_{r,\tilde\gamma}w)=\tilde\gamma\,d_q(v,w)
\label{eq:linearise}
\end{equation}
the second whenever $d_q(v,w)<\infty$, as an exact equality rather than a bound.

\end{theorem*}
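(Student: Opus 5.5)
The plan is to compute the field of a general affine lift in closed form. The conjugacy identity then becomes two pointwise coefficient identities, which reduce to two first-order ODEs in $(\varphi,\vartheta)$ with the normalisation $\varphi(0)=1,\vartheta(0)=0$.

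\emph{Step 1: field of the lifted bridge.} Differentiate $X_t=tr+\varphi(t)X'_{\vartheta(t)}$ with $X'_u=(1-u)X_0+uX_1'$. This gives $\dot X_t=r+\varphi'(t)X'_{\vartheta(t)}+\varphi(t)\vartheta'(t)(X_1'-X_0)$. Wherever $\varphi(t)\neq0$, the events $\{X_t=z\}$ and $\{X'_{\vartheta(t)}=\xi\}$ coincide given the branch, with $\xi=(z-tr)/\varphi(t)$. Conditioning therefore gives
\[
(\B^{\varphi,\vartheta}v)(z,t)=r+\varphi'(t)\,\xi+\varphi(t)\vartheta'(t)\,v(\xi,\vartheta(t)).
\]
Here $v$ is the successor bridge field under the same coupling, and the regular versions come from (A5). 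At the start I only know $\varphi>0$ near $t=0$, by continuity from $\varphi(0)=1$. So I will first work on the maximal interval where $\varphi>0$, and show at the end that this interval is all of $[0,1)$ (the (A8) consequence).

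\emph{Step 2: reduce (i) to ODEs.} Substitute $z=tr+\varphi\xi$ into $q_{\B v}(z,t)=z+(1-t)\B v(z,t)$. This gives
\[
q_{\B v}=r+\bigl[\varphi+(1-t)\varphi'\bigr]\xi+(1-t)\varphi\vartheta'\,v(\xi,\vartheta).
\]
The target is $r+\tilde\gamma\xi+\tilde\gamma(1-\vartheta)v(\xi,\vartheta)$. Because the identity must hold for every field $v$, I can test with $v\equiv0$ and with $v\equiv e$ a constant. This separates the coefficients a.e. in $t$:
\[
\varphi+(1-t)\varphi'=\tilde\gamma,\qquad (1-t)\varphi\vartheta'=\tilde\gamma(1-\vartheta).
\]
The first equation is linear: $(\varphi-\tilde\gamma)'=-(\varphi-\tilde\gamma)/(1-t)$. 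Hence $\varphi-\tilde\gamma=C(1-t)$, and $\varphi(0)=1$ forces $\varphi=\alpha$. Since $\alpha>0$ on $[0,1)$, the positivity interval is everything.

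The second equation becomes $-\frac{d}{dt}\log(1-\vartheta)=\tilde\gamma/((1-t)\alpha)$. A direct computation with $1-\tau=(1-t)/\alpha$ gives $\frac1{1-t}+\frac{\alpha'}{\alpha}=\frac{\tilde\gamma}{(1-t)\alpha}$, so $\tau$ solves it. This linear Carath\'eodory ODE with $\vartheta(0)=0$ has a unique absolutely continuous solution, so $\vartheta=\tau$, and $\tau<1$ follows.

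Conversely, $(\alpha,\tau)$ satisfies both identities, so the conjugacy holds for every $v$. For the same-time case, an interior query $\vartheta(t)=t$ makes the second equation read $\varphi=\tilde\gamma$. Combined with the first equation, this gives $\varphi$ constant, and then $\varphi(0)=1$ forces $\tilde\gamma=1$.

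\emph{Step 3: parts (ii) and (iii).} Plugging in $\varphi'=-(1-\tilde\gamma)$ and $\alpha\tau'=\tilde\gamma/\alpha=g$ gives Eq.~\eqref{eq:operator-full}.

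For the canonical case, take $v'=\mathcal L\eta_{s',a'}$ with independent $X_0\indep X_1'$. Then the lifted path is exactly the straight path $(1-t)X_0+t(r+\tilde\gamma X_1')$, with $X_0$ independent of the backed-up endpoint. Its conditional-mean field is therefore $\mathcal L$ of the branch law, which is $\mathcal C^\pi$ on that branch.

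For (iii), the readout identity is (i) with $(\alpha,\tau)$. Subtracting the identities for $v$ and $w$ gives $q_{\B v}-q_{\B w}=\tilde\gamma\,(q_v-q_w)\circ\Phi_{r,\tilde\gamma}$. For exact equality of the suprema, rather than just $\le$, I will check that $\Phi_{r,\tilde\gamma}$ maps $\R^d\times[0,1)$ bijectively onto $\R^d\times[0,1)$. This holds because $\xi$ is an affine bijection in $z$ for each $t$, and $\tau$ is a continuous strictly increasing bijection $[0,1)\to[0,1)$.

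\emph{Main obstacle.} I expect the main obstacle to be Step 1's rigour rather than the algebra. Three points need care: identifying $\E[\dot X_t\mid X_t=z]$ with $v(\xi,\vartheta(t))$ on null sets, using the jointly measurable versions from (A5); justifying the coefficient separation a.e. in $t$ when $\varphi,\vartheta$ are only absolutely continuous; and carrying out the bootstrap argument that excludes $\varphi$ vanishing before it has been identified as $\alpha$. I would handle the last point by restricting to the maximal interval $[0,t_*)$ where $\varphi>0$. On that interval $\varphi=\alpha$, and $\alpha(t_*)>0$ then contradicts $t_*<1$.
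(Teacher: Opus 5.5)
Your proposal is correct and follows the paper's proof essentially step for step. You compute the lifted field $r+\varphi'\xi+\varphi\vartheta'\,v(\xi,\vartheta)$, separate coefficients into $\varphi+(1-t)\varphi'=\tilde\gamma$ and $(1-t)\varphi\vartheta'=\tilde\gamma(1-\vartheta)$, use the maximal-interval argument to show $\varphi=\alpha$ never vanishes, get $\vartheta=\tau$ from uniqueness for the linear ODE, and transfer the supremum exactly through the bijection $\Phi_{r,\tilde\gamma}$ of $\mathbb{R}^d\times[0,1)$.

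The remaining differences are minor. You identify the canonical case of (ii) directly as the independent straight Bellman bridge, whereas the paper goes through readout injectivity. The paper also reads the same-time clause pointwise, as a single interior $t$ with $\vartheta(t)=t$. That version follows at once from your $\vartheta=\tau$ and $t-\tau(t)=(1-\tilde\gamma)t(1-t)/\alpha(t)$, while your ODE version needs $\vartheta'=1$ on an interval.
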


The full statements below use $\F=\sigma(X_t)$ and $\HH=\sigma(X_t,H)$. Fixing $c,t$,
\begin{equation}
A_t=R-(1-\tilde\gamma)\xi_t,\quad
D=v_{\rho_1}^\star(\tau,\xi_t\mid H),\quad
w=\E[Y\mid\HH]=A_t+gD.
\label{eq:rb}
\end{equation}
Here $A_t$ is the part of the displacement the transition determines, $D$ the matched oracle of Eq.~\eqref{eq:oracle} at the retimed point, and $w$ the ideal target given everything the backup reveals. Part (i) below requires no Gaussian returns, no independent source--endpoint pair, and no deterministic rewards.

\begin{theorem*}[Conditional centering and realisation; full statement]{thm:centering}
\label{thm:centering-full}
\emph{(i) Centering.} Suppose the teacher is Eq.~\eqref{eq:oracle}, $a_t$ is $\HH$-measurable, and the target is square-integrable. For every nonterminal backup,
\begin{equation}
\E[C_{\tau(t)}\mid\HH]=0,\qquad
\E[u_a\mid\F]=\E[Y\mid\F]=v_t^\star(X_t\mid c).
\label{eq:centered}
\end{equation}
Thus every such gain has the same population regression target. Independence is a sufficient way to match an independent-CFM teacher, but it is not necessary.

\emph{(ii) Realisation.} Assume (A5) and independent fresh generation (A6): $X_1'\sim\mathcal E(v)_{S',A'}$, independent of $(X_0,R,\tilde\gamma,t)$ given $(S',A')$, as at $\rho_1=0$ in Eq.~\eqref{eq:coupling}; and let the teacher be the critic's own field $v$. Then
\begin{equation}
\E[u_\kappa\mid X_t,t,c]=(\mathcal C^\pi v)_c(X_t,t)+\E\big[\kappa g\,(v-\mathcal L\mathcal E v)_{S',A'}(\xi_t,\tau)\bigm|X_t,t,c\big],
\label{eq:realise-full}
\end{equation}
so on canonical fields the population regression target is $\mathcal C^\pi v$ for every admissible $\kappa$, and the expected squared loss is $\E\norm{f-\mathcal C^\pi v}^2$ plus a term that does not depend on the regressor $f$.

\emph{(iii) Fixed point and contraction.} On canonical fields the self-consistent iteration is therefore $\mathcal C^\pi$-iteration under any admissible gains, with fixed point $v^\pi$ and rate $\beta_p$.
\end{theorem*}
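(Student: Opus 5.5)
We prove the three parts in order, each reducing to a conditional-expectation identity plus results already stated.

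\emph{Part (i), centering.} By Eq.~\eqref{eq:clock}, on a nonterminal branch the query is $\xi_t=(X_t-tR)/\alpha(t)$, so $\xi_t$ is $\HH$-measurable; the affine identity Eq.~\eqref{eq:affine} gives $X'_{\tau(t)}=\xi_t$ exactly. By (A5) the oracle $v^\star_{\rho_1}(u,b\mid h)$ is a jointly measurable version of $\E[X_1'-X_0\mid X'_u=b,H=h]$. Since $\sigma(X'_{\tau},H)=\HH$ at fixed $(c,t)$ (the map $X_t\leftrightarrow\xi_t$ is a bijection given $H$ because $\alpha(t)>0$ for $t<1$), we get $D=\E[X_1'-X_0\mid\HH]$, hence $\E[C_{\tau(t)}\mid\HH]=0$. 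Because $a_t$ is $\HH$-measurable (A3) and the target is square-integrable (A4), $\E[a_tC_{\tau(t)}\mid\HH]=a_t\E[C_{\tau(t)}\mid\HH]=0$. The tower property over $\F\subset\HH$ then gives $\E[u_a\mid\F]=\E[Y\mid\F]$. Finally $\E[Y\mid X_t]$ equals the straight flow-matching field $v^\star_t(X_t\mid c)$ of the Bellman endpoint law, since $Y=X_1-X_0$ with $X_1=R+\tilde\gamma X_1'$ and $X_0$ independent of $X_1$ by (A5). That independence is needed only here, to identify $\E[Y\mid\F]$ as $v^\star_t$. The remark that independence is sufficient but not necessary follows because any coupling whose oracle coincides with the independent-CFM field works equally well.

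\emph{Part (ii), realisation.} With the critic's own field as teacher, split
\[
u_\kappa=Y+\kappa g\{v_{S',A'}(\tau,\xi_t)-(X_1'-X_0)\}.
\]
Add and subtract $(\mathcal L\mathcal E v)_{S',A'}(\xi_t,\tau)$. Under (A6), given $(S',A')$ the pair $(X_0,X_1')$ is independent with $X_1'\sim\mathcal E(v)_{S',A'}$, and $X_0$ is independent of $H$ and $T$. So the law of $(X_0,X_1')$ given $H$ is the independent coupling, and the matched oracle $v^\star_0(\tau,b\mid H)$ equals $(\mathcal L\mathcal E v)_{S',A'}(b,\tau)$. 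This is the self-consistency of Lemma~\ref{lem:selfconsistency}. Part (i) applied to the oracle kills the term $\E[\kappa g\{\mathcal L\mathcal E v-(X_1'-X_0)\}\mid\HH]$. What remains is $\E[Y\mid X_t,t,c]$ plus the displayed defect $\kappa g(v-\mathcal L\mathcal E v)$.

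Next identify $\E[Y\mid X_t,t,c]$ with $(\mathcal C^\pi v)_c(X_t,t)=\mathcal L\T^\pi\mathcal E v$. This holds because $X_1=R+\tilde\gamma X_1'$ has law $(\T^\pi\mathcal E v)_c$ and is independent of $X_0$, and $\mathcal L$ is by definition the conditional mean of $X_1-X_0$ given the interpolant. On canonical fields $v=\mathcal L\eta$, and (A1) gives $\mathcal E\mathcal L=\mathrm{id}$, so $\mathcal L\mathcal E v=v$ and the defect vanishes. The loss decomposition is the standard Pythagorean identity for $L^2$ regression onto $\sigma(X_t,t,c)$-measurable regressors.

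\emph{Part (iii)} then follows: on canonical fields the exact population regression output is $\mathcal C^\pi v=\mathcal L(\cdot)$, which is again canonical. The iteration therefore stays canonical, and Proposition~\ref{prop:operator-full} supplies the fixed point $v^\pi$ and the rate $\beta_p$.

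The main obstacle is the measure-theoretic step in (i): justifying that the regular version $v^\star_{\rho_1}(\tau(t),\xi_t\mid H)$ is a version of $\E[X_1'-X_0\mid\HH]$. Evaluating a jointly measurable kernel at the random point $(\tau(t),\xi_t,H)$ and using the bijection $X_t\leftrightarrow\xi_t$ needs the disintegration over $H$ from (A5). A second delicate point is the identification in (ii) that the coupling-matched oracle equals $\mathcal L\mathcal E v$, which requires the full conditional independence of (A6), not merely the label $\rho_1=0$.
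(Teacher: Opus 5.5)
\textbf{Where the proposal matches the paper.} Your core argument for (i) is the paper's: $\sigma(X_t,H)=\sigma(\xi_t,H)$ together with the (A5) version of the oracle gives $D=\E[C\mid\HH]$, you pull out the $\HH$-measurable gain, and the tower property takes you down to $\F$. Parts (ii) and (iii) also follow the paper. In (ii) you use Lemma~\ref{lem:selfconsistency}, add and subtract $\mathcal L\mathcal E v$, and apply $L^2$ orthogonality. In (iii) you use closure of canonical fields under $\mathcal C^\pi$ and Proposition~\ref{prop:operator-full}.

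\textbf{The error: the last step of (i).} (A5) makes $X_0$ independent of $H$, not of $X_1'$. Under Eq.~\eqref{eq:coupling} with $\rho_1>0$, the successor endpoint depends on $X_0$, and so does $X_1=R+\tilde\gamma X_1'$. Part (i) is claimed for every coupling with its matched oracle $v^\star_{\rho_1}$. So $\E[Y\mid\F]$ is in general \emph{not} the independent-source field of the Bellman endpoint law. The Gaussian example in the proof of Corollary~\ref{cor:fixedpoint}(b), at $\rho_1=1$, has the same endpoint law as the canonical bridge but a different field. As written, your (i) either rests on a false independence or silently restricts to $\rho_1=0$ with (A6).

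\textbf{The repair.} Read $v^\star_t(\cdot\mid c)$ as the paper does: the flow-matching field $\E[X_1-X_0\mid X_t]$ of the actual, possibly coupled, current bridge. Since $Y=X_1-X_0$ pathwise, $\E[Y\mid\F]=v^\star_t(X_t\mid c)$ is then immediate, and (i) uses no independence at all.

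This also corrects your gloss on the remark, and your claim that independence is ``needed only here'' gets it backwards. The point is that centering holds for \emph{every} coupling paired with its own matched teacher. For example, in one dimension $X_1'=2X_0$ gives a recoverable chord and exact centering. Independence (A6) is needed only in (ii), for two things: to make the reused critic the matched teacher, and to make $\E[Y\mid X_t,t,c]$ the canonical field $\mathcal L\T^\pi\mathcal E v$. There your use of it is correct.

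\textbf{A small omission.} The $\F$-level identity averages over all backups. You should note that terminal branches use $Y=R-X_0$ with zero correction (A9), so $u_a=Y$ there trivially.
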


\begin{proposition*}[Exact local risk and teacher-error bias; full statement]{prop:risk}
\label{prop:risk-full}
Write the frozen teacher as $\bar v(\tau,\xi_t\mid H)=D+e$, so that $e$, measurable under $\HH$, is exactly its error against the matched oracle, and let $S=\E[\norm{(X_1'-X_0)-D}^2\mid\HH]$ be the residual bridge noise the sampled chord carries about that oracle.

\emph{(i) Local risk.} For any $\HH$-measurable $\kappa$,
\begin{align}
u_\kappa-w&=(1-\kappa)g\{(X_1'-X_0)-D\}+\kappa ge,\label{eq:error-id}\\
\E[\norm{u_\kappa-w}^2\mid\HH]&=g^2\{(1-\kappa)^2S+\kappa^2\norm{e}^2\}.
\label{eq:localrisk}
\end{align}
The oracle risk-minimizing fraction is $\kappa^\star=S/(S+\norm e^2)$ when the denominator is positive. So $\kappa$ interpolates between the two error sources by their relative size, and $\kappa=1$ is not a premise but the $e\to0$ corollary: only with an exact teacher does full correction remove all within-transition bridge noise at no cost.

\emph{(ii) Teacher error and population bias.} For a learned teacher with $\kappa ge\in L^2$, the population field is
\begin{equation}
v_\kappa(z)=v_t^\star(z)+b_t(z),\qquad
b_t(z)=\E[\kappa g e\mid X_t=z,c],\qquad
\norm{b_t}_{L^2}\leq\norm{\kappa ge}_{L^2}.
\label{eq:bias}
\end{equation}
If the teacher was calibrated under another coupling $\widetilde\rho_1$, its error separates as
\begin{equation}
e=\underbrace{\bar v-v_{\widetilde\rho_1}^\star}_{\text{teacher approximation}}
+\underbrace{v_{\widetilde\rho_1}^\star-v_{\rho_1}^\star}_{\text{coupling mismatch}},
\label{eq:teachererror}
\end{equation}
with all fields evaluated at the \emph{actual} retimed query, so the norms must be measured under that query law rather than on the teacher's training distribution. Retiming removes the intrinsic query mismatch; it removes neither term in Eq.~\eqref{eq:teachererror}.
\end{proposition*}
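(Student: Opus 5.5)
The two parts of Proposition~\ref{prop:risk-full} will be proved in order, both working conditionally on $\HH=\sigma(X_t,H)$ at fixed $(c,t)$.

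\textbf{Part (i).} First I derive the error identity Eq.~\eqref{eq:error-id} by direct algebra.
\begin{itemize}
\item By Eq.~\eqref{eq:identity} and Eq.~\eqref{eq:rb}, $Y=A_t+g(X_1'-X_0)$ and $w=A_t+gD$.
\item By Eq.~\eqref{eq:target}, $u_\kappa=A_t+g\{(1-\kappa)(X_1'-X_0)+\kappa(D+e)\}$.
\item Subtracting $w$ and writing $gD=(1-\kappa)gD+\kappa gD$ gives $u_\kappa-w=(1-\kappa)g\{(X_1'-X_0)-D\}+\kappa g e$.
\end{itemize}
For the local risk Eq.~\eqref{eq:localrisk}, I expand the squared norm conditionally on $\HH$.
\begin{itemize}
\item The factors $g$, $\kappa$ and $e$ are $\HH$-measurable: $g$ depends only on $t$ and $\tilde\gamma$, $\kappa$ by (A3), and $e$ because $\xi_t$ is a function of $(X_t,H)$.
\item The cross term is $2(1-\kappa)\kappa g^2\inner{\E[(X_1'-X_0)-D\mid\HH]}{e}$.
\item By Theorem~\ref{thm:centering-full}(i) and the definition Eq.~\eqref{eq:oracle}, $D=\E[X_1'-X_0\mid\HH]$, because $X'_{\tau}=\xi_t$ is $\HH$-measurable and conditioning on $(X'_\tau,H)$ equals conditioning on $(X_t,H)$. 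So the cross term vanishes.
\item The remaining two terms give $g^2\{(1-\kappa)^2S+\kappa^2\norm e^2\}$. Square-integrability (A4) makes all conditional expectations finite.
\end{itemize}
The optimal fraction follows by minimising the quadratic $(1-\kappa)^2S+\kappa^2\norm e^2$ in $\kappa$. Its derivative is $-2(1-\kappa)S+2\kappa\norm e^2$, which vanishes at $\kappa^\star=S/(S+\norm e^2)$, and $\kappa^\star\in[0,1]$.

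\textbf{Part (ii).} I take conditional expectations of $u_\kappa=w+(u_\kappa-w)$ given $\F=\sigma(X_t)$.
\begin{itemize}
\item By the tower property and the cancellation above, $\E[(1-\kappa)g\{(X_1'-X_0)-D\}\mid\F]=0$.
\item By Theorem~\ref{thm:centering-full}(i), $\E[w\mid\F]=\E[Y\mid\F]=v_t^\star$.
\item Hence $v_\kappa=v_t^\star+\E[\kappa g e\mid X_t,c]$, which is Eq.~\eqref{eq:bias}.
\item The norm bound $\norm{b_t}_{L^2}\le\norm{\kappa g e}_{L^2}$ is conditional Jensen, i.e.\ the $L^2$-contraction of conditional expectation.
\end{itemize}
The decomposition Eq.~\eqref{eq:teachererror} is an add-and-subtract of $v^\star_{\widetilde\rho_1}$ evaluated at the actual query $(\tau,\xi_t,H)$. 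The remark about measuring norms under the query law just records where $e$ is evaluated inside $b_t$.

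\textbf{Main obstacle.} The only delicate step is justifying that $D$ is exactly $\E[X_1'-X_0\mid\HH]$. This needs two facts. First, $\sigma(X'_{\tau(t)},H)=\sigma(X_t,H)$, which holds because the map $X_t\mapsto\xi_t=(X_t-tR)/\alpha(t)$ is invertible given $H$ with $\alpha>0$. Second, the regular-version measurability in (A5), so that plugging the random query into $v^\star_{\rho_1}(u,b\mid h)$ gives a version of the conditional expectation. The terminal case $\tilde\gamma=0$ is handled by (A9), where the correction is zero and both identities hold trivially.
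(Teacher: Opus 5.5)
Your proposal is correct and follows essentially the same route as the paper. The error identity comes from subtracting $w=A_t+gD$. The cross term vanishes because $D=\E[X_1'-X_0\mid\HH]$, since the retimed query is $\HH$-measurable with $\sigma(X_t,H)=\sigma(\xi_t,H)$. $\kappa^\star$ follows from the first-order condition. Part (ii) is the tower property, conditional Jensen, and adding and subtracting $v^\star_{\widetilde\rho_1}$ at the actual query.

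The paper's proof has two extras. It notes that $\kappa^\star$ is the unique minimiser when $S+\norm{e}^2>0$, and that every fraction is optimal when both terms vanish. It also gives a triangle-inequality bound on $\norm{b_t}_{L^2}$ for constant $\kappa$. Your argument lacks neither in substance.
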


\begin{proposition}[Shared noise: exact under its matched teacher, which the reused critic is not]
\label{prop:leak}
\label{rem:leak}\label{thm:leak}
Let the transition be stochastic and $\rho_1=1$. (i) If the chord $X_1'-X_0$ is measurable with respect to the query and $H$, the coupling-matched teacher returns the sampled chord, so the corrected target is the sample velocity of Eq.~\eqref{eq:identity} at every gain and, under (A7), the population flow reaches the true Bellman law, at either clock. In particular, although conditioning on the current bridge point as $t\to0$ conditions on the shared source alone, so that the fitted early-time velocity is the shared-seed average of the branch maps --- in one dimension with increasing branch maps, the initial velocity toward the $W_2$ barycenter of the Bellman branch endpoint laws --- this does not fix the terminal law: branch information can re-enter through the posterior at later times. (ii) The critic the algorithm reuses is a field $v$ whose time-one flow map is the generator $\psi$. If the chord is recoverable for almost every query and the flow of $v$ is unique, $v$ equals the matched teacher almost everywhere on the query law only if the trajectories of $v$ are straight lines traversed at constant speed, which a flow-matching field's trajectories generally are not.
\end{proposition}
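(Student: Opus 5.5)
The proof has two parts. Part (i) is a measurability argument followed by the tower property. Part (ii) is a rigidity argument that compares the matched oracle with the critic's own field along its flow.

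\emph{Part (i).} At $\rho_1=1$ the generation noise is $X_0'=X_0$, so $X_1'=\psib^{\,1}(X_0\mid S',A')$. Whichever clock is used, the query is $b=X'_{u(t)}$ with $u(t)=t$ or $u(t)=\tau(t)$. If the chord $X_1'-X_0$ is $\sigma(b,H)$-measurable, the matched oracle of Eq.~\eqref{eq:oracle} satisfies
\[
v^\star_{1}(u,b\mid H)=\E[X_1'-X_0\mid b,H]=X_1'-X_0 .
\]
Hence $C_{u(t)}=0$ identically, and $u_\kappa=Y$ for every gain.

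The population regression of $Y$ on $X_t$ gives $\E[Y\mid X_t]$. This is the conditional-mean velocity of the pair $(X_0,X_1)$ with $X_1=R+\gt X_1'$, whose marginal law at $t=1$ is $\T^\pi\eta$. Under (A7) the continuity equation of the bridge marginals has a unique solution, and the flow of this conditional-mean field transports $\cN(0,I)$ to that marginal, following the weak argument of Lemma~\ref{lem:flow-matching}. So the endpoint is the Bellman law at either clock.

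For the $t\to0$ remark, note that $X_t\to X_0$. The fitted velocity therefore tends to
\[
\E[Y\mid X_0]=\E_H\bigl[R+\gt\psib^{\,1}(X_0\mid S',A')\bigr]-X_0 ,
\]
which is the shared-seed average of the branch maps. In one dimension, when the branch maps are increasing (monotone maps of the common Gaussian), this average is the $W_2$ barycenter map. At later times, however, $X_t$ reveals $H$ through the posterior, so the terminal law is still the Bellman law and not the barycenter.

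\emph{Part (ii).} Take the same-time clock first. Assume $v(t,b)$ equals the matched teacher almost everywhere on the query law, and write $\psi_s$ for the flow map of $v$. The query is $b=(1-t)x_0+t\,\psi(x_0)$, and the matched teacher evaluated at $b$ returns $\psi(x_0)-x_0$. Consider the straight curve $\gamma(t)=(1-t)x_0+t\psi(x_0)$. It starts at $x_0$, and the hypothesis gives
\[
v(t,\gamma(t))=\psi(x_0)-x_0=\dot\gamma(t).
\]
So $\gamma$ is an integral curve of $v$. By uniqueness of the flow it coincides with the trajectory $\psi_t(x_0)$. Every trajectory is therefore straight and traversed at constant speed.

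The retimed clock is handled the same way with $s=\tau(t)$, since $\tau$ is a bijection of $[0,1]$.

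\emph{Obstacle.} The main difficulty is upgrading ``almost everywhere on the query law'' to ``along every trajectory''. I would try to do this by restricting to $x_0$ outside a null set, using the joint measurability in (A5) and Fubini in $(t,x_0)$. Then, for almost every $x_0$, the identity holds for almost every $t$. Since $\gamma$ is absolutely continuous, this is enough to conclude via Carathéodory uniqueness. A secondary step is the final comment that flow-matching fields generally have curved trajectories. For this I would give a bimodal one-dimensional example and show that $\partial_t v+v\,\partial_x v\neq0$ there.
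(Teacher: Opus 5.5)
Your proposal is correct and follows the paper's proof. In (i), recoverability makes the matched oracle equal the chord, so the control variate vanishes and Lemma~\ref{lem:flow-matching} with (A7) gives the Bellman endpoint at either clock; in (ii), Fubini over (clock law)$\otimes\mathcal N(0,\Id_d)$ and uniqueness for absolutely continuous solutions force straight constant-speed trajectories, exactly as in your obstacle paragraph. The only divergence is the early-time remark: you argue in general by averaging quantile functions, whereas the paper gives a two-branch Gaussian example (terminal variance $\tilde\gamma^2(\sigma^2+a^2)$ against barycenter variance $\tilde\gamma^2\sigma^2$), and you need such an example because your unconditional ``not the barycenter'' fails when all branch endpoint laws coincide.
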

\noindent Its proof is in Appendix~\ref{app:pcbfkappa}.

\section{Proofs}
\label{app:proofs}

\begin{proof}[Proof of Corollary~\ref{cor:impossible}]
The identity must hold for every endpoint pair, so the coefficients of $X_0$ and $X_1'$ in $tR+\alpha(t)X'_t$ and in Eq.~\eqref{eq:current}, expanded with $X_1=R+\tilde\gamma X_1'$, match separately: $\alpha(1-t)=1-t$ and $\alpha t=\tilde\gamma t$. For $0<t<1$ they force $\alpha=1$ and $\alpha=\tilde\gamma$, incompatible for $0<\tilde\gamma<1$.
\end{proof}

\subsection{Notation}

\paragraph{Appendix shorthand.} In the dense algebra below we abbreviate the sampled chord as $C:=X_1'-X_0$. This is the \emph{chord}, not the control variate: the control variate is $C_{\tau(t)}=\bar v(\tau(t),X'_{\tau(t)}\mid H)-C$ of Eq.~\eqref{eq:cv}, which always carries a subscript. The multiplier is always written $\tilde\gamma$, and $q_v$ always denotes a readout (Table~\ref{tab:notation}).

\begin{table}[ht]
\centering
\caption{Notation and distinctions used throughout the analysis.}
\label{tab:notation}
\begin{tabularx}{\linewidth}{lX}
\toprule
Symbol&Meaning\\\midrule
$c=(s,a)$&Current state--action conditioning; fixed in local derivations.\\
$H=(R,S',A',\tilde\gamma)$&Observed backup and its actual Bellman multiplier.\\
$X_0,E,X_0'$&Bridge source, independent auxiliary seed, and successor source.\\
$X_1',C=X_1'-X_0$&Successor endpoint and successor bridge displacement (the chord).\\
$\tau(t)$&Query clock. Same-time: $\tau(t)=t$. ReBF: Eq.~\eqref{eq:clock}.\\
$X'_{\tau(t)},X_t$&Successor interpolant at the query clock, and current Bellman interpolant.\\
$\alpha,g$&Affine scale and geometric gain of the retimed clock.\\
$\kappa,a=\kappa g$&Bootstrap fraction and effective velocity coefficient.\\
$D,w$&Coupling-matched successor prediction and $\E[Y\mid X_t,H]$.\\
$e,S$&Teacher error at the actual query and conditional displacement energy.\\
$\rho_1,\widetilde\rho_1$&Training-bridge coupling and teacher-reference coupling.\\
$q_v$&Readout $z+(1-t)v(z,t)$ of a field $v$ (Definition~\ref{def:operator}).\\
$\beta_p$&Endpoint Wasserstein modulus (Proposition~\ref{prop:operator}).\\
$\beta_{\rho}(u)$&Slope of the correlated-Gaussian teacher, Eq.~\eqref{eq:gaussian}; a function of time, not $\beta_p$.\\
\bottomrule
\end{tabularx}
\end{table}

\subsection{Reusable lemmas}

\begin{lemma}[Flow matching for any coupling, including atomic endpoints]
\label{lem:flow-matching}
Let $(X_0,X_1)$ be any pair with finite first moments, $Z_t=(1-t)X_0+tX_1$, and let $v_t(z)=\E[X_1-X_0\mid Z_t=z]$ be a jointly measurable version. Then $p_t=\Law(Z_t)$ solves $\partial_tp_t+\nabla\!\cdot(p_tv_t)=0$ weakly on $[0,1)$. If the flow of $v$ is unique and this equation has a unique solution from $p_0$ in the class of (A7), the flow has marginals $p_t$. If $X_0,X_1\in L^p$, then $W_p(p_t,\Law(X_1))\to0$ as $t\uparrow1$.
\end{lemma}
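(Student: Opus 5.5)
The plan is to prove the weak continuity equation by testing against compactly supported smooth functions, then get the flow statement from the uniqueness hypotheses, and finally prove endpoint convergence by an explicit coupling.

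\emph{Continuity equation.} Fix $\phi\in C_c^\infty(\R^d)$. Since $t\mapsto Z_t$ is affine with derivative $X_1-X_0$, we have
\[
\frac{d}{dt}\E[\phi(Z_t)]=\E[\nabla\phi(Z_t)\cdot(X_1-X_0)].
\]
Differentiation under the expectation is justified by the dominating function $\norm{\nabla\phi}_\infty\norm{X_1-X_0}$, which is integrable because $X_0$ and $X_1$ have finite first moments; this is the use allowed in (A7). The tower property then gives
\[
\E[\nabla\phi(Z_t)\cdot(X_1-X_0)]=\E[\nabla\phi(Z_t)\cdot v_t(Z_t)]=\int\nabla\phi\cdot v_t\,dp_t .
\]
Joint measurability of the version $v$ makes $t\mapsto\int\nabla\phi\cdot v_t\,dp_t$ measurable and bounded. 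Integrating in time, this is exactly the weak (distributional) form of $\partial_tp_t+\nabla\cdot(p_tv_t)=0$ on $[0,1)$. Nothing here uses absolute continuity of $p_t$, so atomic endpoints are covered.

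\emph{Flow marginals.} Let $\mu_t$ be the pushforward of $p_0$ under the unique flow of $v$. By the chain rule along trajectories, $\mu_t$ also solves the same continuity equation with the same velocity $v$ and the same initial law $p_0$. Both curves lie in the class of (A7): $p_t$ has finite moments since $Z_t\in L^p$, and $\mu_t$ by the admissibility in (A1)/(A7). Uniqueness in that class then gives $\mu_t=p_t$.

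\emph{Endpoint convergence.} Use the coupling $(Z_t,X_1)$. It gives
\[
W_p(p_t,\Law(X_1))\le\norm{Z_t-X_1}_{L^p}=(1-t)\norm{X_0-X_1}_{L^p}\to0 \quad\text{as } t\uparrow1 .
\]

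The main obstacle is the middle step. Its content rests on the uniqueness hypothesis, since $v_t$ is only measurable and the superposition argument needs the flow-generated curve to be a weak solution in the right moment class. I would state it as a direct application of the hypothesis rather than derive uniqueness from regularity.
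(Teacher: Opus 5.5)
Your proposal is correct and follows the paper's proof essentially step for step: dominated differentiation of $\E[\phi(Z_t)]$ with the bound $\sup\norm{\nabla\phi}\,\norm{X_1-X_0}$ and the tower property give the weak continuity equation, the assumed uniqueness identifies the flow marginals with $p_t$, and the coupling $(Z_t,X_1)$ gives $W_p(p_t,\Law(X_1))\le(1-t)\norm{X_0-X_1}_{L^p}$. You also check that $p_t$ lies in the (A7) class, which the paper does not do, but that check uses $X_0,X_1\in L^p$; this is the hypothesis of the last claim (and of (A1)), not of the middle claim, which assumes only finite first moments.
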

\begin{proof}
For $\phi\in C_c^1(\R^d)$, $\norm{\frac{d}{dt}\phi(Z_t)}\le\sup\norm{\nabla\phi}\,\norm{X_1-X_0}\in L^1$, so dominated differentiation and the tower property give
\[
\frac{d}{dt}\E[\phi(Z_t)]=\E[\nabla\phi(Z_t)^\top(X_1-X_0)]=\E[\nabla\phi(Z_t)^\top\E[X_1-X_0\mid Z_t]]=\int\nabla\phi(z)^\top v_t(z)\,p_t(dz),
\]
which is the weak continuity equation. The marginals of the flow solve the same equation from $p_0$, so uniqueness identifies them with $p_t$. Finally $W_p(p_t,\Law(X_1))^p\le\E\norm{Z_t-X_1}^p=(1-t)^p\E\norm{X_0-X_1}^p\to0$. No density of $X_1$ is used, so atoms are covered.
\end{proof}

\begin{lemma}[Clock identities]
\label{lem:clock}
Let $0<\tilde\gamma<1$. The identity
\begin{equation}
(1-t)x_0+t(r+\tilde\gamma x_1')=tr+\alpha(t)\big\{(1-\tau(t))x_0+\tau(t)x_1'\big\}
\label{eq:clock-affine-proof}
\end{equation}
holds for every $(x_0,x_1')\in\R^d\times\R^d$ if and only if
\[
\alpha(t)=1-(1-\tilde\gamma)t,\qquad \tau(t)=\frac{\tilde\gamma t}{\alpha(t)},\qquad\text{and then}\qquad g(t):=\alpha(t)\tau'(t)=\frac{\tilde\gamma}{\alpha(t)}.
\]
Moreover $\alpha(0)=1$, $\tau(0)=0$, $\alpha(1)=\tilde\gamma$, $\tau(1)=1$, and
\begin{align*}
&t-\tau(t)=\frac{(1-\tilde\gamma)\,t(1-t)}{\alpha(t)},\qquad
\max_{0\le t\le1}\big\{t-\tau(t)\big\}=\frac{1-\sqrt{\tilde\gamma}}{1+\sqrt{\tilde\gamma}}\ \text{ at }\ t=\frac1{1+\sqrt{\tilde\gamma}},\\
&g'(t)=\frac{\tilde\gamma(1-\tilde\gamma)}{\alpha(t)^2}\ge0,\qquad \tilde\gamma\le g(t)\le1,\qquad
\frac{\tau}{1-\tau}=\tilde\gamma\,\frac{t}{1-t},\qquad g^2\,dt=\tilde\gamma\,d\tau,\\
&1-\tau=\frac{1-t}{\alpha},\qquad (1-t)\,g=\tilde\gamma\,(1-\tau).
\end{align*}
At fixed $c$ and $t\le1$, the retimed query $\xi_t=(X_t-tR)/\alpha_H(t)$ is $\HH$-measurable and $\sigma(X_t,H)=\sigma(\xi_t,H)$.
\end{lemma}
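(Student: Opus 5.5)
The plan is to prove the affine identity by coefficient matching, then derive every listed identity by direct algebra from the closed forms of $\alpha$ and $\tau$.

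\textbf{Step 1: the characterisation.} Both sides of Eq.~\eqref{eq:clock-affine-proof} are affine in $(x_0,x_1')$. The $r$ terms already agree ($tr$ on each side). The identity therefore holds for every pair if and only if the $x_0$ and $x_1'$ coefficients match, exactly as in the proof of Corollary~\ref{cor:impossible}:
\[
\alpha(1-\tau)=1-t,\qquad \alpha\tau=\tilde\gamma t.
\]
Adding the two equations gives $\alpha=1-t+\tilde\gamma t=1-(1-\tilde\gamma)t$. Since $\alpha>0$ on $[0,1]$ when $0<\tilde\gamma<1$ (it is linear from $1$ to $\tilde\gamma$), we may divide to get $\tau=\tilde\gamma t/\alpha$. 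Conversely, these values satisfy both equations, which gives the "if" direction.

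\textbf{Step 2: the listed identities, by direct computation.}
\begin{itemize}
\item The endpoint values $\alpha(0)=1$, $\tau(0)=0$, $\alpha(1)=\tilde\gamma$, $\tau(1)=1$ follow by substitution.
\item The quotient rule gives $\tau'=\tilde\gamma(\alpha-t\alpha')/\alpha^2=\tilde\gamma/\alpha^2$, because $\alpha-t\alpha'=1$. Hence $g=\alpha\tau'=\tilde\gamma/\alpha$.
\item Differentiating $g$ gives $g'=\tilde\gamma(1-\tilde\gamma)/\alpha^2\ge0$. So $g$ is nondecreasing, and $g(0)=\tilde\gamma$, $g(1)=1$ give the bounds $\tilde\gamma\le g\le1$.
\item From the first matching equation, $1-\tau=(1-t)/\alpha$.
\item Then $t-\tau=(t\alpha-\tilde\gamma t)/\alpha=t(1-\tilde\gamma)(1-t)/\alpha$.
\item The ratio of the two matching equations gives $\tau/(1-\tau)=\tilde\gamma t/(1-t)$.
\item Also $(1-t)g=\tilde\gamma(1-t)/\alpha=\tilde\gamma(1-\tau)$.
\item Finally $g^2\,dt=(\tilde\gamma^2/\alpha^2)\,dt$, while $\tilde\gamma\,d\tau=(\tilde\gamma^2/\alpha^2)\,dt$, so the two agree.
\end{itemize}

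\textbf{Step 3: the maximum of $t-\tau$.} This is the only step needing care. Write $f(t)=(1-\tilde\gamma)t(1-t)/(1-(1-\tilde\gamma)t)$ and set $s=1-\tilde\gamma$. Then $f'=0$ reduces, after clearing the denominator, to
\[
(1-2t)(1-st)+st(1-t)=0,\qquad\text{that is}\qquad st^2-2t+1=0.
\]
The root in $[0,1]$ is $t=(1-\sqrt{1-s})/s=(1-\sqrt{\tilde\gamma})/(1-\tilde\gamma)=1/(1+\sqrt{\tilde\gamma})$. Substituting this $t$ gives $1-t=\sqrt{\tilde\gamma}/(1+\sqrt{\tilde\gamma})$ and $\alpha=\bigl(1+\sqrt{\tilde\gamma}-(1-\tilde\gamma)\bigr)/(1+\sqrt{\tilde\gamma})=\sqrt{\tilde\gamma}(1+\sqrt{\tilde\gamma})/(1+\sqrt{\tilde\gamma})=\sqrt{\tilde\gamma}$. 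Therefore
\[
f=\frac{(1-\tilde\gamma)}{(1+\sqrt{\tilde\gamma})^2}=\frac{1-\sqrt{\tilde\gamma}}{1+\sqrt{\tilde\gamma}}.
\]
Because $f(0)=f(1)=0$ and $f\ge0$, this interior critical point is the maximum.

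\textbf{Step 4: measurability.} $\xi_t=(X_t-tR)/\alpha_H(t)$ is a Borel function of $(X_t,H)$, since $\alpha_H(t)>0$ for $t\le1$ on nonterminal branches. Conversely $X_t=tR+\alpha_H(t)\xi_t$ is a Borel function of $(\xi_t,H)$. Hence $\sigma(X_t,H)=\sigma(\xi_t,H)$.
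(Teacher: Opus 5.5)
Your proof is correct and follows essentially the same route as the paper's. You match coefficients to get $\alpha(1-\tau)=1-t$ and $\alpha\tau=\tilde\gamma t$, derive the listed identities by direct algebra, and locate the maximum of $t-\tau$ from the same critical-point equation; your $st^2-2t+1=0$ is the paper's $(1-t)^2=\tilde\gamma t^2$ expanded. As in the paper, you then use the mutually inverse measurable maps $X_t\mapsto(X_t-tR)/\alpha_H(t)$ and $\xi\mapsto tR+\alpha_H(t)\xi$, with $\alpha_H(t)\ge\tilde\gamma>0$, to get $\sigma(X_t,H)=\sigma(\xi_t,H)$.
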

\begin{proof}
Matching the coefficients of $x_0$ and $x_1'$ in Eq.~\eqref{eq:clock-affine-proof} gives $\alpha(1-\tau)=1-t$ and $\alpha\tau=\tilde\gamma t$; adding them gives $\alpha$, and dividing gives $\tau$; conversely these values satisfy both. Then $\alpha'=-(1-\tilde\gamma)$ and $\tau'=\tilde\gamma/\alpha^2$, so $g=\alpha\tau'=\tilde\gamma/\alpha$ and $g^2dt=(\tilde\gamma^2/\alpha^2)(\alpha^2/\tilde\gamma)\,d\tau=\tilde\gamma\,d\tau$. Subtraction gives $t-\tau=\{t\alpha-\tilde\gamma t\}/\alpha=(1-\tilde\gamma)t(1-t)/\alpha$, whose derivative is $(1-\tilde\gamma)\{(1-t)^2-\tilde\gamma t^2\}/\alpha^2$; its only zero in $(0,1)$ is $1-t=\sqrt{\tilde\gamma}\,t$, i.e.\ $t=1/(1+\sqrt{\tilde\gamma})$, where $\alpha=\sqrt{\tilde\gamma}$ and the value is $(1-\tilde\gamma)\sqrt{\tilde\gamma}/\{(1+\sqrt{\tilde\gamma})^2\sqrt{\tilde\gamma}\}=(1-\sqrt{\tilde\gamma})/(1+\sqrt{\tilde\gamma})$; the endpoint values are zero, so this is the maximum, and it is $O(1-\tilde\gamma)$ as $\tilde\gamma\uparrow1$. Differentiating $g=\tilde\gamma/\alpha$ gives $g'$, and $g(0)=\tilde\gamma$, $g(1)=1$ give the bounds. The identity $1-\tau=(1-t)/\alpha$ follows from $\alpha-\tilde\gamma t=1-t$; dividing $\tau=\tilde\gamma t/\alpha$ by it gives the odds identity; and multiplying it by $g$ gives $(1-t)g=\tilde\gamma(1-\tau)$. Finally, given $(t,H)$ the maps $X_t\mapsto(X_t-tR)/\alpha_H$ and $\xi\mapsto tR+\alpha_H\xi$ are mutually inverse measurable maps because $\alpha_H(t)\ge\tilde\gamma>0$.
\end{proof}

\paragraph{Sample velocity.} At fixed sampled endpoints, $X_t=tR+\alpha(t)X'_{\tau(t)}$ with $X'_u=(1-u)X_0+uX_1'$ has the velocity of Eq.~\eqref{eq:identity}, $Y=\dot X_t=R-(1-\tilde\gamma)\xi_t+g(t)\,C$: differentiating, $\dot X_t=R+\alpha'X'_{\tau}+\alpha\tau'\,\frac{d}{du}X'_u\big|_{u=\tau}=R-(1-\tilde\gamma)\xi_t+gC$, and $\dot X_t=R+\tilde\gamma X_1'-X_0$ directly from Eq.~\eqref{eq:current}. This proves Eq.~\eqref{eq:identity} directly, without substituting a model field for a sample derivative. That substitution is justified statistically by Theorem~\ref{thm:centering}, not by the chain rule alone.

\subsection{The population operator}
\label{app:population}

\begin{proof}[Proof of Proposition~\ref{prop:operator-full}]
(i) Since $\mathcal E\mathcal L=\mathrm{id}$ by (A1), $\mathcal E\mathcal C^\pi=\mathcal E\mathcal L\T^\pi\mathcal E=\T^\pi\mathcal E$ and $\mathcal C^\pi\mathcal L=\mathcal L\T^\pi\mathcal E\mathcal L=\mathcal L\T^\pi$. If $(\mathcal C^\pi)^n=\mathcal L(\T^\pi)^n\mathcal E$ for some $n\ge1$, then $(\mathcal C^\pi)^{n+1}=\mathcal L\T^\pi(\mathcal E\mathcal L)(\T^\pi)^n\mathcal E=\mathcal L(\T^\pi)^{n+1}\mathcal E$; the identity fails at $n=0$ off canonical fields, since $v\ne\mathcal L\mathcal Ev$ in general. A fixed point $v=\mathcal C^\pi v$ lies in the range of $\mathcal L$, and applying $\mathcal E$ gives $\mathcal E(v)=\T^\pi\mathcal E(v)$, so $\mathcal E(v)=\eta^\pi$ by uniqueness of the Bellman fixed law (Banach, using the contraction below on the complete product of $p$-Wasserstein spaces), and $v=\mathcal L\T^\pi\eta^\pi=\mathcal L\eta^\pi$; conversely $\mathcal L\eta^\pi$ is fixed.

(ii) Fix $c$ and use the same draw of $H$ in two backups. Given $(S',A')=c'$, couple $X\sim\eta_{c'}$ and $\widetilde X\sim\nu_{c'}$ optimally in $W_p$, independently of $(R,\tilde\gamma)$ given $c'$; this is a coupling of $(\T^\pi\eta)_c$ and $(\T^\pi\nu)_c$ in which the reward cancels, so
\begin{align*}
W_p^p\big((\T^\pi\eta)_c,(\T^\pi\nu)_c\big)&\le\E\big[\tilde\gamma^p\norm{X-\widetilde X}^p\bigm|c\big]=\E\big[\tilde\gamma^pW_p^p(\eta_{S',A'},\nu_{S',A'})\bigm|c\big]\\
&\le\E[\tilde\gamma^p\mid c]\max_{c'}W_p^p(\eta_{c'},\nu_{c'}).
\end{align*}
Take the maximum over $c$ and the $p$th root, and apply this to $\eta=\mathcal E(v)$, $\nu=\mathcal E(w)$; since $\mathcal C^\pi v=\mathcal L\T^\pi\mathcal E(v)$ is canonical, $D_p(\mathcal C^\pi v,\mathcal C^\pi w)=\max_cW_p((\T^\pi\mathcal E v)_c,(\T^\pi\mathcal E w)_c)$. The $n$-step bound follows from (i) by induction, and $\delta_p=D_p$ on canonical fields because $\mathcal E\mathcal L=\mathrm{id}$. With one condition, $R=0$, $\tilde\gamma\equiv\bar\gamma$ and two point masses, $W_p$ is multiplied by exactly $\bar\gamma$.

The short form, Eq.~\eqref{eq:contract}, is (i)--(ii) read in $\delta_p$: part (i) gives $\mathcal E\mathcal C^\pi=\T^\pi\mathcal E$ and the unique fixed point, and since $\mathcal C^\pi v$ and $\mathcal C^\pi w$ are canonical, $\delta_p(\mathcal C^\pi v,\mathcal C^\pi w)=D_p(\mathcal C^\pi v,\mathcal C^\pi w)$, so part (ii) gives Eq.~\eqref{eq:contract}. The bound $\beta_p\le\bar\gamma$ holds by (A1), and $\beta_p\le\gamma$ whenever every multiplier is at most the nominal discount, as for $\tilde\gamma\in\{0,\gamma\}$.
\end{proof}

\subsection{Conditional centering and realisation}

\begin{proof}[Proof of Theorem~\ref{thm:centering-full}(i)]
Fix $c,t$ and a nonterminal backup. By Lemma~\ref{lem:clock}, $\tau_H(t)$ is a function of $H$ and $\sigma(X_t,H)=\sigma(X'_{\tau_H},H)$ with $X'_{\tau_H}=\xi_t$. Given $H=h$ the clock is the constant $\tau_h$, and the version of Eq.~\eqref{eq:oracle} fixed in (A5) is, for each $h$, a conditional expectation under the conditional law of $(X_0,X_1')$ given $H=h$; evaluating it at the $H$-measurable clock therefore gives
\[
D=v^\star_{\rho_1}(\tau_H,\xi_t\mid H)=\E[C\mid X'_{\tau_H},H]=\E[C\mid\HH].
\]
With this teacher $C_{\tau(t)}=D-C$, so $\E[C_{\tau(t)}\mid\HH]=0$. The gain $a_t$ is $\HH$-measurable, and $a_t(D-C)=u_a-Y$ is integrable because the target is square-integrable and $Y$ is integrable by (A1) (under (A3) $a_t=\kappa g$ is simply bounded by one), so, $D-C$ being integrable as well, pulling out the $\HH$-measurable factor \citep{durrett2019} gives $\E[u_a-Y\mid\HH]=\E[a_t(D-C)\mid\HH]=a_t\E[D-C\mid\HH]=0$, and the tower property from $\HH$ to $\F$ gives $\E[u_a\mid\F]=\E[Y\mid\F]=v^\star_t(X_t\mid c)$, the flow-matching field of the current bridge. Terminal backups use $Y=R-X_0$ with zero correction (A9), for which $u_a=Y$, so the identity holds for the whole mixture of backups. With a random training time, (A5) makes conditioning on $T=t$ reproduce the fixed-$t$ law, so the same identities hold with $\sigma(c,T,X_T)$ and $\sigma(c,T,X_T,H)$. No Gaussian returns, independent source--endpoint pair or deterministic rewards are used. Independence is sufficient for the independent-CFM teacher to be the matched one (Lemma~\ref{lem:selfconsistency}) but not necessary for centering, which holds for every coupling with its own matched teacher. For a dependent example take $X_1'=2X_0$ in one dimension: at the query $X'_u=(1+u)X_0$ the chord $C=X_0$ is recoverable, so the matched oracle equals it and centering is exact despite perfect dependence.
\end{proof}

\begin{lemma}[At independent generation the critic is the matched teacher]
\label{lem:selfconsistency}
Assume (A5) and (A6) at every backup. For every $(S',A')$, the population flow-matching field of $\eta_{S',A'}=\mathcal E(v)_{S',A'}$ under an independent source --- the field the critic at $(S',A')$ is trained to represent --- equals the matched oracle $v^\star_{\rho_1}(\tau,\xi\mid H)$ of Eq.~\eqref{eq:oracle} for every branch $H$ whose successor is $(S',A')$, for every return law and every transition kernel.
\end{lemma}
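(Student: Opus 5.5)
The plan is to compute the matched oracle of Eq.~\eqref{eq:oracle} directly and recognise it as a flow-matching field. Fix a nonterminal branch $H=h=(r,s',a',\tilde\gamma)$ with successor $(S',A')=(s',a')$. By (A5), $X_0\sim\mathcal N(0,\Id_d)$ is independent of $H$. By (A6), $X_1'\sim\eta_{s',a'}=\mathcal E(v)_{s',a'}$ is independent of $(X_0,R,\tilde\gamma,T)$ given $(S',A')$. Together these determine the conditional law of the pair $(X_0,X_1')$ given $H=h$: it is the product $\mathcal N(0,\Id_d)\otimes\eta_{s',a'}$. This law does not depend on $r$ or $\tilde\gamma$, nor on the transition kernel beyond $(s',a')$. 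The first step is therefore this factorisation. It needs one check: conditioning on the full $H$, rather than only on $(S',A')$, leaves the law unchanged. That follows because the independence in (A6) is jointly from $(X_0,R,\tilde\gamma)$, and $X_0$ is independent of $H$.

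Second, I would use the regular version fixed in (A5). For each $h$, $v^\star_{\rho_1}(u,b\mid h)$ is a conditional expectation of $C=X_1'-X_0$ given $(1-u)X_0+uX_1'=b$, computed under this product law. But $\E[X_1-X_0\mid(1-u)X_0+uX_1=b]$ with $X_1\sim\eta_{s',a'}$ independent of a Gaussian $X_0$ is, by Definition~\ref{def:operator}, exactly $(\mathcal L\eta)_{s',a'}(b,u)$. This is the independent-source flow-matching field of $\eta_{s',a'}$, which is what the critic at $(s',a')$ is trained to represent; Lemma~\ref{lem:flow-matching} confirms it is the population minimiser of the flow-matching regression. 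Evaluating at $u=\tau_h(t)$ and $b=\xi$ gives $v^\star_{\rho_1}(\tau,\xi\mid h)=(\mathcal L\mathcal E v)_{s',a'}(\xi,\tau)$. The right-hand side depends on $h$ only through $(s',a')$, which yields the ``every return law and every transition kernel'' clause. Nothing in this step used Gaussianity of the returns or a particular form of the rewards.

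The main obstacle is the a.e.\ ambiguity of conditional expectations. Both sides are versions of a conditional expectation, so they agree only almost everywhere with respect to the law of the query $X'_\tau$ under the product measure. I would state the equality in that sense, noting that this is exactly the law on which the oracle is evaluated in Theorem~\ref{thm:centering-full}. For $\tau<1$ that law has a positive density, by the remark after (A7), so for a continuous version the identity holds pointwise. A second care point is joint measurability in $(u,b,h)$. This is needed so that substituting the random clock $\tau_H(t)$ is legitimate, and it is handled exactly as in the proof of Theorem~\ref{thm:centering-full}(i).
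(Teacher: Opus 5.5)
Your proposal is correct and follows the paper's proof essentially step for step: the paper likewise uses (A5) and (A6) to show that, given $H$, the pair $(X_0,X_1')$ has the product law $\mathcal N(0,\Id_d)\otimes\mathcal E(v)_{S',A'}$, the same for every branch ending at $(S',A')$. It then identifies the oracle's conditional expectation of the chord with $(\mathcal L\eta)_{S',A'}$ for almost every query, as you do. One small correction: Lemma~\ref{lem:flow-matching} proves that the conditional-mean field solves the continuity equation, not that it minimises the flow-matching regression. The minimiser property is the $L^2$-projection property of conditional expectation noted in Section~\ref{sec:prelim}, and nothing in your argument depends on it.
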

\begin{proof}
Write $G=(S',A')$ and $W=(R,\tilde\gamma)$. By (A6), $\Pr(X_1'\in B\mid X_0,W,G)=\Pr(X_1'\in B\mid G)=\mathcal E(v)_G(B)$, and by (A5) $X_0$ is independent of $H=(W,G)$. Hence
\begin{align*}
\Pr(X_0\in A,\ X_1'\in B\mid H)&=\E\big[\mathbf 1\{X_0\in A\}\Pr(X_1'\in B\mid X_0,W,G)\bigm|H\big]\\
&=\mathcal N(0,\Id_d)(A)\,\mathcal E(v)_G(B):
\end{align*}
given $H$, the pair $(X_0,X_1')$ is independent with the same law for every branch ending at $G$. Conditioning its chord on $(1-u)X_0+uX_1'$ therefore defines, for almost every query, the same regular conditional expectation as the successor critic's own independent training bridge, which is $(\mathcal L\eta)_G$.
\end{proof}

\begin{proof}[Proof of Theorem~\ref{thm:centering-full}(ii)--(iii)]
Under (A5) and (A6), Lemma~\ref{lem:selfconsistency} gives $\E[C\mid X_t,t,c,H]=(\mathcal L\mathcal Ev)_{S',A'}(\xi_t,\tau)$, the flow-matching field of the successor's generated law at the retimed query. Since $\kappa$ is $\HH$-measurable,
\[
\E[u_\kappa\mid X_t,t,c,H]=R-(1-\tilde\gamma)\xi_t+g\big\{(1-\kappa)(\mathcal L\mathcal Ev)+\kappa v\big\}_{S',A'}(\xi_t,\tau).
\]
Replacing the last $v$ by $\mathcal L\mathcal Ev$ leaves $\E[Y\mid X_t,t,c,H]$ (Eq.~\eqref{eq:identity}). By the proof of Lemma~\ref{lem:selfconsistency}, $X_0$ is independent of $(H,X_1')$, hence of the endpoint $R+\tilde\gamma X_1'$, whose law given $c$ is $(\T^\pi\mathcal E(v))_c$; so $\E[Y\mid X_t,t,c]$ is the independent-source flow-matching field of that law, namely $(\mathcal C^\pi v)_c$. What remains is $\E[\kappa g(v-\mathcal L\mathcal Ev)_{S',A'}(\xi_t,\tau)\mid X_t,t,c]$, which is Eq.~\eqref{eq:realise-full}. For square-integrable $u_\kappa$ (A4), orthogonality of conditional expectation, $\E\norm{f-U}^2=\E\norm{f-M}^2+\E\norm{U-M}^2$ for $U\in L^2$, $M=\E[U\mid\mathcal G]$ and square-integrable $\mathcal G$-measurable $f$ \citep{durrett2019}, applied with $U=u_\kappa$ and $\mathcal G=\sigma(X_t,t,c)$, gives the loss decomposition. If $v$ is canonical the defect vanishes, and $\mathcal C^\pi$ maps every admissible field to a canonical one, so the population iterates follow $\mathcal C^\pi$ and Proposition~\ref{prop:operator-full} gives the fixed point $v^\pi$ and the rate $\beta_p$.
\end{proof}

\subsection{Bellman conjugacy}

\begin{proof}[Proof of Theorem~\ref{thm:conjugacy-full}]
(i) Under (A8), let $z=tr+\varphi(t)\xi$ on the interval where $\varphi\ne0$. Differentiating the lifted path $tr+\varphi(t)X'_{\vartheta(t)}$ gives the lifted field $(\B^{\varphi,\vartheta}v)(z,t)=r+\varphi'\xi+\varphi\vartheta'v(\xi,\vartheta)$, whose readout is
\[
q_{\B^{\varphi,\vartheta}v}(z,t)=r+\{\varphi+(1-t)\varphi'\}\xi+(1-t)\varphi\vartheta'\,v(\xi,\vartheta),
\]
while the required right side is $r+\tilde\gamma q_v(\xi,\vartheta)=r+\tilde\gamma\xi+\tilde\gamma(1-\vartheta)v(\xi,\vartheta)$. Since equality must hold for every $\xi$ and every field $v$, the coefficients match separately:
\begin{equation}
\varphi+(1-t)\varphi'=\tilde\gamma,
\qquad
(1-t)\,\varphi\,\vartheta'=\tilde\gamma\,(1-\vartheta)
\label{eq:gauge}
\end{equation}
for almost every $t$. The first is linear with absolutely continuous solutions $\varphi=\tilde\gamma+A(1-t)$, and $\varphi(0)=1$ gives $A=1-\tilde\gamma$, i.e.\ $\varphi=\alpha\ge\tilde\gamma>0$; in particular $\varphi$ cannot vanish, since on the largest interval $[0,t^\ast)$ where it is nonzero it equals $\alpha$, which is positive at $t^\ast$. The second then reads $\vartheta'/(1-\vartheta)=\tilde\gamma/[(1-t)\alpha]$, and since $\frac{d}{dt}\log\frac{1-t}{\alpha}=-\tilde\gamma/[(1-t)\alpha]$, the unique absolutely continuous solution with $\vartheta(0)=0$ satisfies $1-\vartheta=(1-t)/\alpha>0$, i.e.\ $\vartheta=\tilde\gamma t/\alpha=\tau<1$. Conversely $\alpha-(1-t)(1-\tilde\gamma)=\tilde\gamma$ and $(1-t)g=\tilde\gamma(1-\tau)$ (Lemma~\ref{lem:clock}) verify both equations. If $\vartheta(t)=t$ at an interior time, then $\tau(t)=t$, and $t-\tau(t)=(1-\tilde\gamma)t(1-t)/\alpha(t)$ (Lemma~\ref{lem:clock}) vanishes only if $\tilde\gamma=1$: a same-time query forces $\tilde\gamma=1$.

(ii) For $(\alpha,\tau)$ the lifted field is $r-(1-\tilde\gamma)\xi+g\,v'(\xi,\tau)$, Eq.~\eqref{eq:operator-full}. If $v'=\mathcal L\eta'$ is canonical, the canonical readout of Definition~\ref{def:operator} and Lemma~\ref{lem:clock} give, with $X'\sim\eta'$ independent of $X_0$ and $X'_\tau=(1-\tau)X_0+\tau X'$,
\[
q_{\B v'}(z,t)=r+\tilde\gamma\,\E[X'\mid X'_\tau=\xi]=\E\big[r+\tilde\gamma X'\bigm|(1-t)X_0+t(r+\tilde\gamma X')=z\big],
\]
because $(1-t)X_0+t(r+\tilde\gamma X')=tr+\alpha X'_\tau$. This is the canonical readout of $\Law(r+\tilde\gamma X')$, and $v\mapsto q_v$ is injective for $t<1$, so $\B v'$ is the canonical field of the backed-up law: $\mathcal C^\pi$ on that branch.

(iii) The readout identity is the case $(\varphi,\vartheta)=(\alpha,\tau)$ of the display above. Subtracting it for $v$ and $w$ gives $q_{\B v}-q_{\B w}=\tilde\gamma\,(q_v-q_w)\circ\Phi_{r,\tilde\gamma}$ pointwise. The map $\Phi_{r,\tilde\gamma}$ is a bijection of $\R^d\times[0,1)$: the spatial map is affine with positive scale $\alpha$, and $\tau$ is strictly increasing from $0$ to $1$ by the odds identity. The supremum therefore transfers, and $d_q(\B v,\B w)=\tilde\gamma\,d_q(v,w)$ as an equality in $[0,\infty]$, of finite quantities whenever $d_q(v,w)<\infty$.

\end{proof}

\paragraph{Flow transfer at full gain.}
\label{app:flowtransfer}
Fix one nonterminal branch that is deterministic given $c$, with reward $r$, successor condition $c'$ and multiplier $0<\tilde\gamma<1$, and let an admissible field $v$ at $c'$, with time-one map $\psi_v$, serve as both teacher and endpoint generator. For every coupling $\rho_1$, the retimed $\kappa=1$ target equals $(\B_{r,\tilde\gamma}v)(X_t,t)$, so $\B_{r,\tilde\gamma}v$ is a version of the population field: at $\kappa=1$ the chord cancels and the target is $r-(1-\tilde\gamma)\xi_t+g\,v(\xi_t,\tau)$ with $\xi_t=(X_t-tr)/\alpha$, a function of $(X_t,t)$ because the branch is deterministic, and therefore its own conditional mean. Its flow is the affine pushforward of the flow of $v$. Let $x_u$ solve $\dot x_u=v(x_u,u)$, $x_0=x$, and put $z_t=tr+\alpha(t)x_{\tau(t)}$. Then $(z_t-tr)/\alpha=x_{\tau(t)}$ and
\[
\dot z_t=r+\alpha'x_{\tau}+\alpha\tau'\,v(x_\tau,\tau)=r-(1-\tilde\gamma)x_\tau+g\,v(x_\tau,\tau)=(\B_{r,\tilde\gamma}v)(z_t,t)
\]
by Eq.~\eqref{eq:clock} and $g=\alpha\tau'$ (Lemma~\ref{lem:clock}), so $z$ is a flow line of $\B_{r,\tilde\gamma}v$ from $z_0=x$. Conversely, a solution $z$ of that equation gives $x_u=(z_{t(u)}-t(u)r)/\alpha(t(u))$, with $t(u)$ the inverse of $\tau$, solving the equation of $v$; so uniqueness transfers. As $t\uparrow1$, $z_t\to r+\tilde\gamma x_1=r+\tilde\gamma\psi_v(x)$. The output endpoint law is therefore exactly $\Law(r+\tilde\gamma\psi_v(X_0))=(\T^\pi\mathcal E(v))_c$, and its mean is exact, however curved the flow of $v$.

\subsection{Fixed points, risk and teacher error}
\label{app:teachererr}

\begin{corollary}[Endpoint semiconjugacy and fixed-point correctness]
\label{cor:fixedpoint}
\emph{(a) Endpoint semiconjugacy for any coupling.} Let $\mathcal M^{\rm or}_{a,\chi}v$ be the population regression field at condition $c$ formed with a source--endpoint coupling $\chi$ in which $X_1'\mid H\sim\mathcal E(v)_{S',A'}$ (the coupling may correlate $X_1'$ with $X_0$), the coupling-matched teacher of $\chi$, and a gain satisfying (A3). Assume (A4), (A5), (A9), and (A7) for the output field. Then
\begin{equation}
\mathcal E\big(\mathcal M^{\rm or}_{a,\chi}v\big)_c=\big(\T^\pi\mathcal E(v)\big)_c .
\label{eq:oracle-semiconjugacy}
\end{equation}
For a non-independent coupling this is an endpoint statement only: it neither identifies the output field with $\mathcal C^\pi v$ nor gives field-level uniqueness.

\emph{(b) Fixed-point correctness.} Assume (A4), (A5), and (A7) for the actual, possibly coupled, bridge. Let the teacher be the coupling-matched oracle of Eq.~\eqref{eq:oracle}, and let the successor endpoint satisfy $X_1'\mid H\sim\eta^\pi_{S',A'}$ with $\eta^\pi=\T^\pi\eta^\pi$. At every admissible gain and every $\rho_1$, the retimed population update produces a field whose endpoint law is $\eta^\pi$: the Bellman fixed law is preserved, whatever the reward law and the transition kernel. Under independent fresh generation (A6), that field is the canonical $v^\pi=\mathcal L\eta^\pi$, which is therefore a fixed point; for a non-independent coupling it is the field of the coupled bridge, which in general differs from $v^\pi$ while sharing its endpoint law. For the same-time update the endpoint conclusion holds if $\rho_1=1$ and the chord $C$ is measurable with respect to $\sigma(X'_t,H)$ for almost every $t$, or $\kappa=0$, or $\tilde\gamma=1$; in the correlated-Gaussian model of Proposition~\ref{prop:scale} these are the only cases in which the same-time field has zero residual, the residual being Eq.~\eqref{eq:sametime-bias}. For scalar returns a generator that is the time-one map of a unique one-dimensional flow gives the measurability for every $t<1$, since that map is increasing, so $(1-t)x+t\psi(x)$ is injective; a deterministic generator does not give it in general.
\end{corollary}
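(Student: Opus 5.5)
The plan for part (a) is to reduce everything to Theorem~\ref{thm:centering-full}(i) and Lemma~\ref{lem:flow-matching}. Fix $c$. By Theorem~\ref{thm:centering-full}(i), which was proved for an arbitrary coupling with its own matched teacher and any $\HH$-measurable gain (A3), the population regression field is
\[
\mathcal M^{\rm or}_{a,\chi}v=\E[u_a\mid X_t,t,c]=\E[Y\mid X_t,t,c].
\]
Terminal backups are included through (A9), where $u_a=Y=R-X_0$. Since $Y=X_1-X_0$ with $X_1=R+\tilde\gamma X_1'$ on every branch, including terminal ones with $\tilde\gamma=0$, this field is exactly the bridge field $\E[X_1-X_0\mid(1-t)X_0+tX_1]$ of the pair $(X_0,X_1)$ given $c$. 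That pair is generally dependent under $\chi$, and it is taken as a mixture over $H$.

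By (A5), $X_0\sim\mathcal N(0,\Id_d)$. Lemma~\ref{lem:flow-matching} then gives the weak continuity equation for the bridge marginals. (A7) identifies the flow marginals of the output field with those bridge marginals, and the $L^p$ limit as $t\uparrow1$ gives the endpoint law $\Law(X_1\mid c)$. Because $X_1'\mid H\sim\mathcal E(v)_{S',A'}$, this law is $(\T^\pi\mathcal E(v))_c$, which proves Eq.~\eqref{eq:oracle-semiconjugacy}. I would note explicitly that only the marginal of $X_1'$ given $H$ enters this computation, so nothing is claimed about the field itself. This is why (a) is an endpoint statement only.

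For part (b), I would rerun the same argument with $\eta^\pi$ in place of $\mathcal E(v)$; (a) uses nothing about $v$ beyond that conditional law. The output endpoint law is then $(\T^\pi\eta^\pi)_c=\eta^\pi_c$ at every gain and every $\rho_1$. Under (A6) the proof of Lemma~\ref{lem:selfconsistency} shows that $X_0$ is independent of $(H,X_1')$, hence of $X_1$. The field $\E[Y\mid X_t]$ is therefore the independent-source field $\mathcal L$ of $\Law(X_1\mid c)=\eta^\pi_c$, that is, $v^\pi$. Combined with Lemma~\ref{lem:selfconsistency}, which says that $v^\pi$ is then its own matched teacher, this makes $v^\pi$ a fixed point. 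For a dependent coupling I would only remark that the bridge field depends on the joint law, so in general it differs from $v^\pi$.

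For the same-time update, the target is $Y+a\,(D_{\rm st}-C)$ with $D_{\rm st}=\E[C\mid X'_t,H]$. Since $\sigma(X'_t,H)$ need not contain $\HH$, centering can fail. I would check the three listed cases, in each of which $\E[a(D_{\rm st}-C)\mid\HH]=0$, so the tower and endpoint argument above goes through unchanged:
\begin{itemize}
\item If $C$ is $\sigma(X'_t,H)$-measurable, then $D_{\rm st}=C$ and the correction vanishes identically.
\item If $\kappa=0$, the correction is absent.
\item If $\tilde\gamma=1$, then $\alpha\equiv1$ and $\tau(t)=t$, so the same-time query coincides with the retimed one.
\end{itemize}
The claim that these are the only zero-residual cases in the correlated-Gaussian model I would take directly from the residual formula Eq.~\eqref{eq:sametime-bias} of Proposition~\ref{prop:scale}, by reading off when its coefficient vanishes.

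For the scalar remark, uniqueness of a one-dimensional flow makes trajectories non-crossing, so $\psi$ is nondecreasing. Then $x\mapsto(1-t)x+t\psi(x)$ is strictly increasing for $t<1$. Thus $X_0$, and hence $C=\psi(X_0)-X_0$ at $\rho_1=1$, is recoverable from $(X'_t,H)$. A generic deterministic, non-monotone generator breaks this injectivity.

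The main obstacle I expect is the passage from the weak continuity equation to the endpoint law in (a) and (b). The field is a mixture over branches with a possibly coupled, atomic, non-Gaussian endpoint. Here I would lean entirely on (A7)'s uniqueness clause together with the $W_p$ convergence in Lemma~\ref{lem:flow-matching}, rather than attempt any regularity argument of my own.
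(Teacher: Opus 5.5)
Your main line is correct and follows the paper's proof closely. For (a) you use Theorem~\ref{thm:centering-full}(i) together with Lemma~\ref{lem:flow-matching} and (A7). For (b) you rerun (a) with $\eta^\pi$, then use the independence of $X_0$ from $X_1$ under (A6) and Lemma~\ref{lem:selfconsistency} to get the canonical fixed point $v^\pi$. The three same-time cases follow from a vanishing correction or from coinciding clocks. The Gaussian ``only cases'' come from the factors $t(1-t)$, $1-\rho_1^2$, $1-\tilde\gamma$ of Eq.~\eqref{eq:sametime-bias}, and the scalar clause from monotonicity of one-dimensional time-one maps.

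What you leave unproved are the two negative clauses, which are existential claims and need explicit examples.

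\emph{The coupled field differs from $v^\pi$.} ``In general differs from $v^\pi$ while sharing its endpoint law'' is not established by remarking that the bridge field depends on the joint law. You need a coupled bridge whose endpoint law is $\eta^\pi$ but whose field is not $\mathcal L\eta^\pi$. The paper's example:
\begin{itemize}
\item Take one condition, $\tilde\gamma=\tfrac12$, $R\sim\mathcal N(0,\tfrac34)$ independent of $X_0$, and $\eta^\pi=\mathcal N(0,1)$, which is Bellman-fixed.
\item The canonical generator is the identity, so $\rho_1=1$ gives $X_1'=X_0$. Then $X_1=R+\tfrac12X_0\sim\mathcal N(0,1)$ with source correlation $\varrho=\tfrac12$.
\item The Gaussian bridge slope is $\{(1-t)(\varrho-1)+t(1-\varrho)\}/\{(1-t)^2+t^2+2\varrho t(1-t)\}$. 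At $t=\tfrac14$ it equals $-\tfrac4{13}$ for $\varrho=\tfrac12$ but $-\tfrac45$ for $\varrho=0$.
\end{itemize}
So the two fields share the endpoint law and differ.

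\emph{A deterministic generator does not suffice.} ``A generic deterministic, non-monotone generator breaks this injectivity'' also needs an instance, plus the link from non-injectivity to non-measurability. That link is immediate from $X'_t=X_0+tC$: for $t>0$, the chord is $\sigma(X'_t,H)$-measurable if and only if $X_0$ is.
\begin{itemize}
\item Your scalar route becomes a proof with a concrete map such as $\psi(x)=x^2$. For every $t\in(0,1)$, the map $x\mapsto(1-t)x+tx^2$ is two-to-one on a set of positive Gaussian measure, so the chord is not recoverable. This $\psi$ is deterministic but not a flow map, which is exactly the contrast the clause draws.
\item The paper instead uses the planar rotation $X_1'=-X_0$, a time-one map of a smooth flow. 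There $X'_{1/2}=0$ carries no information, while $X_t$ still reveals the chord $-2X_0$. That example fails at only one time.
\end{itemize}
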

\begin{proof}
(a) By Theorem~\ref{thm:centering-full}(i), with the conditioning on the training time justified by (A5), the population regression field is $m_c(z,t)=\E[X_1-X_0\mid X_t=z,t,c]$ with $X_1=R+\tilde\gamma X_1'$ (and $X_1=R$ on terminal branches): the flow-matching field of the actual current bridge under whatever joint law $\chi$ induces. Lemma~\ref{lem:flow-matching} and (A7) give $\mathcal E(m)_c=\Law(R+\tilde\gamma X_1'\mid c)$, and since $X_1'\mid H\sim\mathcal E(v)_{S',A'}$ this is $(\T^\pi\mathcal E(v))_c$. A coupled bridge field need not be canonical, so nothing equates the output fields themselves.

(b) Part (a), applied with $\eta^\pi$ in the role of $\mathcal E(v)$, gives the endpoint law $(\T^\pi\eta^\pi)_c=\eta^\pi_c$; this needs $\eta^\pi=\T^\pi\eta^\pi$, since centering alone identifies only the one-step backed-up field. Under (A6) the matched oracle is the independent-source field of $\eta^\pi$ (Lemma~\ref{lem:selfconsistency}), so Theorem~\ref{thm:centering-full}(ii) with the canonical input $v^\pi$ identifies the field with $\mathcal L\T^\pi\eta^\pi=\mathcal L\eta^\pi$. Coupled field equality fails in general: take one condition, $\tilde\gamma=\tfrac12$, $R\sim\mathcal N(0,\tfrac34)$ independent of $X_0$, and $\eta^\pi=\mathcal N(0,1)$. The canonical flow of $\mathcal N(0,1)$ from an independent $\mathcal N(0,1)$ source has time-one map $\psi(x)=x$, so $\rho_1=1$ gives $X_1'=X_0$, a recoverable chord, and $B=R+\tfrac12X_0\sim\mathcal N(0,1)$ with $\Cov(X_0,B)=\tfrac12$. A standard Gaussian bridge whose source--endpoint correlation is $\varrho$ has field slope $\{(1-t)(\varrho-1)+t(1-\varrho)\}/\{(1-t)^2+t^2+2\varrho t(1-t)\}$; at $t=\tfrac14$ this is $-\tfrac4{13}$ for $\varrho=\tfrac12$ and $-\tfrac45$ for $\varrho=0$: the same endpoint law and a different field.

Same-time sufficiency is the argument of Proposition~\ref{prop:leak}(i), which uses neither the clock nor the randomness of the transition: if the chord is $\sigma(X'_t,H)$-measurable, the same-time oracle returns it exactly, the correction vanishes, the target is the sampled velocity, and Lemma~\ref{lem:flow-matching} with (A7) gives the Bellman endpoint; $\kappa=0$ is ordinary flow matching, and at $\tilde\gamma=1$ the two clocks agree. In the Gaussian model the residual Eq.~\eqref{eq:sametime-bias} is a product of $t(1-t)$, $1-\rho_1^2$ and $1-\tilde\gamma$ over positive variances, so for $0<t<1$ these are its only zero cases (together with $\kappa=0$); a nonzero field residual does not by itself preclude a correct endpoint: at full retimed gain on one deterministic branch the endpoint law is $\Law(r+\tilde\gamma\psi_v(X_0))$ for every admissible teacher field $v$, which is correct whenever $v$ generates the correct successor law (Appendix~\ref{app:flowtransfer}). For the scalar clause, a time-one map of a unique one-dimensional flow is increasing, so $x\mapsto(1-t)x+t\psi(x)$ is strictly increasing for $t<1$, the query recovers $x$, and hence the chord $\psi(x)-x$. Determinism alone is not enough in $d\ge2$: the rotation $X_1'=-X_0$ is the time-one map of a smooth planar flow, yet at $t=\tfrac12$ it gives $X'_t=0$, so the same-time oracle returns $\E[X_1'-X_0]=0$ while $X_t=\tfrac{1-\tilde\gamma}{2}X_0+\tfrac12r$ reveals the chord $-2X_0$. That example fails only at one time, so it refutes a pointwise claim, not the almost-everywhere condition.
\end{proof}

\begin{proof}[Proof of Proposition~\ref{prop:risk-full}]
(i) Subtracting $w=A_t+gD$ from Eq.~\eqref{eq:target} gives Eq.~\eqref{eq:error-id}. For $\HH$-measurable $\kappa$, squaring and conditioning on $\HH$ produces the cross term $2\kappa(1-\kappa)g^2\inner{\E[C-D\mid\HH]}{e}=0$, because $D=\E[C\mid\HH]$ (proof of Theorem~\ref{thm:centering-full}(i)), and the remaining terms give Eq.~\eqref{eq:localrisk}. The derivative of $(1-\kappa)^2S+\kappa^2\norm e^2$ is $2\{-S+\kappa(S+\norm e^2)\}$, so $\kappa^\star=S/(S+\norm e^2)\in[0,1]$ is the unique minimiser when the denominator is positive; if both terms vanish, every fraction has zero conditional risk.

(ii) By Eq.~\eqref{eq:error-id} and the tower property, $\E[u_\kappa\mid X_t]-\E[Y\mid X_t]=\E[\kappa ge\mid X_t]$, and conditional Jensen, $\E\norm{\E[U\mid\mathcal G]}^2\le\E\norm U^2$ for $U\in L^2$ \citep{durrett2019}, gives the $L^2$ bound. Eq.~\eqref{eq:teachererror} adds and subtracts the oracle at $\widetilde\rho_1$. With a constant $\kappa\in[0,1]$ and $g\le1$, the triangle inequality gives
\[
\norm{b_t}_{L^2}\leq\kappa\left(\norm{\bar v-v_{\widetilde\rho_1}^\star}_{L^2(\text{actual queries})}
+\norm{v_{\widetilde\rho_1}^\star-v_{\rho_1}^\star}_{L^2(\text{actual queries})}\right).
\]
A field approximation error of a finite regression procedure must be added separately. If the endpoint solver changes the law of $X_1'$, the oracle here belongs to that approximate law, and Bellman error relative to the exact successor distribution is an additional term.
\end{proof}

\subsection{Contraction moduli}
\label{app:moduli}
The branchwise contraction depends on the metric. In the readout metric $d_q$ of Theorem~\ref{thm:conjugacy-full}(iii) the modulus is exactly $\tilde\gamma$; in the velocity supremum norm it is exactly $1$.

\subsection{Correlated-Gaussian identities and the terminal scale error}
\label{app:gaussian}

\begin{proposition}[Correlated-Gaussian residuals and the terminal scale error of the constant-gain same-time target]
\label{prop:scale}
Let $(X_0,V)$ be centred standard Gaussians with correlation $\rho_1$, $X_1'=\mu+\sigma V$ with $\sigma>0$, and fix $R=r$ and $0<\tilde\gamma\le1$. Put $P_u=(1-u)^2+u^2\sigma^2+2u(1-u)\rho_1\sigma$ and $Q_t=\Var(X_t)=(1-t)^2+\tilde\gamma^2t^2\sigma^2+2(1-t)\tilde\gamma t\rho_1\sigma$.

\emph{(i) Matched teacher and residuals.} The matched teacher queried at clock $u$, in particular at the retimed clock $u=\tau(t)$, is
\begin{equation}
v_{\rho_1}^\star(u,b)=\mu+\beta_{\rho_1}(u)(b-u\mu),\qquad
\beta_{\rho_1}(u)=\frac{(1-u)(\rho_1\sigma-1)+u(\sigma^2-\rho_1\sigma)}
{(1-u)^2+u^2\sigma^2+2u(1-u)\rho_1\sigma},
\label{eq:gaussian}
\end{equation}
and the remaining bridge uncertainty is
\begin{equation}
S_t=\Var(C\mid X_t,H)=\frac{\sigma^2\alpha(t)^2(1-\rho_1^2)}{Q_t}.
\label{eq:gaussian-local}
\end{equation}
For a teacher calibrated at $\widetilde\rho_1$ and queried at clock $u=u(t)$,
\begin{equation}
\E[v_{\widetilde\rho_1}^\star(u,X'_u)-C\mid X_t=z]
=\frac{\beta_{\widetilde\rho_1}(u)K_{BZ}(u)-K_{CZ}}{Q_t}
\{z-t(r+\tilde\gamma\mu)\},
\label{eq:general-gaussian-bias}
\end{equation}
where $K_{CZ}=(1-t)(\rho_1\sigma-1)+\tilde\gamma t(\sigma^2-\rho_1\sigma)$ and $K_{BZ}(u)=(1-u)(1-t)+u\tilde\gamma t\sigma^2+\rho_1\sigma\{(1-u)\tilde\gamma t+u(1-t)\}$. The numerator vanishes at the matched retimed clock, and at the matched same-time clock it equals $t(1-t)\sigma^2(1-\rho_1^2)(1-\tilde\gamma)/P_t$, so that
\begin{equation}
\E[v_{\rho_1}^\star(t,X'_t)-C\mid X_t=z]
=\frac{t(1-t)\sigma^2(1-\rho_1^2)(1-\tilde\gamma)}{P_tQ_t}
\{z-t(r+\tilde\gamma\mu)\}.
\label{eq:sametime-bias}
\end{equation}

\emph{(ii) Terminal scale error of the constant-gain same-time target.} In the correlated-Gaussian model above, with $0<\tilde\gamma<1$, $\rho_1=0$, an exact matched teacher and constant gain $\tilde\gamma$,
\begin{equation}
\ln\frac{\sigma_{\rm out}}{\tilde\gamma\sigma}=\frac{-\,\tilde\gamma\ln\tilde\gamma}{1+\tilde\gamma}\qquad\text{for every }\sigma>0,
\label{eq:scalefree}
\end{equation}
independently of the return scale. The error is positive: the learned law is over-dispersed. The retimed target, and the same-time target at $\rho_1=1$, both give exactly zero.

\end{proposition}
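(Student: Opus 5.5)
The plan is to treat everything as Gaussian linear regression. Given $H$, every quantity is a linear function of the jointly Gaussian pair $(X_0,\sigma V)$, whose covariance is $\begin{pmatrix}1&\rho_1\sigma\\ \rho_1\sigma&\sigma^2\end{pmatrix}$ with determinant $\sigma^2(1-\rho_1^2)$.

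\emph{Part (i).} For the matched teacher, write the chord $C=\mu+\sigma V-X_0$ and the query $B=X'_u=(1-u)X_0+u\mu+u\sigma V$. Then $\E[C\mid B=b]=\mu+\frac{\Cov(C,B)}{\Var B}(b-u\mu)$. A direct expansion gives $\Var B=P_u$ and $\Cov(C,B)=(1-u)(\rho_1\sigma-1)+u(\sigma^2-\rho_1\sigma)$, which is Eq.~\eqref{eq:gaussian}.

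For the residual uncertainty, Lemma~\ref{lem:clock} gives $\sigma(X_t,H)=\sigma(\xi_t,H)$ with $\xi_t=X'_{\tau}$, so $S_t=\Var(C\mid X'_\tau)$. The map $(X_0,\sigma V)\mapsto(C,B)$ has coefficient rows $(-1,1)$ and $(1-u,u)$, whose determinant is $-1$. Hence $\Var(C)\Var(B)-\Cov(C,B)^2=\sigma^2(1-\rho_1^2)$, and $\Var(C\mid B)=\sigma^2(1-\rho_1^2)/P_\tau$. Since $X_t-tr=\alpha\,X'_\tau$, we have $Q_t=\alpha^2P_\tau$, which yields Eq.~\eqref{eq:gaussian-local}.

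For the general bias, put $Z=X_t$ and regress each term on $Z$. This gives
\[
\E[v^\star_{\widetilde\rho_1}(u,X'_u)-C\mid Z=z]=\frac{\beta_{\widetilde\rho_1}(u)\Cov(B,Z)-\Cov(C,Z)}{Q_t}\{z-t(r+\tilde\gamma\mu)\}.
\]
I would then expand $\Cov(B,Z)=K_{BZ}(u)$ and $\Cov(C,Z)=K_{CZ}$ by bilinearity.

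At the retimed matched clock the numerator vanishes with no algebra at all. There $X'_\tau$ is an invertible affine function of $Z$, so the tower property (as in Theorem~\ref{thm:centering-full}(i)) gives zero.

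At $u=t$ with $\widetilde\rho_1=\rho_1$, I would put $\beta K_{BZ}-K_{CZ}$ over the common denominator $P_t$ and simplify to $t(1-t)\sigma^2(1-\rho_1^2)(1-\tilde\gamma)$. This symbolic simplification is the main grinding step. I would organise it by grouping the terms in powers of $\rho_1$, and check it with SymPy.

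\emph{Part (ii).} The constant-gain same-time target is $R-(1-\tilde\gamma)X_0+\tilde\gamma\,v^\star(t,X'_t)=Y+\tilde\gamma\{v^\star(t,X'_t)-C\}$. Its population field is therefore $v^\star_t(z)$ plus $\tilde\gamma$ times Eq.~\eqref{eq:sametime-bias} at $\rho_1=0$.

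Both pieces are affine in $z$, so the learned field has the form $a(t)+b(t)(z-m_t)$, where $b(t)$ is the true slope plus
\[
\delta(t)=\frac{\tilde\gamma(1-\tilde\gamma)t(1-t)\sigma^2}{P_tQ_t}.
\]
A linear flow started from $\mathcal N(0,1)$ stays Gaussian with $\frac{d}{dt}\ln s_t=b(t)$. The true slope transports the source to the exact law $\mathcal N(\cdot,\tilde\gamma^2\sigma^2)$, so
\[
\ln\frac{\sigma_{\rm out}}{\tilde\gamma\sigma}=\int_0^1\delta(t)\,dt.
\]

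The key step, and the expected obstacle, is evaluating this integral in closed form. I would substitute $s=t\sigma/(1-t)$. Then $P_t=(1-t)^2(1+s^2)$, $Q_t=(1-t)^2(1+\tilde\gamma^2s^2)$, and $t(1-t)\,dt=s\sigma^2(\sigma+s)^{-4}ds$. After these substitutions $\sigma$ cancels completely, leaving
\[
\tilde\gamma(1-\tilde\gamma)\int_0^\infty\frac{s\,ds}{(1+s^2)(1+\tilde\gamma^2s^2)}.
\]
With $w=s^2$ and partial fractions, the integral equals $-\ln\tilde\gamma/(1-\tilde\gamma^2)$. Multiplying by $\tilde\gamma(1-\tilde\gamma)$ gives Eq.~\eqref{eq:scalefree}. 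The right-hand side is positive for $0<\tilde\gamma<1$, which is the over-dispersion claim.

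For the zero cases: the retimed residual vanishes by part (i). The same-time residual carries the factor $1-\rho_1^2$, so it vanishes at $\rho_1=1$. In both cases $\delta\equiv0$ and the scale error is exactly zero.
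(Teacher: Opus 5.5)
Your proposal is correct and follows the paper's proof essentially step for step. You use Gaussian regression for the matched teacher and the bias formula, and the tower property at the retimed clock. For the integral you substitute $s=t\sigma/(1-t)$, which the paper does in two stages ($w=t/(1-t)$, then $s=\sigma w$), and both reduce the excess-slope integral to $\tilde\gamma(1-\tilde\gamma)\int_0^\infty s\,ds/\{(1+s^2)(1+\tilde\gamma^2s^2)\}$. Your $S_t$ computation, via the unit-determinant map $(X_0,\sigma V)\mapsto(C,X'_\tau)$ together with $Q_t=\alpha^2P_\tau$, is an equivalent variant of the paper's use of the map $(X_0,V)\mapsto(C,X_t)$ with determinant $-\sigma\alpha$.

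The one step you leave to SymPy, the same-time numerator, the paper does by hand. It writes $k=\Cov(C,X'_t)=K_{CZ}+\Delta_1$ and $K_{BZ}(t)=P_t-\Delta_2$, with $\Delta_1=(1-\tilde\gamma)t(\sigma^2-\rho_1\sigma)$ and $\Delta_2=(1-\tilde\gamma)t\{t\sigma^2+(1-t)\rho_1\sigma\}$. Then $kK_{BZ}(t)-K_{CZ}P_t=\Delta_1P_t-k\Delta_2$, which collapses to $(1-\tilde\gamma)t(1-t)\sigma^2(1-\rho_1^2)$ once the $t^2$ terms cancel.
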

\begin{proof}
(i) Centre the chord, the successor query and the current point: $\bar C=\sigma V-X_0$, $\bar B_u=(1-u)X_0+u\sigma V$, $\bar X_t=(1-t)X_0+\tilde\gamma t\sigma V=z-t(r+\tilde\gamma\mu)$. Direct expansion gives $\Cov(\bar C,\bar B_u)=(1-u)(\rho_1\sigma-1)+u(\sigma^2-\rho_1\sigma)$, $\Var(\bar B_u)=P_u$, $\Cov(\bar C,\bar X_t)=K_{CZ}$, $\Cov(\bar B_u,\bar X_t)=K_{BZ}(u)$ and $\Var(\bar X_t)=Q_t$. The Gaussian regression formula $\E[U\mid B]=\E U+\Cov(U,B)\Var(B)^{-1}(B-\E B)$ gives Eq.~\eqref{eq:gaussian}. The map $(X_0,V)\mapsto(\bar C,\bar X_t)$ has determinant $-\sigma\{1-(1-\tilde\gamma)t\}=-\sigma\alpha$ and $\det\Cov(X_0,V)=1-\rho_1^2$, so $\Var(\bar C\mid\bar X_t)=\det\Cov(\bar C,\bar X_t)/\Var(\bar X_t)=\sigma^2\alpha^2(1-\rho_1^2)/Q_t$, which is Eq.~\eqref{eq:gaussian-local}; at $\rho_1=1$ it vanishes. The teacher residual is $v^\star_{\widetilde\rho_1}(u,X'_u)-C=\beta_{\widetilde\rho_1}(u)\bar B_u-\bar C$; regressing it on $\bar X_t$ gives Eq.~\eqref{eq:general-gaussian-bias}.

At the retimed clock $u=\tau$, $1-\tau=(1-t)/\alpha$ and $\tau=\tilde\gamma t/\alpha$ (Lemma~\ref{lem:clock}; at $\tilde\gamma=1$ they hold directly, with $\alpha\equiv1$ and $\tau(t)=t$) give $\bar B_\tau=\bar X_t/\alpha$, so conditioning on $X_t$ is conditioning on the query, and with $\widetilde\rho_1=\rho_1$ the residual has zero conditional mean by the tower property; in coefficients, $\Cov(\bar C,\bar B_\tau)=K_{CZ}/\alpha$, $P_\tau=Q_t/\alpha^2$ and $K_{BZ}(\tau)=Q_t/\alpha$, so $\beta_{\rho_1}(\tau)K_{BZ}(\tau)-K_{CZ}=(\alpha K_{CZ}/Q_t)(Q_t/\alpha)-K_{CZ}=0$. At the same-time clock $u=t$ with $\widetilde\rho_1=\rho_1$, write $k=\Cov(\bar C,\bar B_t)$; then
\begin{align*}
k&=K_{CZ}+\Delta_1,&\Delta_1&=(1-\tilde\gamma)t(\sigma^2-\rho_1\sigma),\\
K_{BZ}(t)&=P_t-\Delta_2,&\Delta_2&=(1-\tilde\gamma)t\{t\sigma^2+(1-t)\rho_1\sigma\},
\end{align*}
so $P_t\{\beta_{\rho_1}(t)K_{BZ}(t)-K_{CZ}\}=kK_{BZ}(t)-K_{CZ}P_t=\Delta_1P_t-k\Delta_2=(1-\tilde\gamma)t\,\Pi$ with
\begin{align*}
\Pi&=(\sigma^2-\rho_1\sigma)P_t-k\{t\sigma^2+(1-t)\rho_1\sigma\}\\
&=(1-t)^2\sigma^2(1-\rho_1^2)+t(1-t)\sigma^2(1-\rho_1^2)=(1-t)\sigma^2(1-\rho_1^2),
\end{align*}
where the $t^2$ terms cancel and the $(1-t)^2$ and $t(1-t)$ coefficients are collected in powers of $\rho_1\sigma$. Dividing by $P_tQ_t$ gives Eq.~\eqref{eq:sametime-bias}.

(ii) Write $\tilde\gamma$ for the multiplier, $P_t=(1-t)^2+t^2\sigma^2$ and $Q_t=(1-t)^2+\tilde\gamma^2t^2\sigma^2$ (the $\rho_1=0$ values). The exact field of the current bridge is linear in the centred coordinate with slope $Q_t'/(2Q_t)$, because $\Cov(\dot X_t,X_t)=\tfrac12\frac{d}{dt}\Var(X_t)$, and its flow has variance $Q_t$, ending at $\tilde\gamma^2\sigma^2$. Under constant same-time gain $\tilde\gamma$, the population field adds $\tilde\gamma$ times the residual of Eq.~\eqref{eq:sametime-bias} at $\rho_1=0$, an excess slope
\[
\Delta a(t)=\frac{\tilde\gamma(1-\tilde\gamma)\,t(1-t)\sigma^2}{P_tQ_t},
\]
and leaves the mean path unchanged. A centred linear flow multiplies its standard deviation by $\exp(\int_0^1a)$, so $\ln(\sigma_{\rm out}/\tilde\gamma\sigma)=\int_0^1\Delta a\,dt$. With $w=t/(1-t)$, $dt=dw/(1+w)^2$, $t(1-t)=w/(1+w)^2$, $P_t=(1+\sigma^2w^2)/(1+w)^2$ and $Q_t=(1+\tilde\gamma^2\sigma^2w^2)/(1+w)^2$, and then $s=\sigma w$,
\begin{align*}
\int_0^1\Delta a\,dt&=\tilde\gamma(1-\tilde\gamma)\int_0^\infty\frac{s\,ds}{(1+s^2)(1+\tilde\gamma^2s^2)}\\
&=\tilde\gamma(1-\tilde\gamma)\lim_{M\to\infty}\frac{1}{2(1-\tilde\gamma^2)}\Big[\ln\frac{1+s^2}{1+\tilde\gamma^2s^2}\Big]_0^M
=\frac{\tilde\gamma(1-\tilde\gamma)(-\ln\tilde\gamma)}{1-\tilde\gamma^2},
\end{align*}
which is $-\tilde\gamma\ln\tilde\gamma/(1+\tilde\gamma)$, independent of $\sigma$ and positive for $0<\tilde\gamma<1$. The retimed target has zero residual (part (i)), and the same-time residual carries the factor $1-\rho_1^2$, which vanishes at $\rho_1=1$.

\end{proof}

\subsection{Shared noise}
\label{app:pcbfkappa}

\begin{proof}[Proof of Proposition~\ref{prop:leak}]
(i) If the chord is measurable with respect to the query and $H$, the matched oracle is $D=\E[C\mid\text{query},H]=C$, so the control variate vanishes and $u_a=Y=R+\tilde\gamma X_1'-X_0$ at every gain; the regression is ordinary flow matching onto the Bellman endpoint, and Lemma~\ref{lem:flow-matching} with (A7) identifies the generated law. The residual is pointwise zero, so this holds at either clock. For the early-time remark take $d=1$, reward zero, a branch sign $S\in\{-1,1\}$ with equal probability independent of $X_0\sim\mathcal N(0,1)$, and shared-noise generators $X_1'=Sa+\sigma X_0$ with $a,\sigma>0$. Since $X'_u=uSa+(1-u+u\sigma)X_0$ with $1-u+u\sigma>0$, the chord is measurable with respect to $(X'_u,S)$, so the flow reaches $\tfrac12\mathcal N(-\tilde\gamma a,\tilde\gamma^2\sigma^2)+\tfrac12\mathcal N(\tilde\gamma a,\tilde\gamma^2\sigma^2)$, of variance $\tilde\gamma^2(\sigma^2+a^2)$. At $t=0$, however, $X_t=X_0$ carries no branch information and the fitted velocity is $\E[\tilde\gamma Sa+\tilde\gamma\sigma z-z]=(\tilde\gamma\sigma-1)z$: the velocity of the increasing map toward $\mathcal N(0,\tilde\gamma^2\sigma^2)$, the one-dimensional $W_2$ barycenter of the two Bellman branch endpoint laws $\mathcal N(\pm\tilde\gamma a,\tilde\gamma^2\sigma^2)$, whose variance is only $\tilde\gamma^2\sigma^2$. Branch information re-enters through the posterior at later times.

(ii) Write $b_u(x_0)=(1-u)x_0+u\,\psi(x_0)$ for the query built from a source $x_0$, where $\psi$ is the time-one map of the reused field $v$. Recoverability makes the matched teacher at $(u,b_u(x_0))$ equal to $\psi(x_0)-x_0$ for almost every query. The query law is the image of (clock law)$\,\otimes\,\mathcal N(0,\Id_d)$ under $(u,x_0)\mapsto(u,b_u(x_0))$, and the clock law has a positive density on $(0,1)$. If $v$ equals the matched teacher almost everywhere on the query law, Fubini's theorem gives, for almost every $x_0$, $v(b_u(x_0),u)=\psi(x_0)-x_0=\frac{d}{du}b_u(x_0)$ for almost every $u$; so $u\mapsto b_u(x_0)$ is an absolutely continuous solution of $x_u=x_0+\int_0^uv(x_s,s)\,ds$, and uniqueness of the flow (for absolutely continuous solutions; for continuous fields this is ordinary ODE uniqueness) makes it the trajectory of $v$, which is therefore straight at constant speed. Conversely, if the trajectory from almost every $x_0$ is $b_u(x_0)$, its velocity $\psi(x_0)-x_0$ equals $v(b_u(x_0),u)$ for almost every $u$, which is the matched teacher.

\end{proof}

\begin{figure}[h]
\centering
\includegraphics[width=\linewidth]{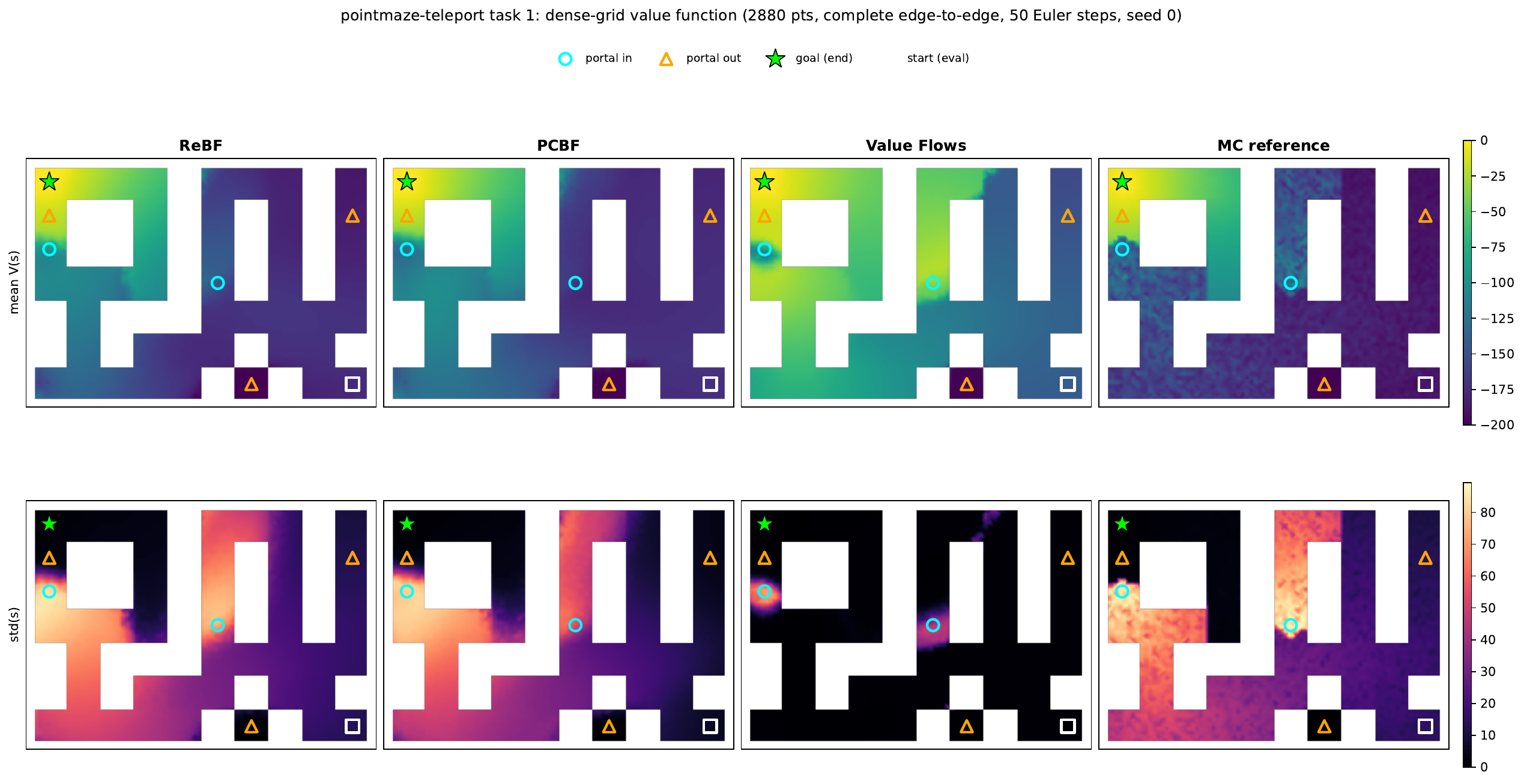}
\caption{\textbf{Learned value and uncertainty over the whole maze}, OGBench \texttt{pointmaze-teleport} (task $1$, $\gamma=0.995$), seed $0$, $50$ Euler steps. Columns: ReBF, PCBF, Value Flows (the published critic) and a Monte Carlo reference; top row, the mean return $V(s)$ of the law each critic generates; bottom row, its standard deviation. Each map is evaluated at $2{,}880$ points ($8\times8$ per free cell, spacing $0.5$) and linearly interpolated within free cells; walls are white. Markers are fixed maze geometry: portal entrances (circles), portal exits (triangles, one of them the sealed trap), the goal (star) and the ``start'' evaluation point (square). Against the reference, ReBF's mean map has mean absolute error $13.0$ (bias $+9.3$) and its standard-deviation map $3.9$, with spatial correlation $0.96$ to the reference's and average ratio $1.05$; PCBF's are $17.4$ ($+11.9$), $6.5$, $0.91$ and $0.92$; Value Flows' are $60.6$ ($+60.6$), $27.8$, $0.31$ and $0.09$, its uncertainty collapsing almost everywhere. The reference has only $30$ rollouts per point, which makes its panels grainy and puts a floor of about $4.4$ on the mean-map error and $1.7$ on the standard-deviation error; ReBF's and PCBF's errors are two to four times these floors. One seed: a picture, not an estimate.}
\label{fig:ogbench-map}
\end{figure}

\section{Toy Results}
\label{app:toysfull}

\paragraph{Against a tuned comparator, on held-out seeds.}
Table~\ref{tab:headline} tunes $\kappa$ separately for each family --- the same-time shared-noise family and the retimed independent-noise family --- selecting on one seed split and scoring on the other.

\begin{table}[t]
\centering
\caption{Tuned same-time ($\rho_1=1$) versus tuned retimed ($\rho_1=0$), $\kappa$ selected on five seeds and scored on the five held-out seeds; both split directions shown. $W_1$ against the exact law at $50$ ODE steps, $4\times10^5$ draws. Every level in this table is at least $19\times$ the measured evaluation floor for its environment; the gaps on \textsc{solitaire} and \textsc{discrete-mc} are significant in both split directions ($p\le0.04$ at $n=5$), while on \textsc{bernoulli} they are $1.3$--$2.2\times$ the floor at $n=15$, which is why we call it a tie (Appendix~\ref{app:protocol}). Scored $n$ is $15$ on \textsc{bernoulli} and $5$ elsewhere; ties are declared using the $t$ critical value at the relevant $n$.}
\label{tab:headline}
\begin{tabular}{llrrrrl}
\toprule
Env & Split & $\kappa^\star$ (same-time) & $\kappa^\star$ (retimed) & Ratio & $p$ & Verdict\\
\midrule
\textsc{bernoulli} & A$|$B & 0.1 & 1 & 1.06$\times$ & 0.455 & powered null\\
\textsc{bernoulli} & B$|$A & 0.02 & 1 & 1.10$\times$ & 0.292 & powered null\\
\textsc{solitaire} & A$|$B & 0 & 1 & 1.62$\times$ & 0.040 & retimed\\
\textsc{solitaire} & B$|$A & 0.05 & 1 & 1.89$\times$ & 0.003 & retimed\\
\textsc{discrete-mc} & A$|$B & 0.3 & 1 & 1.64$\times$ & 0.004 & retimed\\
\textsc{discrete-mc} & B$|$A & 0.3 & 1 & 1.66$\times$ & 0.0003 & retimed\\
\bottomrule
\end{tabular}
\end{table}

Against a tuned same-time comparator the retimed family wins on \textsc{solitaire} and \textsc{discrete-mc}. On \textsc{bernoulli} the fifteen-seed held-out gaps do not separate ($+0.00086\pm0.00112$, $p=0.455$; $+0.00141\pm0.00129$, $p=0.292$), at $21$--$25\times$ the evaluation floor, so the advantage is environment-dependent. \textsc{bernoulli} is also the only environment with $\gamma/(1-\gamma)=1$, and this design does not separate smoothness from the absence of amplification. Against pinned PCBF ($\kappa=1$) the gaps are larger ($0.053$ vs $0.017$, $0.680$ vs $0.091$, $0.594$ vs $0.302$). In floor units the tuned comparison is $19\times$ against $24\times$, $28\times$ against $49\times$, and $40\times$ against $66\times$.

The result the theory predicts is where each optimum sits. The retimed $\kappa^\star$ is $1$ in every environment and split, so tuning gains nothing. The same-time optimum is never at $\kappa=1$; that rung is its worst, and tuning moves it toward $\kappa=0$. Theorem~\ref{thm:centering} makes the retimed correction unbiased at every gain. At $\rho_1=1$ the same-time residual of Eq.~\eqref{eq:sametime-bias} vanishes, but the reused critic is generally not the teacher its coupling requires (Proposition~\ref{prop:leak}(ii)), and that error enters scaled by $\kappa$ (Eq.~\eqref{eq:bias}). On the full $(\rho_1,\kappa)$ surface the best retimed cell is $(\rho_1=0,\kappa=1)$, ReBF itself, in all three environments.

\paragraph{Clock and coupling interact.}
Holding $\kappa=1$ and measuring the clock gap (same-time minus retimed $W_1$) at the two coupling extremes:

\begin{center}
\begin{tabular}{lrrrl}
\toprule
Env & gap at $\rho_1=0$ & gap at $\rho_1=1$ & difference & $p$\\
\midrule
\textsc{bernoulli} & $+0.0408$ & $-0.0562$ & $-0.0970$ & $<10^{-4}$\\
\textsc{solitaire} & $+0.1623$ & $-0.0460$ & $-0.2083$ & $0.0004$\\
\textsc{discrete-mc} & $+0.2950$ & $-0.1051$ & $-0.4001$ & $<10^{-4}$\\
\bottomrule
\end{tabular}
\end{center}

\paragraph{With a reused critic, shared noise settles at a biased level.}

On \textsc{solitaire} the shared-noise $W_1$ rises with training and PCBF then flattens (Figure~\ref{fig:allenvs}c). From $2\,000$ to $50\,000$ steps, five seeds, PCBF moves $0.544\to0.751$ while ReBF falls $0.381\to0.126$; a rerun of the same seeds ends at $0.698$ and $0.123$. PCBF is about six times higher at the last checkpoint, with mean bias $-14.0\%\pm1.2$ ($n=10$). The arms stay together through step $1\,000$ and separate between $1\,000$ and $2\,000$ ($-0.166\pm0.040$), so the bias appears once the critic is accurate enough for the coupling to matter. The retimed shared-noise arm rises the same way. Table~\ref{tab:depth} shows the penalty vanishes as the backup deepens. It does not scale with the horizon: at one step the bias is $-14.0\%$, $-28.5\%$ and $-29.7\pm2.5\%$ at $\gamma=0.9$, $0.99$ and $0.995$, while $\gamma/(1-\gamma)$ runs $9\to99\to199$. A matched teacher makes shared noise exact (Proposition~\ref{prop:leak}(i)); the reused critic is that teacher only if its flow is straight (Proposition~\ref{prop:leak}(ii)). A single deterministic branch leaves the mean exact at full retimed gain (Appendix~\ref{app:flowtransfer}). We have no population model that also matches the measured size.

\begin{table}[ht]
\centering
\caption{Mean bias (\%) against the exact mean, with $W_1$ beneath, as a function of backup depth $n$; \textsc{solitaire}, $\gamma=0.9$, five seeds, $50$ ODE steps. Standard errors are $0.6$--$1.4$ percentage points throughout. An $n$-step target uses $\tilde\gamma=\gamma^n$ and the discounted $n$-step reward. $W_1$ here compares $4{,}000$ learned draws against a $20{,}000$-draw sampled reference, whose floor is $0.043\pm0.017$ (Appendix~\ref{app:protocol}); at $n=20$ ReBF and PCBF differ by less than that floor's standard deviation.}
\label{tab:depth}
\begin{tabular}{lrrrrr}
\toprule
Arm & $n=1$ & $n=3$ & $n=5$ & $n=10$ & $n=20$\\
\midrule
ReBF (retimed, $\rho_1{=}0$, $\kappa{=}1$) & $-1.64$ & $-3.05$ & $-1.68$ & $-0.25$ & $-0.42$\\
 & \footnotesize 0.097 & \footnotesize 0.155 & \footnotesize 0.123 & \footnotesize 0.119 & \footnotesize 0.134\\
PCBF (same-time, $\rho_1{=}1$, $\kappa{=}1$) & $-14.17$ & $-3.94$ & $-0.45$ & $-0.31$ & $+0.08$\\
 & \footnotesize 0.709 & \footnotesize 0.427 & \footnotesize 0.250 & \footnotesize 0.140 & \footnotesize 0.126\\
uncorrected ($\kappa{=}0$) & $-1.36$ & $-2.48$ & $-0.97$ & $-0.35$ & $+0.55$\\
 & \footnotesize 0.162 & \footnotesize 0.156 & \footnotesize 0.155 & \footnotesize 0.112 & \footnotesize 0.112\\
\bottomrule
\end{tabular}
\end{table}

\section{All Environments and All Analyses}
\label{app:allenvs}
Figure~\ref{fig:allenvs} collects all analysis of Section~\ref{sec:toys} in every environment, with four-rooms successor features (Appendix~\ref{app:env-fourrooms}) as a fourth column. The main-text Figure~\ref{fig:main} shows one environment for each row, and Figure~\ref{fig:grid} shows the full $(\rho_1,\kappa)$ surface of the toys that rows (a) and (b) cut through (on seeds $0$--$4$ only, so its corner cells differ from the ten-seed values in those rows, by up to $16\%$ on \textsc{bernoulli}). The toy columns are read out at $50$ ODE steps; the four-rooms column at $128$, from $2\times10^4$ critic draws against a $4\times10^5$-rollout reference, and its training curves use that same instrument. The test-time budget is in Appendix~\ref{app:budget}.

\emph{(a) Correction weight.} On the toys the retimed $\rho_1=0$ slice falls to its minimum at $\kappa=1$, and the same-time shared-noise family is worst there (sweep minima $\kappa=0.02$, $0$, $0.3$). Green is Value Flows on its own dial $\kappa_{\rm VF}=\lambda_{\rm DCFM}/(\lambda_{\rm BCFM}+\lambda_{\rm DCFM})$, not our $\kappa$; its $\kappa_{\rm VF}=1$ cell diverged and is omitted. On four-rooms the retimed slice falls from $0.106$ at $\kappa=0$ to $0.028$ at $\kappa=1$; the same-time family is lowest at $\kappa=0.75$ ($0.038$).
\emph{(b) Noise coupling.} At $\kappa=1$ the retimed $W_1$ rises with $\rho_1$ from its minimum at $\rho_1=0$, and the same-time curve is U-shaped; at $\kappa=0.1$ the two clocks lie within $16\%$. On four-rooms a partially coupled same-time critic ($\rho_1=0.75$) ties ReBF ($0.0289$ against $0.0300$, $p=0.24$; held-out seeds $5$--$14$: $0.0277$ against $0.0273$, $p=0.75$), while PCBF at $\rho_1=1$ is higher on all $15$ seeds.
\emph{(c) Training.} On \textsc{solitaire} the arms coincide through step $1\,000$; the shared-noise arms at $\rho_1=1$, $\kappa=1$ then rise, so the effect tracks the coupling. On four-rooms the three arms coincide through $5\,000$ steps and at $20\,000$ read $0.028$, $0.041$ and $0.052$ for ReBF, PCBF and the uncorrected critic.

\begin{figure}[p]
\centering
\includegraphics[width=\linewidth]{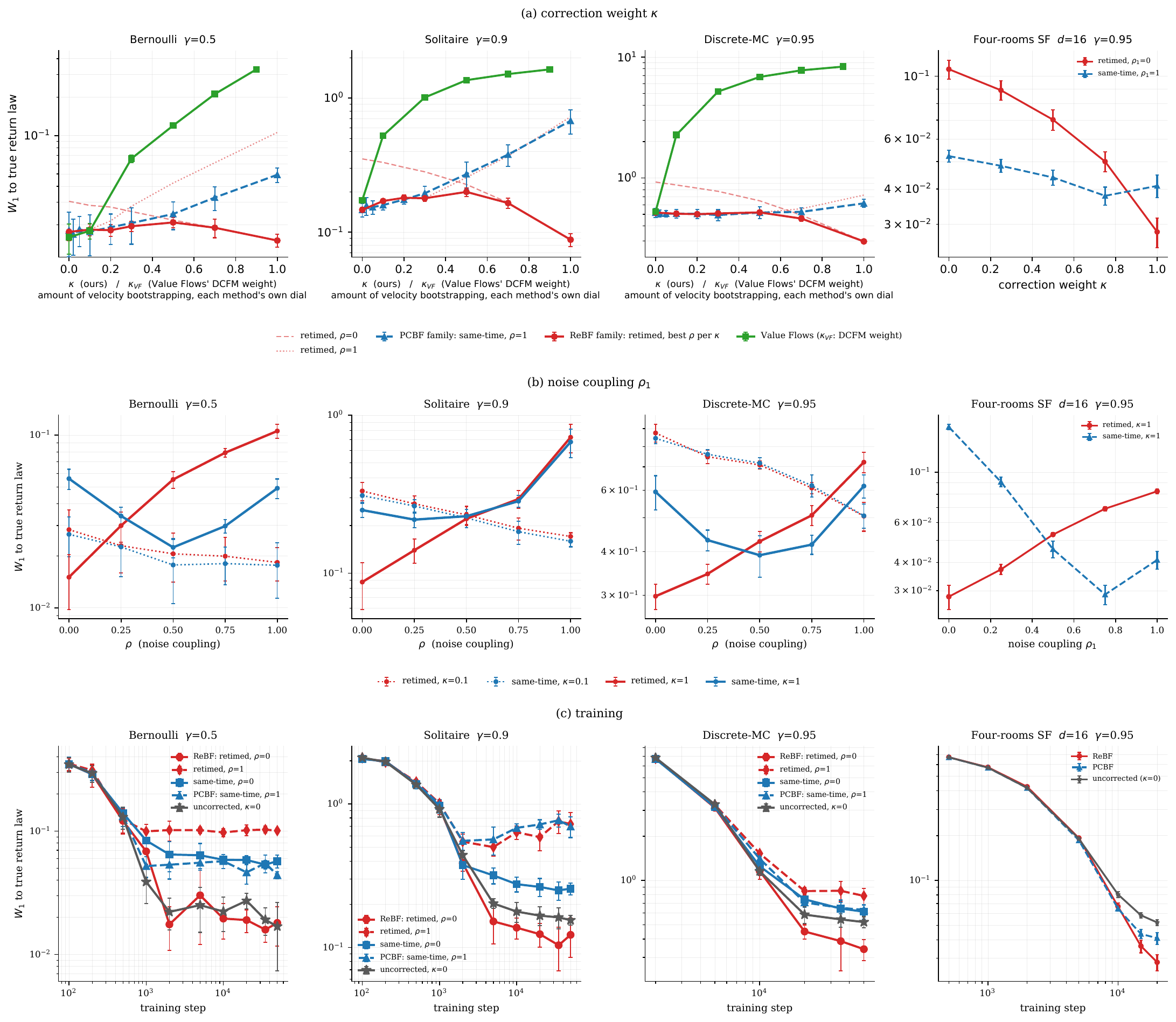}
\caption{\textbf{Every analysis in every environment} (columns \textsc{bernoulli}, \textsc{solitaire}, \textsc{discrete-mc} and four-rooms successor features, $d=16$): (a) correction weight, (b) noise coupling, (c) training, described in Appendix~\ref{app:allenvs}. $W_1$ to the exact law ($4\times10^5$ draws) in the toy panels of rows (a) and (b), and to a $4\times10^5$-rollout reference in every four-rooms panel; scoring and floors in Appendix~\ref{app:protocol}. Error bars are $\pm1$ s.d., except $\pm1$ s.e.m.\ on the solid red and green curves of the toy panels of row (a). The $\kappa=0$ arm is the uncorrected critic, shown for context. The toy training curves and Value Flows (green, row a) are scored on a coarser instrument: $4\times10^3$ draws against a sampled $2\times10^4$-draw reference, with floors of about $0.011$, $0.050$ and $0.075$.}
\label{fig:allenvs}
\end{figure}

\emph{Four-rooms maps.} Figure~\ref{fig:fourrooms} maps one seed's mean and dispersion errors over all $104$ cells of four-rooms and draws its joint law at one evaluation state. Away from $w=M^\top e_{15}$, the same-time critics' dispersion errors have a sign. Over the other $10$ directions at the four evaluation states, PCBF and the uncorrected critic BCFM are under-dispersed on $15$ of $15$ seeds (mean sd ratio $0.922\pm0.003$ and $0.908\pm0.002$, mean $\pm$ s.e.), and more so along $w=M^\top e_0$, the slowest room mode (horizon $18.4$; $0.839$ and $0.748$). ReBF is at $1.000\pm0.002$, below $1$ on $7$ of $15$ seeds, and Value Flows is over-dispersed ($1.067\pm0.003$, above $1$ on $15$ of $15$). With $w=M^\top e_{15}$ included, as in Table~\ref{tab:compare}, PCBF stays below $1$ ($0.976$) but BCFM and ReBF move above it ($1.077$ and $1.051$).

\begin{figure}[t]
\centering
\includegraphics[width=0.85\linewidth]{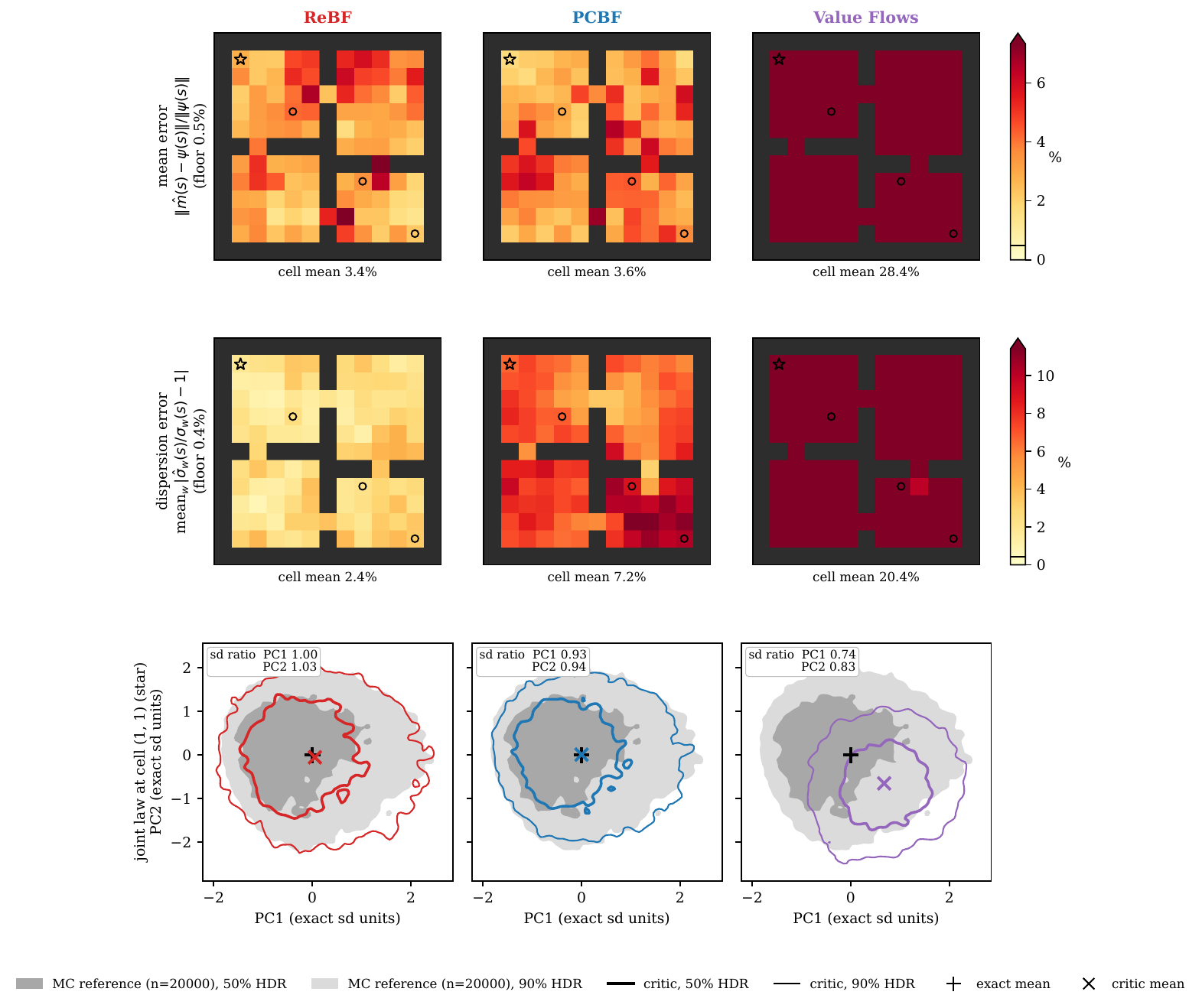}
\caption{\textbf{Four-rooms error maps}, seed $0$ ($d=16$, $\gamma=0.95$; columns ReBF, PCBF, Value Flows). Row 1: mean error $\|b\|/\|\psi\|$ over all $104$ cells. Row 2: dispersion error, dropping the ill-conditioned direction $w=M^\top e_{15}$, so the printed cell means are not the Table~\ref{tab:compare} entries. A star marks $(1,1)$ and circles the other evaluation states; the colour scale is set by ReBF and PCBF. Row 3: the joint law at $(1,1)$ on its two leading principal axes.}
\label{fig:fourrooms}
\end{figure}

\begin{figure}[t]
\centering
\includegraphics[width=\linewidth]{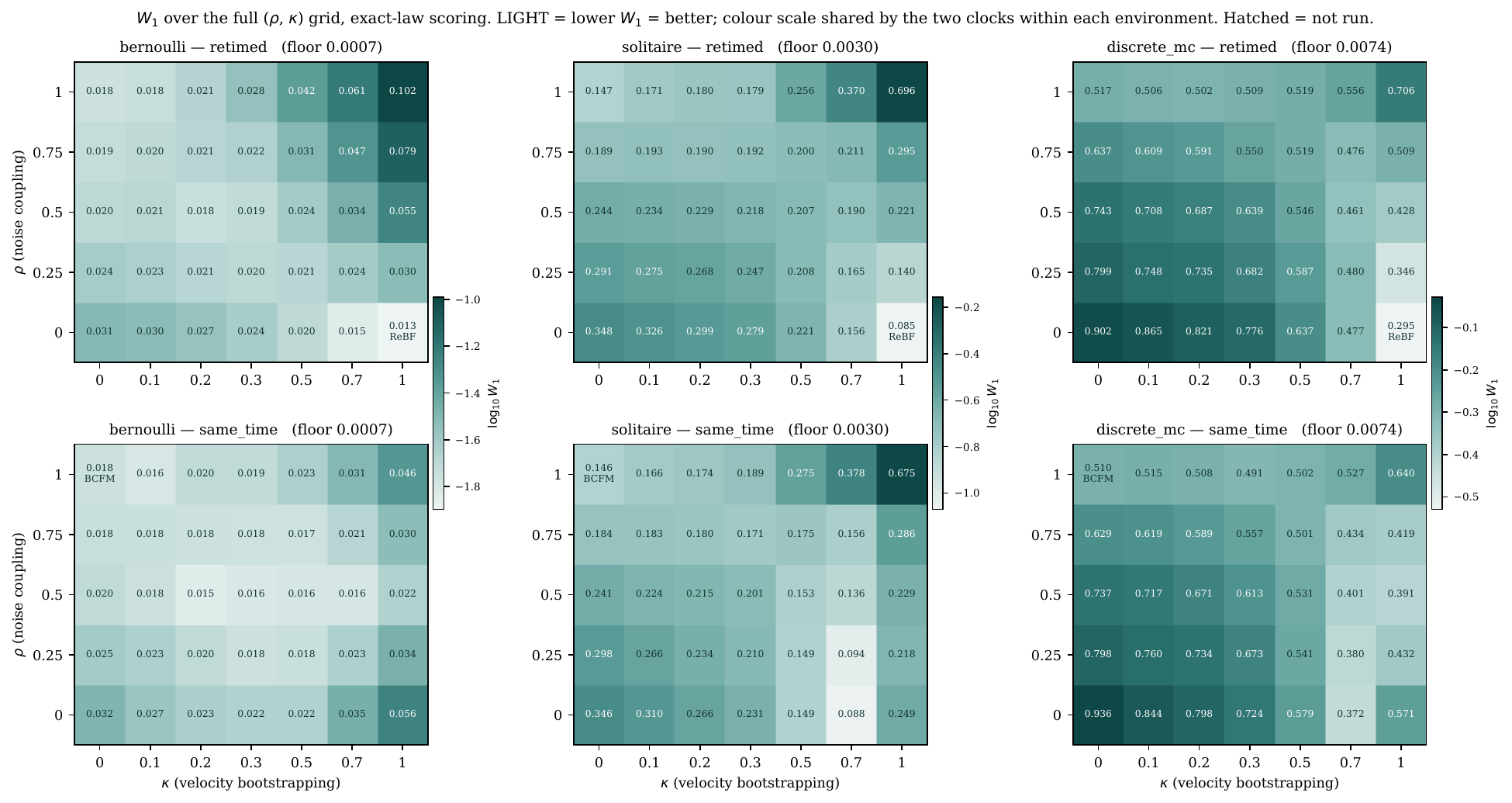}
\caption{The $(\rho_1,\kappa)$ surface: all $210$ cells, seeds $0$--$4$, $W_1$ to the exact law at $50$ ODE steps, one colour scale per environment. On these seeds the ReBF corner $(\rho_1=0,\kappa=1)$ is the lowest cell in every environment; the margin is significant only on \textsc{discrete-mc} ($p=0.025$). At $\kappa=0$ the clock is inert, and the two panels agree to within $4.2\%$ of the cell.}
\label{fig:grid}
\end{figure}

\section{Environments and Their Exact Return Laws}
\label{app:envs}
To evaluate whether a critic learns the correct return distribution, rather than inferring it from control performance, we use Markov reward processes whose return laws can be derived exactly or computed to machine precision. They expose different kinds of stochasticity in the Bellman recursion: \textsc{bernoulli} is a single-state process whose discounted return is exactly uniform; \textsc{solitaire} produces a purely atomic, long-tailed return through stochastic termination; \textsc{discrete-mc} accumulates stochasticity through repeated nearest-neighbour transitions; and a successor-feature environment, four-rooms (used in Table~\ref{tab:compare}), has vector-valued returns whose mean and directional spread are exact. For each we give the process as it is implemented, derive its return law, and state the reference used for scoring (Appendix~\ref{app:protocol}).

\subsection{\textsc{bernoulli}: one state, fair-coin rewards}
\label{app:env-bernoulli}

\paragraph{Definition.}
\textsc{bernoulli} \citep[Example~2.10]{bellemare2023} has one state and one action. Every step returns to that state and pays an independent fair coin, with $\gamma=\tfrac12$:
\begin{equation}
R_k\overset{\text{i.i.d.}}{\sim}\mathrm{Bern}(1/2),\qquad \gamma=\tfrac12 .
\label{eq:env-bernoulli-mrp}
\end{equation}
Training uses one-step backups, so the terminal multiplier $\tilde\gamma=0$ never occurs. The critic is evaluated at the only state.

\paragraph{Return law.}
\begin{equation}
G=\sum_{k\ge0}\gamma^k R_k,\qquad
\eta=\Law(G)=\mathcal U(0,2).
\label{eq:env-bernoulli-law}
\end{equation}
If $G'\sim\mathcal U(0,2)$ then $R+G'/2$ is the equal mixture of $\mathcal U(0,1)$ and $\mathcal U(1,2)$, so the uniform law is the Bellman fixed point. It is the unique fixed point because $\T^\pi$ contracts in $W_1$. The law is uniform only at $\gamma=\tfrac12$; the scoring code refuses any other discount. $\E G=1$ and $\mathrm{sd}(G)=1/\sqrt3\approx0.577$.

\paragraph{Exact reference used for scoring.}
The reference is the grid of $N+1=200{,}001$ atoms at spacing $h=10^{-5}$ on $[0,2]$, whose $W_1$ to $\mathcal U(0,2)$ is about $h/3=3.3\times10^{-6}$. The critic is read out with $4\times10^5$ draws and $50$ Euler steps. The evaluation floor, the $W_1$ of that many exact draws, is $6.5\times10^{-4}$ in Appendix~\ref{app:protocol} (about $10^{-3}$ in expectation). The coarser training-curve instrument, $4\times10^3$ draws against a $2\times10^4$-draw reference, has floor about $0.011$.

\subsection{\textsc{solitaire}}
\label{app:env-solitaire}

\paragraph{Definition.}
\textsc{solitaire} \citep[Example~2.8]{bellemare2023} has one state. Each step rolls a fair die: a $1$ pays $0$ and terminates, and $2,\dots,6$ pay $1$ and continue. With $p=1/6$ and $q=5/6$,
\begin{equation}
(R,\tilde\gamma)=(0,0)\ \text{with probability }p,\qquad (1,\gamma)\ \text{with probability }q,
\label{eq:env-solitaire-backup}
\end{equation}
and $\gamma=0.9$. The critic is evaluated at the only state.

\paragraph{Return law.}
Let $K$ be the number of non-terminating rolls, $\Pr(K=k)=p q^k$. The terminating roll pays nothing, so
\begin{equation}
G=a_K,\qquad a_k=\frac{1-\gamma^k}{1-\gamma},\qquad
\eta^\pi=\sum_{k=0}^{\infty}p q^k\,\delta_{a_k}.
\label{eq:env-solitaire-law}
\end{equation}
The atoms accumulate at $1/(1-\gamma)=10$, which carries no mass. The atom at $0$ has mass $1/6$. The mean and standard deviation are $\mu=q/(1-q\gamma)=10/3$ and $\mathrm{sd}(G)=\sqrt{800/117}\approx2.615$ ($\mathrm{CV}\approx0.784$).

\paragraph{Exact reference used for scoring.}
The reference keeps atoms $k=0,\dots,1999$, renormalised; the dropped mass is negligible. $W_1$ is the integral of the absolute difference of distribution functions against $4\times10^5$ critic draws ($50$ Euler steps). The mean and standard deviation are the exact values above. The expected evaluation floor is $4.5\times10^{-3}$; the realisation quoted in Appendix~\ref{app:protocol} is $3.0\times10^{-3}$.

\subsection{\textsc{discrete-mc}: an absorbing nearest-neighbour chain}
\label{app:env-discrete_mc}

\paragraph{Definition.}
\textsc{discrete-mc} is a birth--death chain on $\{0,1,\dots,19\}$ with absorbing ends and interior states $1,\dots,18$ \citep{cheng2024}. From an interior state the walk prefers the heavier neighbour under weights $w_i=\exp(\beta\cos(4\pi(i-1)/19))$, $\beta=(n-1)/(4\pi)$. A transition into the interior pays $1$; absorption pays $0$ and sets $\tilde\gamma=0$. Otherwise $\tilde\gamma=\gamma=0.95$.

\paragraph{Return law.}
Let $K$ be the number of rewarded steps before absorption and $Q$ the substochastic interior block. Then
\begin{equation}
G(s)=x_K,\qquad x_k=\frac{1-\gamma^k}{1-\gamma},\qquad
\Pr_s(K=k)=(Q^k u)_s,
\label{eq:env-discrete_mc-return}
\end{equation}
with $u$ the one-step absorption probabilities. Absorption is certain. Means of $G(s)$ run from $1.55$ at state $18$ to $19.23$ at state $10$; per-state coefficients of variation run from $0.101$ to $2.47$.

\paragraph{Exact reference used for scoring.}
The reference is the atomic law from the recursion above, truncated at $k=4000$, where the remaining mass is at most $4.7\times10^{-8}$. Reported $W_1$ averages the $18$ interior states, each scored with $4\times10^5$ critic draws. The mean bias uses the average of $\E[G(s)]$, which is $12.788$. The evaluation floor in Appendix~\ref{app:protocol} is $0.0074$.

\subsection{Four-rooms successor features (\texorpdfstring{$d=16$}{d=16})}
\label{app:env-fourrooms}

\paragraph{Definition.}
\textsc{four-rooms}, the vector-valued environment of Table~\ref{tab:compare} and of the fourth column of Figure~\ref{fig:allenvs}, is a lazy random walk on the four-rooms layout of \citet{sutton1999} (Figure~\ref{fig:env-fourrooms}). The layout is the $11\times11$ interior of a $13\times13$ wall map whose rows and columns are numbered $0,\dots,12$ from the top left. An interior wall runs down column $6$ and is broken by doorways at rows $3$ and $10$. The left half is split by a wall along row $6$ with a doorway at column $2$, and the right half by a wall along row $7$ with a doorway at column $9$. These $17$ interior wall cells leave $104$ open cells: a $5\times5$ north-west room (rows $1$--$5$, columns $1$--$5$), a $6\times5$ north-east room (rows $1$--$6$, columns $7$--$11$), a $5\times5$ south-west room (rows $7$--$11$, columns $1$--$5$), a $4\times5$ south-east room (rows $8$--$11$, columns $7$--$11$), and the four single-cell doorways $(3,6)$, $(10,6)$, $(6,2)$ and $(7,9)$, written (row, column). Each doorway is the only connection between its two rooms, and the rooms form a cycle, so the north-west and south-east rooms are two doorways apart either way round. The state space $\mathcal S$ is the set of open cells, indexed in row-major order. From $s$ the walk stays put with probability $p_{\rm stay}=0.2$ and otherwise moves to each of its four compass neighbours with probability $(1-p_{\rm stay})/4=0.2$, so it picks one of the five moves $\{\text{stay},\mathrm N,\mathrm S,\mathrm E,\mathrm W\}$ uniformly. A move into a wall leaves it in place:
\begin{equation}
P(s,s')=0.2\ \text{ for each open compass neighbour $s'$ of $s$},\qquad
P(s,s)=0.2+0.2\,n_{\rm wall}(s),
\label{eq:env-fourrooms-kernel}
\end{equation}
where $n_{\rm wall}(s)\in\{0,1,2\}$ counts the walls among the four neighbours of $s$, and every other entry of row $s$ is $0$. The holding probability is therefore $0.2$ at $44$ cells, $0.4$ at $40$, and $0.6$ at $20$, which are the $16$ room corners and the $4$ doorways. A move between open cells $s\to s'$ is possible exactly when $s'\to s$ is, with the same probability, so $P$ is symmetric and doubly stochastic. Its stationary law is uniform, and it has a real orthonormal eigenbasis.

\begin{figure}[ht]
\centering
\begin{tikzpicture}[x=0.27cm,y=-0.27cm]
  \fill[black!60] (0,0) rectangle (13,13);
  \fill[white] (1,1) rectangle (12,12);
  \draw[black!15,very thin] (1,1) grid (12,12);
  \foreach \c/\r in {6/1,6/2,6/4,6/5,1/6,3/6,4/6,5/6,6/6,6/7,7/7,8/7,10/7,11/7,6/8,6/9,6/11}
    \fill[black!60] (\c,\r) rectangle ++(1,1);
  \foreach \c/\r in {6/3,2/6,9/7,6/10}
    \fill[black!15] (\c,\r) rectangle ++(1,1);
  \foreach \c/\r in {1/1,4/4,8/8,11/11}
    \fill (\c+0.5,\r+0.5) circle (0.25);
  \foreach \k in {0,6,12} {
    \node[font=\tiny,above] at (\k+0.5,0) {\k};
    \node[font=\tiny,left] at (0,\k+0.5) {\k};
  }
\end{tikzpicture}
\caption{\textbf{The \textsc{four-rooms} layout} (Appendix~\ref{app:env-fourrooms}). Walls are dark, the four doorways light grey and the other open cells white, $104$ open cells in all. Dots mark the four evaluation states $(1,1)$, $(4,4)$, $(8,8)$ and $(11,11)$ (row, column; rows numbered downward).}
\label{fig:env-fourrooms}
\end{figure}

\paragraph{Features.}
$P$ is symmetric, so it has a real orthonormal eigenbasis, computed once and stored. We keep $d=16$ modes, log-spaced from the slowest non-constant eigenvalue $\lambda=0.9954$ to the bottom of the spectrum, excluding the constant mode. Let $V$ hold those eigenvectors and let $M$ be one fixed random orthogonal matrix, shared by every arm and seed. Then
\begin{equation}
B=VM,\qquad \phi(s)=B_{s,:}^{\top}\in\R^{16},\qquad R=\phi(S),\qquad \tilde\gamma=\gamma=0.95 .
\label{eq:env-fourrooms-features}
\end{equation}
It follows that $B^\top B=I_{16}$, that every feature averages to zero over the $104$ cells, and that $0.20\le\norm{\phi(s)}\le0.66$. The reward is paid in the current state. There is no termination, so the backup multiplier of Eq.~\eqref{eq:bellman} equals $\gamma$ on every transition. Training draws the current state uniformly from all $104$ cells and takes one step of $P$.

\paragraph{Return law.}
The return is the discounted successor-feature vector
\begin{equation}
\Psi(s)=\sum_{k\ge0}\gamma^k\phi(S_k),\quad S_0=s,\ S_{k+1}\sim P(S_k,\cdot);
\qquad
\Psi(s)\overset{d}{=}\phi(s)+\gamma\,\Psi(S_1).
\label{eq:env-fourrooms-return}
\end{equation}
By the Markov property, $\Psi(S_1)$ given $S_1$ is an independent draw from $\eta_{S_1}$. So $\eta_s=\Law(\Psi(s))$ is the unique fixed point of $\eta_s=\sum_{s'}P(s,s')\,(T_s)_\#\eta_{s'}$ with $T_s(z)=\phi(s)+\gamma z$. It is a mixture of at most five affine images of the successors' laws (only three at the room corners and doorways), supported in the ball of radius $\max_s\norm{\phi(s)}/(1-\gamma)=13.1$. On the three scalar environments the return is a function of a single stopping time, or a binary expansion of independent coins. Here it depends on the whole path through $\phi$, and unrolling the recursion sums over infinitely many paths. We have no closed form for $\eta_s$, nor for the law of any of its one-dimensional slices, but the first two moments of every slice are exact. For $w\in\R^{16}$, $G_w(s)=w^\top\Psi(s)$ is the return of the scalar process on the same chain with reward $r_w=Bw$. The critic is fitted to the full vector law and never sees $w$, so every slice is a zero-shot readout.

\paragraph{Moments.}
The mean $\psi(s)=\E\Psi(s)$ and the variance of each slice $G_w=w^\top\Psi$ are exact:
\begin{equation}
\psi=(I-\gamma P)^{-1}B,
\label{eq:env-fourrooms-mean}
\end{equation}
and, with $r_w=Bw$ and $m_w=\psi w$,
\begin{equation}
\sigma_w^2=\gamma^2(I-\gamma^2 P)^{-1}\big[P(m_w^{\,2})-(Pm_w)^2\big].
\label{eq:env-fourrooms-second}
\end{equation}

\paragraph{Exact reference used for scoring.}
The mean and the standard deviations are exact. The mean error is $\norm{b(s)}/\norm{\psi(s)}$ averaged over the four evaluation states below, where $b(s)$ is the critic's sample mean minus $\psi(s)$ from Eq.~\eqref{eq:env-fourrooms-mean}. The dispersion error is $\sum|\widehat{\rm sd}-\sigma_w(s)|/\sum\sigma_w(s)$ over the $44$ slices below, with $\sigma_w$ from Eq.~\eqref{eq:env-fourrooms-second}. $W_1$ has no exact reference here, so unlike on the scalar environments it is scored against a \emph{sampled} one. From each evaluation state we draw $N=4\times10^5$ independent rollouts of $P$ and truncate each at $T=400$ steps, which moves a draw by at most $\gamma^{T}\max_s\norm{\phi(s)}/(1-\gamma)<1.7\times10^{-8}$. As a check on the sampled references, across the fifteen per-seed references the largest of the $960$ per-coordinate deviations of the sample mean from $\psi(s)$ is $3.1$ standard errors, and every one of the $660$ per-slice sample standard deviations lies within $1.2\%$ of its exact $\sigma_w$. The evaluation states are the open cells at positions $0$, $34$, $69$ and $103$ of the row-major order. They are the corner $(1,1)$ and the interior cell $(4,4)$ of the north-west room, and the cell $(8,8)$ beside the north wall and the corner $(11,11)$ of the south-east room. No cell of the north-east or south-west room, and no doorway, is scored. At each evaluation state, $2\times10^4$ critic draws ($128$ Heun steps) and the reference are projected onto $11$ fixed unit directions:
\begin{itemize}
\item four eigen-directions $w=M^\top e_k$, $k\in\{0,5,10,15\}$. For these $r_w$ is a single eigenvector with $\lambda=0.9954$, $0.9263$, $0.6531$ and $-0.4774$ (horizons $18.4$, $8.3$, $2.6$ and $0.69$), so $m_w=r_w/(1-\gamma\lambda)$. The first is the slowest room mode;
\item three feature directions $w=e_j$, $j\in\{0,8,15\}$, each a fixed mixture of the $16$ kept modes;
\item four random directions $g/\norm g$ with $g\sim\mathcal N(0,I_{16})$, drawn once.
\end{itemize}
With $n=2\times10^4$ midpoint quantile levels $u_i=(i-\frac12)/n$, the score is
\begin{equation}
W_1^{\rm sliced}=\frac1{44}\sum_{s}\sum_{w}\frac1n\sum_{i=1}^{n}\Big|\hat F^{-1}_{w^\top\hat X_s}(u_i)-\hat F^{-1}_{w^\top\hat\Psi_s}(u_i)\Big|,
\label{eq:env-fourrooms-w1}
\end{equation}
where $\hat X_s$ are the critic's draws at $s$ and $\hat\Psi_s$ the reference draws. $\hat F^{-1}$ is the empirical quantile function, with linear interpolation. The map $z\mapsto w^\top z$ is $1$-Lipschitz for a unit $w$, so each slice lower-bounds the $16$-dimensional $W_1$. An error orthogonal to all $11$ directions is invisible to the score, which is why the exact mean and per-slice spread are reported beside it. Each training seed has its own reference draw, shared by every arm at that seed. The evaluation floor is the same score between a fresh $2\times10^4$-draw sample of the law and the reference, replicate $j$ using the reference of training seed $j$. Over five replicates it is $0.00460\pm0.00026$ (mean $\pm$ s.e.), about $0.9\%$ of the median $\sigma_w$. At $20{,}000$ steps the fifteen-seed ReBF, PCBF and BCFM runs score $0.0282\pm0.0009$, $0.0410\pm0.0010$ and $0.0523\pm0.0007$ (mean $\pm$ s.e.), which is $6.1$, $8.9$ and $11.4$ times this floor.

\section{Experimental Protocol}
\label{app:protocol}

\paragraph{Environments.} Three finite Markov reward processes with analytically known return laws, evaluated under a fixed policy: \textsc{bernoulli} ($\gamma=0.5$), \textsc{solitaire} ($\gamma=0.9$, a stochastic termination roll) and \textsc{discrete-mc} ($18$ non-terminal states, $\gamma=0.95$). Their return laws, derived in Appendix~\ref{app:envs}, differ sharply in dispersion (Section~\ref{sec:toys}).

\paragraph{Arms.} Every arm is a triple (query clock, coupling $\rho_1$, correction weight $\kappa$) and is named by that triple; a single source file defines them, so no arm is specified twice. At $\kappa=0$ the query clock is arithmetically inert, and we verified that the first training target is bit-identical under both clock flags; the two settings are one arm, run once for the headline comparisons; the two clock flags were trained separately as a check, and those twin runs differ by up to $4\%$ of level.

\paragraph{Training and evaluation.} $50$k steps, batch $512$, shared architecture and optimiser, $20$-step Euler integration during training. Evaluation uses $50$ ODE steps and $4\times10^5$ draws against the exact law; for \textsc{discrete-mc} the reported $W_1$ averages all $18$ non-terminal states. Reference-error floors under a perfect model are $6.5\times10^{-4}$, $3.0\times10^{-3}$ and $7.4\times10^{-3}$ respectively, well below every reported difference.

\paragraph{Selection and pairing.} Tuned arms select $\kappa$ on five seeds and are scored on five disjoint seeds; both split directions are reported. Comparisons are paired by seed, and ties are declared using the $t$ critical value at the relevant sample size. Seeds share random-number streams across arms but not bit-identical trajectories: two runs with the same seed and an identical first target can differ by a few tenths of a percent after thousands of steps through floating-point accumulation alone, which bounds the resolution of any paired comparison at that scale.

\paragraph{The evaluation floor, and what a power control certifies.}
Every metric we report compares a finite sample from the learned critic against a finite reference, so it has a positive floor even for an exactly correct critic. We measure that floor rather than assume it: we draw twice from the \emph{true} law at the sample sizes the evaluation uses and compute the same metric. For the depth sweep of Table~\ref{tab:depth} ($4{,}000$ learned draws against a $20{,}000$-draw reference) this gives a $W_1$ floor of $0.0432\pm0.0166$ on \textsc{solitaire} at $\gamma=0.9$. Separations are then reported as multiples of the relevant floor.

The main results are scored on a far finer instrument than the depth sweep: they use $4\times10^5$ learned draws against an \emph{exact} closed-form reference, which removes reference-side noise entirely, giving floors of $6.5\times10^{-4}$, $3.0\times10^{-3}$ and $7.4\times10^{-3}$ on the three environments (each a single draw of a random quantity: the expected values are $1.0\times10^{-3}$ and $4.5\times10^{-3}$ on \textsc{bernoulli} and \textsc{solitaire}, and the \textsc{discrete-mc} value averages states $1$--$4$, its $18$-state value being $0.006$--$0.009$ across draws; Appendix~\ref{app:envs}) --- one to two orders of magnitude below the depth sweep's floor. Every headline level is at least $19\times$ its floor, and the headline gaps on \textsc{solitaire} and \textsc{discrete-mc} are significant in both split directions ($p\le0.04$ at $n=5$); the \textsc{bernoulli} gaps are $1.3$--$2.2\times$ the floor at $n=15$, and we report them as a tie. The main comparisons were therefore never near the resolution limit where they claim a winner; it is the depth sweep, run at $4{,}000$ draws against a sampled reference, where the floor became load-bearing.

Two consequences are applied throughout. First, a claim that two arms \emph{converge} requires the instrument to resolve the difference it asserts has vanished; where a gap falls below the floor's own standard deviation we say the arms are indistinguishable at the setting tested, and do not say that one caught up. Second, a power control certifies only the arm it is placed on. A control confirming that the \emph{baseline} arm responds to a manipulation establishes that the cell resolves baseline-sized effects at the baseline's position in the instrument's range; it establishes nothing about an arm whose error sits an order of magnitude lower. We therefore measure the floor for the arm that \emph{carries} the claim, at the sample size actually used.

We adopted this after discarding a $75$-run sweep in which a preregistered power control passed --- the baseline arm fell by $1.49\times$, paired $p=0.022$ --- while the arm whose profile was the readout sat at $0.9$ to $1.15$ times the measured floor. An exactly correct critic would have produced the same curve. The sweep completed, the registered rule passed, and the magnitudes, shape and error bars all looked ordinary: the failure had no symptom. Preregistration, held-out selection, paired seeds and convergence checking do not detect a power analysis aimed at the wrong arm; measuring the floor for the right one does, which is why the check is unconditional here rather than applied on suspicion.

\paragraph{Analytic references.} Return-law references are derived from the environment dynamics rather than from any helper shipped with an environment; one such helper contained an off-by-one in the discount exponent, and results computed against it were discarded.

\section{Algorithm}
\label{app:algo}

\begin{algorithm}[h]
\small
\caption{One update of the retimed family; ReBF is $\rho_1=0$, $\kappa=1$}
\label{alg:sw}
\begin{enumerate}\itemsep1pt\parskip0pt
\item Sample backup $H=(R,S',A',\tilde\gamma)$ for $c$; independently draw $X_0,E,t$.
\item If $\tilde\gamma=0$, set $X_t=(1-t)X_0+tR$, $u=R-X_0$, and skip to step 7.
\item Set $X_0'=\rho_1 X_0+\sqrt{1-\rho_1^2}E$ and $X_1'=\psib^{\,1}(X_0'\mid S',A')$.
\item Set $X_t=(1-t)X_0+t(R+\tilde\gamma X_1')$ and $C=X_1'-X_0$.
\item Compute $\alpha,\tau,g$ by Eqs.~\eqref{eq:clock}--\eqref{eq:identity}; set $\xi=(X_t-tR)/\alpha$.
\item Query the frozen teacher and set $u=R+\tilde\gamma X_1'-X_0+\kappa g(\bar v(\tau,\xi)-C)$.
\item Regress $v_\theta(t,X_t\mid c)$ onto $\sg(u)$; average target parameters.
\end{enumerate}
\end{algorithm}


\begin{table}[ht]
\centering
\caption{Terminal scale error of the same-time query with constant gain $\tilde\gamma$ under an exact teacher at $\rho_1=0$, $\gamma=0.99$, as a percentage of the correct successor scale. Gaussian: Eq.~\eqref{eq:scalefree} in closed form, identical for every $\sigma$. Bernoulli: the same constant-gain target on a $256$-atom quantile compression of the \textsc{bernoulli} return law, estimated by fixed-seed binned Monte Carlo ($2\times10^7$ draws, seed $0$, $100$ time bins, $200$ state bins, $100$ Euler steps, $2\times10^5$ evaluation draws). Retimed: the same estimator applied to the retimed target on the Bernoulli law. Its population value is exactly zero (Theorem~\ref{thm:centering} and Proposition~\ref{prop:scale}), so this column measures the estimator's floor, which also enters the Bernoulli column. None of these are averages over training seeds.}
\label{tab:regime}
\begin{tabular}{rrrrr}
\toprule
Backup $n$&$\tilde\gamma=0.99^n$&Gaussian (any $\sigma$)&Bernoulli law&Retimed (floor)\\
\midrule
1   & 0.9900 & \phantom{0}0.50\% & \phantom{0}0.45\% & $+0.09\%$\\
5   & 0.9510 & \phantom{0}2.48\% & \phantom{0}1.84\% & $+0.10\%$\\
10  & 0.9044 & \phantom{0}4.89\% & \phantom{0}3.50\% & $+0.10\%$\\
25  & 0.7778 & 11.62\% & \phantom{0}7.86\% & $+0.13\%$\\
50  & 0.6050 & 20.86\% & 13.22\% & $+0.19\%$\\
\bottomrule
\end{tabular}
\end{table}

\section{Online supplement: inference budget and per-task OGBench}
The integration-budget study and the $38$-task OGBench table are supplementary to the main results.

\subsection{Inference-time integration budget}
\label{app:budget}
Figure~\ref{fig:budget} reads the same trained critics out with $1$ to $200$ Euler steps, so it measures the readout and not the critic. From $10$ to $200$ steps ReBF improves by a factor of two to four on every environment, and pinned PCBF by $1.3$--$2.1\times$. The same-time independent-noise arm, which the figure does not draw, gets \emph{worse} on two of three ($0.030\to0.062$ on \textsc{bernoulli} and $0.242\to0.285$ on \textsc{solitaire}, paired $p\le0.023$; $0.662\to0.624$ on \textsc{discrete-mc}, $p=0.29$; seeds $0$--$4$): a biased field is not rescued by integrating it more accurately, and a coarse readout hides the bias. On \textsc{bernoulli} and \textsc{solitaire} tuned PCBF nearly coincides with the uncorrected critic, because held-out tuning (on seeds $5$--$9$) selects a small $\kappa^\star$ ($0.02$ and $0.05$; $0.3$ on \textsc{discrete-mc}). On \textsc{discrete-mc} tuned PCBF is a single point at $50$ steps: the $\kappa$-ladder arms were re-scored on a shorter budget grid, and each arm is drawn only where it has data.

\begin{figure}[H]
\centering
\includegraphics[width=\linewidth]{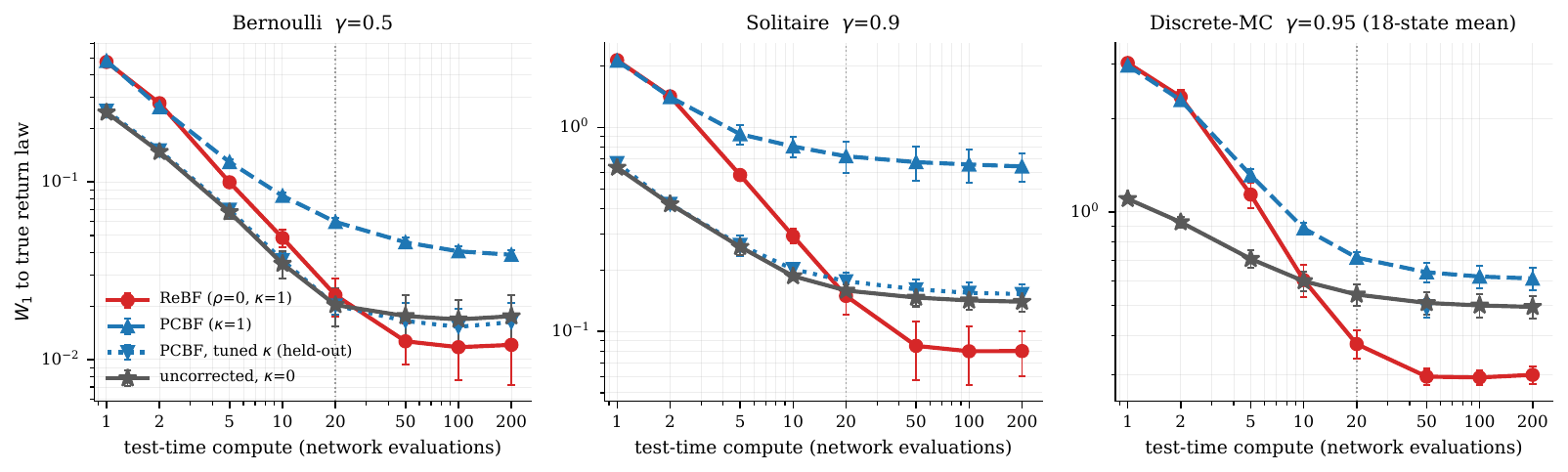}
\caption{Test-time compute: $W_1$ to the exact law ($4\times10^5$ draws) as the same trained critics are read out with $1$ to $200$ Euler steps (dotted line: the training resolution, $20$), mean $\pm1$ s.d.\ over five paired seeds; tuned PCBF uses the $\kappa^\star$ selected on disjoint seeds. Lower is better; the $x$-axis is the readout budget, not training.}
\label{fig:budget}
\end{figure}

\subsection{Per-task OGBench results}
\label{app:fullogbench}
\begin{table}[H]
    \centering
    \caption{\textbf{Full Offline RL Results.} Performance on 38 OGBench \citep{park2025} and D4RL \citep{fu2020} tasks. The symbol $(*)$ indicates the task used for hyperparameter tuning within each domain. Results for IQN \citep{dabney2018iqn}, CODAC \citep{ma2021}, FloQ \citep{agrawalla2026}, FQL \citep{park2025fql}, IQL \citep{kostrikov2022} and Value Flows \citep{dong2026} are from \citet[Table~3]{xu2026}; PCBF results are from the supplied PCBF comparison table. ReBF results use the best evaluation checkpoint of each seed. \textbf{Bold numbers denote mean scores that are at least $95\%$ of the highest reported mean on each task.}}
    \label{tab:fullogbench_comparison}
    \resizebox{\textwidth}{!}{%
    \begin{tabular}{lcccccccc}
        \toprule
        & \multicolumn{8}{c}{\textbf{Algorithms}} \\
        \cmidrule(lr){2-9}
        \textbf{Task} & IQN & CODAC & FloQ & FQL & IQL
        & Value Flows & PCBF & \textbf{ReBF (Ours)} \\
        \midrule
        cube-double-play-singletask-task1-v0
        & $70\pm14$ & $80\pm11$ & $50\pm24$ & $61\pm9$ & $27\pm5$
        & $\mathbf{97\pm1}$ & $92\pm5$ & $\mathbf{97\pm3}$ \\
        cube-double-play-singletask-task2-v0 ($*$)
        & $24\pm9$ & $63\pm4$ & $72\pm15$ & $36\pm6$ & $1\pm1$
        & $76\pm7$ & $\mathbf{84\pm8}$ & $\mathbf{82\pm9}$ \\
        cube-double-play-singletask-task3-v0
        & $25\pm6$ & $66\pm9$ & $57\pm14$ & $22\pm5$ & $0\pm0$
        & $73\pm4$ & $80\pm7$ & $\mathbf{93\pm2}$ \\
        cube-double-play-singletask-task4-v0
        & $10\pm1$ & $13\pm2$ & $8\pm4$ & $5\pm2$ & $0\pm0$
        & $\mathbf{30\pm5}$ & $\mathbf{29\pm5}$ & $25\pm6$ \\
        cube-double-play-singletask-task5-v0
        & $\mathbf{81\pm8}$ & $\mathbf{82\pm4}$ & $50\pm11$ & $19\pm10$ & $4\pm3$
        & $69\pm5$ & $65\pm8$ & $\mathbf{83\pm8}$ \\
        \midrule
        scene-play-singletask-task1-v0
        & $\mathbf{100\pm0}$ & $\mathbf{99\pm0}$ & $\mathbf{100\pm1}$
        & $\mathbf{100\pm0}$ & $94\pm3$ & $\mathbf{99\pm0}$
        & $\mathbf{100\pm0}$ & $\mathbf{100\pm0}$ \\
        scene-play-singletask-task2-v0 ($*$)
        & $1\pm0$ & $85\pm4$ & $83\pm10$ & $76\pm9$ & $12\pm3$
        & $\mathbf{97\pm1}$ & $86\pm11$ & $\mathbf{94\pm9}$ \\
        scene-play-singletask-task3-v0
        & $94\pm2$ & $90\pm3$ & $\mathbf{98\pm2}$ & $\mathbf{98\pm1}$
        & $32\pm7$ & $94\pm2$ & $\mathbf{99\pm1}$ & $\mathbf{99\pm1}$ \\
        scene-play-singletask-task4-v0
        & $3\pm1$ & $0\pm0$ & $\mathbf{9\pm7}$ & $5\pm1$ & $0\pm1$
        & $7\pm17$ & $3\pm3$ & $\mathbf{9\pm11}$ \\
        scene-play-singletask-task5-v0
        & $\mathbf{0\pm0}$ & $\mathbf{0\pm0}$ & $\mathbf{0\pm0}$
        & $\mathbf{0\pm0}$ & $\mathbf{0\pm0}$ & $\mathbf{0\pm0}$
        & $\mathbf{0\pm0}$ & $\mathbf{0\pm0}$ \\
        \midrule
        puzzle-4x4-play-singletask-task1-v0
        & $41\pm2$ & $37\pm32$ & $\mathbf{47\pm7}$ & $34\pm8$ & $12\pm2$
        & $36\pm3$ & $39\pm6$ & $\mathbf{46\pm13}$ \\
        puzzle-4x4-play-singletask-task2-v0
        & $12\pm4$ & $10\pm10$ & $21\pm6$ & $16\pm5$ & $7\pm4$
        & $27\pm5$ & $\mathbf{28\pm4}$ & $\mathbf{29\pm3}$ \\
        puzzle-4x4-play-singletask-task3-v0
        & $\mathbf{45\pm7}$ & $33\pm29$ & $36\pm5$ & $18\pm5$ & $9\pm3$
        & $30\pm4$ & $38\pm6$ & $\mathbf{45\pm10}$ \\
        puzzle-4x4-play-singletask-task4-v0 ($*$)
        & $23\pm2$ & $12\pm10$ & $19\pm5$ & $11\pm3$ & $5\pm2$
        & $28\pm5$ & $35\pm5$ & $\mathbf{43\pm8}$ \\
        puzzle-4x4-play-singletask-task5-v0
        & $16\pm6$ & $10\pm8$ & $16\pm7$ & $7\pm3$ & $4\pm1$
        & $13\pm2$ & $\mathbf{18\pm6}$ & $15\pm4$ \\
        \midrule
        cube-triple-play-singletask-task1-v0 ($*$)
        & $29\pm2$ & $9\pm5$ & $32\pm13$ & $20\pm6$ & $4\pm4$
        & $\mathbf{59\pm12}$ & $25\pm9$ & $55\pm15$ \\
        cube-triple-play-singletask-task2-v0
        & $0\pm0$ & $\mathbf{1\pm0}$ & $0\pm0$ & $\mathbf{1\pm2}$
        & $0\pm0$ & $0\pm0$ & $0\pm1$ & $0\pm0$ \\
        cube-triple-play-singletask-task3-v0
        & $1\pm0$ & $0\pm0$ & $\mathbf{7\pm3}$ & $0\pm0$ & $0\pm0$
        & $\mathbf{7\pm3}$ & $3\pm1$ & $6\pm3$ \\
        cube-triple-play-singletask-task4-v0
        & $\mathbf{0\pm0}$ & $\mathbf{0\pm0}$ & $\mathbf{0\pm0}$
        & $\mathbf{0\pm0}$ & $\mathbf{0\pm0}$ & $\mathbf{0\pm0}$
        & $\mathbf{0\pm1}$ & $\mathbf{0\pm0}$ \\
        cube-triple-play-singletask-task5-v0
        & $0\pm0$ & $0\pm0$ & $0\pm0$ & $0\pm0$ & $1\pm1$
        & $\mathbf{2\pm1}$ & $0\pm1$ & $0\pm0$ \\
        \midrule
        pen-cloned-v1
        & $\mathbf{80\pm11}$ & $76\pm2$ & $72\pm5$ & $74\pm11$
        & $\mathbf{83}$ & $73\pm5$ & $78\pm5$ & $77\pm3$ \\
        pen-expert-v1
        & $118\pm19$ & $\mathbf{136\pm2}$ & $\mathbf{140\pm8}$
        & $\mathbf{142\pm6}$ & $128$ & $117\pm3$
        & $\mathbf{137\pm4}$ & $129\pm5$ \\
        door-cloned-v1
        & $0\pm0$ & $0\pm0$ & $\mathbf{3\pm2}$ & $2\pm1$
        & $\mathbf{3}$ & $0\pm0$ & $1\pm0$ & $1\pm0$ \\
        door-expert-v1
        & $\mathbf{105\pm0}$ & $\mathbf{104\pm0}$ & $\mathbf{104\pm0}$
        & $\mathbf{104\pm1}$ & $\mathbf{107}$ & $\mathbf{104\pm1}$
        & $\mathbf{106\pm0}$ & $\mathbf{106\pm0}$ \\
        hammer-cloned-v1
        & $0\pm0$ & $6\pm0$ & $10\pm8$ & $\mathbf{11\pm9}$
        & $2$ & $1\pm0$ & $2\pm1$ & $2\pm0$ \\
        hammer-expert-v1
        & $121\pm7$ & $\mathbf{126\pm1}$ & $\mathbf{125\pm2}$
        & $\mathbf{125\pm3}$ & $\mathbf{129}$ & $\mathbf{125\pm5}$
        & $\mathbf{126\pm2}$ & $\mathbf{126\pm2}$ \\
        relocate-cloned-v1
        & $0\pm0$ & $0\pm0$ & $0\pm0$ & $0\pm0$
        & $\mathbf{2}$ & $0\pm0$ & $0\pm0$ & $0\pm0$ \\
        relocate-expert-v1
        & $103\pm0$ & $103\pm2$ & $\mathbf{108\pm2}$
        & $\mathbf{107\pm1}$ & $\mathbf{106}$ & $102\pm2$
        & $\mathbf{109\pm1}$ & $\mathbf{108\pm1}$ \\
        \midrule
        visual-antmaze-teleport-navigate-singletask-task1-v0 ($*$)
        & $2\pm1$ & $-$ & $-$ & $2\pm1$ & $5\pm2$
        & $\mathbf{10\pm4}$ & $8\pm2$ & $9\pm3$ \\
        visual-antmaze-teleport-navigate-singletask-task2-v0
        & $7\pm3$ & $-$ & $-$ & $6\pm1$ & $10\pm2$
        & $17\pm5$ & $\mathbf{19\pm4}$ & $\mathbf{19\pm4}$ \\
        visual-antmaze-teleport-navigate-singletask-task3-v0
        & $6\pm4$ & $-$ & $-$ & $9\pm4$ & $7\pm7$
        & $16\pm3$ & $\mathbf{18\pm4}$ & $16\pm2$ \\
        visual-antmaze-teleport-navigate-singletask-task4-v0
        & $4\pm2$ & $-$ & $-$ & $9\pm1$ & $4\pm6$
        & $16\pm5$ & $\mathbf{18\pm5}$ & $15\pm4$ \\
        visual-antmaze-teleport-navigate-singletask-task5-v0
        & $2\pm1$ & $-$ & $-$ & $1\pm1$ & $2\pm1$
        & $\mathbf{8\pm2}$ & $\mathbf{8\pm3}$ & $7\pm2$ \\
        \midrule
        visual-cube-double-play-singletask-task1-v0 ($*$)
        & $4\pm1$ & $-$ & $-$ & $23\pm4$ & $\mathbf{34\pm23}$
        & $\mathbf{35\pm2}$ & $10\pm3$ & $11\pm4$ \\
        visual-cube-double-play-singletask-task2-v0
        & $0\pm0$ & $-$ & $-$ & $0\pm0$ & $3\pm1$
        & $\mathbf{4\pm2}$ & $0\pm0$ & $1\pm1$ \\
        visual-cube-double-play-singletask-task3-v0
        & $0\pm0$ & $-$ & $-$ & $0\pm0$ & $7\pm4$
        & $\mathbf{11\pm2}$ & $0\pm0$ & $1\pm1$ \\
        visual-cube-double-play-singletask-task4-v0
        & $0\pm0$ & $-$ & $-$ & $\mathbf{4\pm2}$ & $2\pm1$
        & $2\pm1$ & $0\pm0$ & $0\pm0$ \\
        visual-cube-double-play-singletask-task5-v0
        & $1\pm1$ & $-$ & $-$ & $4\pm1$ & $11\pm2$
        & $\mathbf{13\pm3}$ & $3\pm1$ & $3\pm1$ \\
        \bottomrule
    \end{tabular}%
    }
\end{table}

\end{document}